\def\eqref#1{equation~\ref{#1}}
\def\1{\bm{1}}
\def\rv{{\textnormal{v}}}
\def\rx{{\mathnormal{x}}}
\def\ry{{\mathnormal{y}}}
\def\rz{{\mathnormal{z}}}
\def\rvz{{\mathbf{z}}}
\def\va{{\bm{a}}}
\def\vz{{\bm{z}}}
\def\mI{{\bm{I}}}
\DeclareMathAlphabet{\mathsfit}{\encodingdefault}{\sfdefault}{m}{sl}
\SetMathAlphabet{\mathsfit}{bold}{\encodingdefault}{\sfdefault}{bx}{n}
\def\gA{{\mathcal{A}}}
\def\gD{{\mathcal{D}}}
\def\gF{{\mathcal{F}}}
\def\gG{{\mathcal{G}}}
\def\gH{{\mathcal{H}}}
\def\gL{{\mathcal{L}}}
\def\gM{{\mathcal{M}}}
\def\gN{{\mathcal{N}}}
\def\gP{{\mathcal{P}}}
\def\gX{{\mathcal{X}}}
\def\gY{{\mathcal{Y}}}
\def\gZ{{\mathcal{Z}}}
\def\sN{{\mathbb{N}}}
\def\sS{{\mathbb{S}}}
\def\sX{{\mathbb{X}}}
\def\sY{{\mathbb{Y}}}
\newcommand{\E}{\mathbb{E}}
\newcommand{\R}{\mathbb{R}}
\begin{document}

\title{Fundamental Limits of Membership Inference Attacks on Machine Learning Models}

\author{\name Eric Aubinais \email eric.aubinais@universite-paris-saclay.fr \\
       \addr Université Paris-Saclay, CNRS,\\ Laboratoire de mathématiques d’Orsay, 91405, Orsay, France
       \AND
       \name Elisabeth Gassiat \email elisabeth.gassiat@universite-paris-saclay.fr \\
       \addr Université Paris-Saclay, CNRS,\\ Laboratoire de mathématiques d’Orsay, 91405, Orsay, France
       \AND Pablo Piantanida \email pablo.piantanida@mila.quebec \\ 
       \addr ILLS - International Laboratory on Learning Systems, \\ MILA - Quebec AI Institute, Montreal (QC), Canada, \\ CNRS, CentraleSupélec | Université Paris-Saclay \\
       }

\editor{My editor}

\maketitle

\def\deltan{\Delta_n(P,\gA)}
\def\deltaniid{\Delta_n^{\text{non-\textit{i.i.d.}}}(P,\gA)}
\def\deltanu{\delta_{\nu}\left(\gL((\hat{\theta}_n,\rz_0)), \gL((\hat{\theta}_n,\rz_1))\right)}

\def\mis{\text{Sec}_{\nu,\lambda,n}(P,\gA)}

\def\acc{\text{Acc}_{\nu,\lambda,n}(\phi;P,\gA)}

\def\dnu{\Delta_{\nu,\lambda,n}(P,\gA)}

\def\dnut{\tilde D_\nu\left(\mathbb{P}_{\left(\hat{\theta}_n,\rz_1\right)}, \mathbb{P}_{\left(\hat{\theta}_n,\rz_0\right)} \right)}

\def\lb{\left(1-\frac{1}{\gamma}\right)_+}
\def\gl{{\gamma}}
\def\fgl{\gamma}
\def\lg{{1/\gamma}}
\def\flg{\frac{1}{\gamma}}
\def\dt{\tilde{D}_\gl\left(\mathbb{P}_{(\hat{\theta}_n,\rz_0)},\mathbb{P}_{(\hat{\theta}_n,\rz_1)}\right)}

\def\finv#1{\frac{1}{#1}}
\def\inv#1{\left({#1}\right)^{-1}}

\begin{abstract}
Membership inference attacks (MIA) can reveal whether a particular data point was part of the training dataset, potentially exposing sensitive information about individuals. 
This article provides theoretical guarantees by exploring the fundamental statistical limitations associated with MIAs on machine learning models at large. More precisely, we first derive the  statistical quantity that governs the effectiveness and success of such attacks. We then theoretically prove that in a  non-linear regression setting with overfitting learning procedures, attacks may have a high probability of success. Finally, we investigate several situations for which we provide bounds on this quantity of interest.
Interestingly, our findings indicate that discretizing the data might enhance the learning procedure's security. Specifically, it is demonstrated to be limited by a constant, which quantifies the diversity of the underlying data distribution. We illustrate those results through simple simulations.

\end{abstract}

\begin{keywords}
Membership Inference Attacks, Statistical Limitations, Privacy, Theoretical Performance Bounds, Overfitting, Trustworthy Machine Learning.
\end{keywords}

\section{Introduction}
In today's data-driven era, machine learning models are designed to reach higher performance, and the size of new models will then inherently increase, therefore the information stored (or memorized) in the parameters~\citep{hartley2022measuring,boudingInfo2023}. The protection of sensitive information is of paramount importance. Membership Inference Attacks (MIAs) have emerged as a concerning threat, capable of unveiling whether a specific data point was part of the training dataset of a machine learning model~\citep{shokri2017membership,10.1145/3133956.3134077,8835245,NEURIPS2019_60a6c400}. Such attacks can potentially compromise individual privacy and security by exposing sensitive information~\citep{extracting2023}. Furthermore, the recent publication by \citet{unknown} from the National Institute of Standards and Technology (NIST) explicitly highlights that a membership inference attack (MIA) which successfully identifies an individual as being part of the dataset used to train a target model constitutes a breach of confidentiality. This raises a crucial question: How should we evaluate and certify privacy in machine learning models?

\noindent To date, the most comprehensive defense mechanism against privacy attacks is Differential Privacy (DP), a framework initially introduced by \citet{dwork2006calibrating}. DP has shown remarkable adaptability in safeguarding the privacy of machine learning models during training, as demonstrated by the works of \citet{10.5555/3361338.3361469,Hannun2021MeasuringDL}. However, it is worth noting that achieving a high level of privacy through differentially private training often comes at a significant cost to the accuracy of the model, especially when aiming for a low privacy parameter~\citep{bayesMIA2019}. Conversely, when evaluating the practical effectiveness of DP in terms of its ability to protect against privacy attacks empirically, the outlook is considerably more positive. DP has demonstrated its efficacy across a diverse spectrum of attacks, encompassing MIAs, attribute inference, and data reconstruction (see~\citet{pmlr-v202-guo23e} and references therein). DP has been extensively used to understand the performances of MIAs against learning systems~\cite{thudi2022bounding} or how a mechanism could be introduced to defend oneself against MIAs ~\cite{he2022doctor, izzo2022provable}.

\noindent Empirical evidence suggests that small models compared to the size of training set are often sufficient to thwart the majority of existent threats and empirically summarized in~\citet{baluta2022causal}. Similarly, when the architecture of a machine learning model is overcomplex with respect to the size of the training set, model overfitting increases the effectiveness of MIAs, as has been identified by~\citet{shokri2017membership, yeom2018privacy, he2022doctor, boudingInfo2023}. However, despite these empirical findings, there remains a significant gap in our theoretical understanding of this phenomenon. 
Most of the existing literature on MIAs focuses on developing and analyzing specific, albeit efficient, attack strategies. While these contributions have been crucial in highlighting the privacy risks associated with MIAs, they fall short when it comes to auditing the privacy risks of ML models comprehensively. The failure of specific attacks does not guarantee that others will not succeed, meaning that individual attack strategies alone cannot provide a full certification of privacy in ML systems. 

\noindent In this article, we explore the theoretical statistical principles underlying the privacy limitations of learning procedures in machine learning systems on a broad scale. Our investigation commences by establishing the \textbf{fundamental statistical quantity that governs the effectiveness and success of MIA attacks.} In the learning model we are examining, our primary focus is on learning procedures that are characterized as functions of the empirical distribution of their training data. Specifically, we concentrate on datasets of independent and identically distributed ({\it i.i.d.}) samples. To evaluate the feasibility and limitations of MIAs on a learning procedure, we will assess their \textbf{accuracy} by measuring the weighted probability of successfully determining membership. Notably, we assess the security of a learning procedure based on the highest level of accuracy achieved among all theoretically feasible MIAs. To this end, we delve into the intricacies of MIA and derive insights into the key factors that influence its outcomes. Subsequently, we explore various scenarios: overfitting learning procedures, empirical mean-based learning procedures and discrete data, among others, presenting bounds on this central statistical quantity

\subsection{Contributions}
\label{subsec:contr}
In our research, we make theoretical contributions to the understanding of MIAs on machine learning models. Our key contributions can be summarized as follows:

\begin{itemize}
    \item  \textbf{Identification of the Central} Statistical Quantity: We introduce the critical statistical quantity denoted as $\dnu$, where $\nu$ is the theoretical fraction of \textit{non-member} samples, $\lambda$ is the importance accorded to the error of type I, $n$ represents the size of the training dataset, $P$ is the data distribution, and $\gA$ is the underlying learning procedure. This quantity plays a pivotal role in assessing the feasibility and limitations of MIAs. We show that the quantity $\dnu$ is an $f-$divergence, which provides an intuitive measure of how distinct parameters of a model can be with respect to a sample in the training set, and as a result, it indicates the extent to which we can potentially recover sample membership through MIAs. Consequently, we demonstrate that when $\dnu$ is small, the accuracy of the best MIA is notably constrained. Conversely, when $\dnu$ approaches $1$, the best MIA is successful with high probability. This highlights the importance of $\dnu$ in characterizing information disclosure in relation to the training set.

    \item \textbf{Lower Bounds for Overfitting Learning Procedures:}
    For learning procedures that overfit with high probability, we exhibit a lower bound on $\dnu$ (see Theorem \ref{thm:overfitting}). In a (non-)linear regression setting, we further theoretically demonstrate that learning procedures for which small training loss is reached, loss-based MIAs can achieve almost perfect inference, as illustrated in Section \ref{exp} by numerical experiments. Up to our knowledge, this is the first theoretical proof that overfitting indeed opens the way to successful MIAs.

    \item \textbf{Precise Upper Bounds for Empirical Mean-Based Learning Procedures:} For learning procedures that compute functions of empirical means, we establish upper bounds on $\dnu$. We prove that $\dnu)$ is bounded from above by a constant, determined by $(P, \mathcal{A})$, multiplied by $n^{-1/2}$. In practical terms, this means that having $\Omega(\varepsilon^{-2})$ samples in the dataset is sufficient to ensure that $\dnu$ remains below $\varepsilon$ for any $\varepsilon\in(0,1)$. 
    
    
    \item \textbf{Maximization of $\dnu$:} In scenarios involving discrete data (e.g., tabular data sets), we provide a precise formula for maximizing $\dnu$ across all learning procedures $\mathcal{A}$. Additionally, under specific assumptions, we determine that this maximization is proportional to $n^{-1/2}$ and to a  quantity $C_K(P)$ which measures the diversity of the underlying data distribution. Interestingly, this result highlights the inherent properties of certain datasets that make them more vulnerable to MIAs, regardless of the underlying learning procedure. We illustrate this behavior with numerical experiments in Section \ref{exp}.



\end{itemize}

\noindent The objective of the paper is therefore to highlight the central quantity of interest $\dnu$ governing the success of MIAs and propose an analysis in different scenarios. 




\subsection{Related Works}

\textbf{Privacy Attacks.}  The majority of cutting-edge attacks follow a consistent approach within a framework known as Black-Box. In this framework, where access to the data distribution is available, attacks assess the performance of a model by comparing it to a group of ``shadow models". These shadow models are trained with the same architecture but on an artificially and independently generated dataset from the same data distribution. Notably, loss evaluated on training samples are expected to be much lower than when evaluated on ``test points". Therefore, a  significant disparity between these losses indicates that the sample in question was encountered during the training, effectively identifying it as a member. This is intuitively related to some sort of ``stability" of the algorithm on training samples \citep{bousquetstability}. Interestingly, we explicitly identify the exact quantity controlling  the accuracy of effective MIAs which may be interpreted as a measure of stability of the underlying algorithm. In fact, as highlighted by \cite{rezaei2021difficulty}, it is important to note that MIAs are not universally effective and their success depends on various factors. These factors include the characteristics of the data distribution, the architecture of the model, particularly its size, the size of the training dataset, and others, as discussed recently by \cite{shokri2017membership, carlini2022membership}. Subsequently, there has been a growing body of research delving into Membership Inference Attacks (MIAs) on a wide array of machine learning models, encompassing regression models \citep{Gupta2021MembershipIA}, generation models \citep{Hayes2018MI}, and embedding models \citep{10.1145/3372297.3417270}. A comprehensive overview of the existing body of work on various MIAs has been systematically compiled in a thorough survey conducted by  \cite{hu2022membership}. 
While studies of MIAs through DP already reveal precise bounds, it is worth noting that these induce a significant loss of performance on the learning task. 
It is worth to emphasize that our work does not focus on differential privacy. We further discuss the relation between our quantity $\dnu$ and DP mechanisms in Section \ref{sec:dp_vs_mis}.
Interestingly, the findings of the Section \ref{main_res_mean} reveal a threshold on the minimum number of training samples to overcome the need of introducing DP mechanisms. 



\noindent \textbf{Overfitting Effects.}   The pioneering work by \cite{shokri2017membership} has effectively elucidated the relationship between overfitting and the privacy risks inherent in many widely-used machine learning algorithms. These empirical studies clearly point out that overfitting can often provide attackers with the means to carry out membership inference attacks. This connection is extensively elaborated upon by  \cite{salem2018ml, yeom2018privacy}, and later by \cite{he2022doctor}, among other researchers. Overfitting tends to occur when the underlying model has a complex architecture or when there is limited training data available, as explained in \cite{baluta2022causal}. Recent works~\citep{yeom2018privacy, boudingInfo2023} investigated the theoretical aspects of the overfitting effect on the performances of MIAs,  showing that the MIA performances can be lower bounded by a function of the \textit{generalization gap} under some assumptions on the loss function.
In our paper, we explicitly emphasize these insights by quantifying the dependence of $\dnu$ either on the dataset size and underlying structural parameters, or explicitly on the overfitting probability of the learning model.


\noindent \textbf{Memorization Effects.} Machine learning models trained on private datasets may inadvertently reveal sensitive data due to the nature of the training process. This potential disclosure of sensitive information occurs as a result of various factors inherent to the training procedure, which include the extraction of patterns, associations, and subtle correlations from the data~\citep{10.1145/3133956.3134077,10.1145/3446776}. While the primary objective  is to generalize from data and make predictions, there is a risk that these models may also pick up on, and inadvertently expose, confidential or private information contained within the training data. This phenomenon is particularly concerning as it can lead to privacy breaches, compromising the confidentiality and security of personal or sensitive data  \citep{hartley2022measuring, NEURIPS2022_564b5f82, carlini2019secret, leino2020stolen, Thomas2020InvestigatingTI}. Recent empirical studies have shed light on the fact that, in these scenarios, it is relatively rare for the average data point to be revealed by  learning models \citep{tirumala2022memorization, murakonda2007ml, song2017machine}. What these studies have consistently shown is that it is the outlier samples that are more likely to undergo memorization by the model \citep{feldman2020does}, leading to potential data leakage. This pattern can be attributed to the nature of learning algorithms, which strive to generalize from the data and make predictions based on common patterns and trends. Average or typical data points tend to conform to these patterns and are thus less likely to stand out. On the other hand, outlier samples, by their very definition, deviate significantly from the norm and may capture the attention of the model. So when an outlier sample is memorized, it means the model has learned it exceptionally well, potentially retaining the unique characteristics of that data point. As a consequence, when exposed to similar data points during inference, the model may inadvertently leak information it learned from the outliers, compromising the privacy and security of the underlying data. An increasing body of research is dedicated to the understanding  of memorization effects in language models \citep{carlini2023quantifying}. In the context of our research, it is important to highlight that our primary focus is on understanding the accuracy of MIAs but not its relationship with memorization. Indeed, this  connection remains an area of ongoing exploration and inquiry in our work.





\section{Background and Problem Setup}
\label{prob_form}
In this paper, we focus on MIAs, the ability of recovering membership to a training dataset 
${\rvz\coloneqq(\rz_1,\cdots, \rz_n)\in\gZ^n}$ of a test point $\Tilde{\rz}\in\gZ$ from a 
predictor $\hat{\mu} = \mu_{\hat{\theta}_n}$ in a model $\gF\coloneqq\{\mu_{\theta} : 
\theta\in\Theta\}$, where $\Theta$ is the space of parameters. The predictor is identified to 
its parameters $\hat{\theta}_n\in\Theta$ learned from $\rvz$ through a  \textbf{learning procedure} ${\gA
: \bigcup_{k>0}\gZ^k\to\gP'\subseteq \gP(\Theta)}$, that is $\hat{\theta}_n$ follows the 
distribution $\gA(\rvz)$ conditionally to $\rvz$, which we assume we have access to. Here, 
$\gP(\Theta)$ is the set of all distributions on $\Theta$, and $\gP'$ is the range of $\gA$.

\noindent This means that there exists a function $g$ and a random variable $\xi$ independent of $\rvz$ such that $\hat{\theta}_n = g(\rvz, \xi)$.
When $\gA$ takes values in the set of Dirac distributions, that is $\hat{\theta}_n$ is a deterministic function of the data, we shall identify the parameters directly to the output of the learning procedure $\hat{\theta}_n \coloneqq \gA(\rz_1,\cdots,\rz_n)$. \\ 
Throughout the paper, we will further assume that $\gA$ can be expressed as a function of the empirical distribution of the training dataset. Letting $\gM$ be the set of all discrete distributions on $\gZ$, and $\hat{P}_n$ be the empirical distribution of the training dataset, it means that there exists a (randomized) function $G:\gM\to\gP'$  such that we have $\gA(\rz_1,\cdots,\rz_n) = G(\hat{P}_n)$ (almost surely). \\
Interestingly, if a learning procedure minimizes an empirical cost, then it satisfies this assumption. In particular, maximum likelihood based learning procedures or Bayesian methods from \citet{bayesMIA2019} are special cases. Any instance of a learning procedure in what follows will satisfy these assumptions. We further discuss this assumption in Appendix \ref{sec:comments-setup}.

\noindent Within this framework, we consider MIAs as functions of the parameters and the test point whose outputs are $0$ or $1$.

\begin{definition}[Membership Inference Attack - MIA] 
\label{def:mia}
Any measurable map $\phi:\Theta\times\gZ\to\{0,1\}$ 
is called a \textbf{Membership Inference Attack}.
\end{definition}

\noindent Hereinafter, we assume that MIAs might access more information, including randomization, the learning procedure $\gA$ and/or the distribution of the data $P$. This framework is usually referred to as white-box \citep{hu2022membership}.


\noindent We encode 
membership to the training data set as $1$. We assume that 
$\rz_1,\ldots,\rz_n$ are independent and identically distributed 
(\textit{i.i.d.}) random variables with distribution $P$. Following 
\cite{boudingInfo2023} and \cite{bayesMIA2019} frameworks, among others, we suppose that 
the test point $\Tilde{\rz}$ is to be drawn from $P$ independently from the 
samples $\rz_1,\ldots,\rz_n$ with probability $\nu\in(0,1)$. Otherwise, conditionally to $\rvz$, we set $\Tilde{\rz}$ to any $\rz_j$ each with 
uniform probability $1/n$. \\
Letting $U$ be a random variable with distribution ${\hat{P}_n\coloneqq\frac{1}{n}\sum_{j=1}^n\delta_{\rz_j}}$ conditionally to $\rvz$, $\rz_0$ to be drawn independently from $P$ and $T$ be a random variable having Bernoulli distribution with parameter $\nu$ 
and independent of any other random variables, we can state 
\begin{align*}
    {\Tilde{\rz} \coloneqq T\rz_0 + (1-T)U}.
\end{align*}

\noindent The probability $\nu$ represent the theoretical \textbf{fraction of non-member} samples when evaluating MIAs, which is usually unknown in practice. In particular, it reflects how the data are tested. \\
\noindent A sensible choice to evaluate the performance of an MIA would be to consider its probability of successfully guessing the membership, i.e.
\begin{align*}
    \mathbb{P}(\phi(\hat{\theta}_n,\tilde\rz) =1-T) &= \mathbb{P}(\phi(\hat{\theta}_n,\rz_0)=0, T=1) + \mathbb{P}(\phi(\hat{\theta}_n,U)=1, T=0) \\&= \mathbb{E}\left[\texttt{TNR} + \texttt{TPR}\right],
\end{align*}
\noindent where TPR (resp. TNR) is the True Positive Rate (resp. True Negative Rate) of the MIA $\phi$. However, in an MIA setting, the TPR is arguably more important than the TNR. Therefore, we define the \textbf{importance of the TPR} as a real number $\lambda>0$, and we measure the performance of an MIA by its weighted probability of successfully guessing the membership of the test point.

\begin{definition}[Accuracy of an MIA] The \textbf{accuracy of an MIA} $\phi$ is defined as 
\begin{equation}
{\text{Acc}}_n(\phi; P,\gA ) \coloneqq \mathbb{P}\left (\phi(\hat{\theta}_n,\rz_0)= 0, T=1\right ) + \lambda\mathbb{P}\left(\phi(\hat{\theta}_n,U)=1, T=0\right),\label{eq-acc-metric}
\end{equation}
where the probability is taken over all randomness.
\end{definition}

\noindent We have the following remarks: 
\begin{itemize} 
\item The accuracy of an MIA scales from $0$ to $\nu+\lambda(1-\nu)$. Constant MIAs $\phi_0\equiv0$ and $\phi_1\equiv1$ have respectively an accuracy equal to $\nu$ and $\lambda(1-\nu)$, which means that we can always build an MIA with accuracy of at least $ \max(\nu, \lambda(1-\nu))$
and any MIA performing worse than this quantity is irrelevant to use. Particularly, this means that we have $$\max(\nu,\lambda(1-\nu))\leq\sup_\phi\;\acc\leq \nu + \lambda(1-\nu).$$
Moreover, the probability in the definition of the accuracy is taken over the randomness of the learning procedure and the data. This means that $\acc$ measures the accuracy of an MIA over the learning procedure $\gA$ and the task $P$ rather than over the trained model $\mu_{\hat{\theta}_n}$.
\item  Isolated attacks are insufficient to fully certify the privacy level of ML models. Instead, controlling the optimal achievable accuracy provides a way to audit the model's privacy. Specifically, when $\sup_{\phi}\;{\text{Acc}}_n(\phi; P,\gA )$ is low, it indicates that the model is secure. Conversely, if this value is high, it indicates a potential vulnerability to successful attacks, even if specific MIAs have not been particularly successful.
\item When considering the balanced case $\lambda=1$, one shall observe that the accuracy is simply the probability of successfully guessing the membership, i.e. $\mathbb{P}\left(\phi(\hat{\theta}_n,\tilde\rz)=1-T\right)$.
\end{itemize}

\noindent We now define the \textbf{Membership Inference Security} of a learning procedure as a quantity summarizing the amount of security of the system against MIAs. 

\begin{definition}[Membership Inference Security - MIS] 
\label{def:mis}
The Membership Inference Security of a learning procedure $\gA$ is 
\begin{equation}
\label{def:sec}
\mis \coloneqq \frac{1}{\min(\nu,\lambda(1-\nu))}\left(\nu+\lambda(1-\nu)-\underset{\phi}{\sup}\;\acc\right), 
\end{equation}
where the supremum is taken over all MIAs.
\end{definition}

\noindent From the first point preceding Definition \ref{def:mis}, we see that the MIS has been defined to ensure that it scales from $0$ (the best MIA approaches perfect guess of membership) to $1$ (MIAs can not do better than $\phi_0$ and $\phi_1$).

\begin{remark}[Model-specific attack - limitations of this approach] In the framework we introduced, the MIS represents the security of the learning procedure $\gA$. One could understandably want to consider the security of a trained model, by changing the definition and considering the quantity $\sup_\phi d\left(\phi; \rz_1,\cdots,\rz_n\right)$ conditionally to the training dataset $\{\rz_1,\cdots,\rz_n\}$, for some metric $d$, where $d$ scales from $0$ to $1$. Natural choices of $d$ include $d(\phi;\rz_1,\cdots,\rz_n)=\mathbb{P}(\phi(\hat{\theta}_n,\tilde\rz)=1-T \mid \rz_1,\cdots,\rz_n)$ or $d(\phi;\rz_1,\cdots,\rz_n)= TPR(\phi;\rz_1,\cdots,\rz_n)$. Unfortunately, this problem is degenerate as for those two choices of metric $d$, it can be easily shown that,

$$\sup_\phi d\left(\phi; \rz_1,\cdots,\rz_n\right)=1,$$

\noindent for any learning procedure and any dataset. In other words, the underlying problem of studying the best performing MIA conditionally to the data has no relevant insight.\\
\noindent In this article, to avoid this degeneracy, we undertake the path of shifting the focus onto the learning procedure $\gA$ and the task $P$.
\end{remark}



\section{Performance Assessment of Membership Inference Attacks}
\label{main_res_perf}
In this section, we highlight the \textbf{Central} Statistical Quantity for the  assessment of the accuracy of membership inference attacks, and show some basic properties on it. For any $\alpha>0$, define the function $\tilde D_\alpha$ as 
\begin{align}
  \label{eq:defdeltanu}
\tilde D_\alpha(P,Q) &\coloneqq \finv{\alpha}\underset{B}{\sup}\;\left[\alpha P(B) - Q(B)\right] \\ 
\nonumber &= \finv\alpha\underset{B}{\sup}\;\left[Q(B) - \alpha P(B)\right] + \left(1-\frac{1}{\alpha}\right),
\end{align}
\noindent and the function $D_\alpha$ as 
\begin{align}
    \label{eq:defDalpha}
    D_\alpha(P,Q) \coloneqq \max(1,\alpha)\left[\tilde D_\alpha(P,Q) - \left(1-\frac{1}{\alpha}\right)_+\right],
\end{align}
\noindent for any distributions $P$ and $Q$, where the supremum is taken over all measurable sets. Defining $\gamma\coloneqq\frac{\nu}{\lambda(1-\nu)}$, we will then show that the central statistical quantity $\dnu$ is defined as
\begin{equation}
\label{eq:delta}
\dnu = D_{\gamma}\left(\mathbb{P}_{\left(\hat{\theta}_n,\rz_0\right)}, \mathbb{P}_{\left(\hat{\theta}_n,\rz_1\right)}\right),
\end{equation}


\noindent which depends on $\nu$, $\lambda,$ $P$, $n$ and $\gA$. Here, for any random variable $\rx$, $\mathbb{P}_\rx$ denotes its distribution, and for any real number $a\in\mathbb{R}$, $a_+=\max(0,a)$. The quantity $D_\alpha$ has the remarkable property of being an $f-$divergence \citep{renyi1961measures, csiszar2004information}, which we formalize in the following proposition.
\begin{proposition}
\label{prop:dnu-props}
The map $(P,Q)\mapsto D_\alpha(P,Q)$ is an $f-$divergence between $P$ and $Q$ with as generator the function $f_{\alpha}(x) = \frac{1}{2}\max(1,\alpha)\left[|x-1/\alpha| - |1-1/\alpha|\right]$. Additionally, it holds that 
\begin{equation}
\label{eq:dnu-bounds}
0 \leq D_\alpha(P,Q) \leq 1.
\end{equation}
\noindent If $\rx_1$ and $\rx_2$ are random variables with joint distribution $\mathbb{P}_{(\rx_1,\rx_2)}$, then for any function $f \in \{D_\alpha,\tilde D_\alpha\}$ it holds that 
\begin{equation}
\label{eq:dnu-cond}
f(\mathbb{P}_{\rx_1}\otimes\mathbb{P}_{\rx_2},\mathbb{P}_{(\rx_1,\rx_2)}) = \mathbb{E}_{\rx_2}\left[f\left(\mathbb{P}_{\rx_1},\mathbb{P}_{\rx_1\mid\rx_2}\right)\right],
\end{equation}
\noindent where $\mathbb{P}_{\rx_1\mid\rx_2}$ is the distribution of $\rx_1$ conditionally to $\rx_2$. 
\end{proposition}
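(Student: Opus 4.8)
The plan is to verify the three claims of Proposition~\ref{prop:dnu-props} in turn: that $D_\alpha$ is an $f$-divergence with the stated generator $f_\alpha$, that it is bounded in $[0,1]$, and that it satisfies the tensorization/conditioning identity~\eqref{eq:dnu-cond}. The unifying tool throughout is the classical variational formula for $f$-divergences of the type $\int |p - c\,q|\,d\mu$, which is exactly a (scaled, recentered) total-variation-like distance, and the elementary identity $\sup_B [\alpha P(B) - Q(B)] = \int (\alpha p - q)_+ \, d\mu$ where $p, q$ are densities with respect to a common dominating measure $\mu$ (the supremum being attained at $B^\star = \{\alpha p \ge q\}$).

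First I would fix a dominating measure $\mu$ with $P = p\mu$, $Q = q\mu$ and rewrite $\tilde D_\alpha$. From $\sup_B[\alpha P(B) - Q(B)] = \int(\alpha p - q)_+\,d\mu$ and the algebraic identity $2(\alpha p - q)_+ = (\alpha p - q) + |\alpha p - q| = \alpha p - q + |\alpha p - q|$, together with $\int(\alpha p - q)\,d\mu = \alpha - 1$, one gets $\tilde D_\alpha(P,Q) = \frac{1}{\alpha}\big[\frac{\alpha - 1}{2} + \frac{1}{2}\int |\alpha p - q|\,d\mu\big] = \frac{1}{2}\int \big|p - q/\alpha\big|\,d\mu + \frac{1}{2}(1 - 1/\alpha)$. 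Plugging this into the definition~\eqref{eq:defDalpha} of $D_\alpha$ and writing $\int|p - q/\alpha|\,d\mu = \int |\,q/p\cdot p - p/\alpha\,|\dots$ — more cleanly, writing it as $\int q \cdot |1/(q/p) \cdot \dots|$ — I would convert it to the canonical $f$-divergence form $D_f(P,Q) = \int q\, f(p/q)\,d\mu$. Setting $x = p/q$, the integrand $\max(1,\alpha)\big[|p - q/\alpha| - q|1 - 1/\alpha|\big]/2 = q\cdot \frac{1}{2}\max(1,\alpha)\big[|x - 1/\alpha| - |1 - 1/\alpha|\big] = q\, f_\alpha(x)$, and one checks the recentering constant $(1-1/\alpha)_+$ in~\eqref{eq:defDalpha} is precisely $\frac{1}{2}\max(1,\alpha)|1-1/\alpha| \cdot \frac{2}{\max(1,\alpha)} \cdot \dots$ — i.e. it is chosen exactly so that $f_\alpha(1) = 0$, which is the normalization required of an $f$-divergence generator; $f_\alpha$ is also convex (absolute value composed with affine map). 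This establishes the first claim.

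For the bounds~\eqref{eq:dnu-bounds}: nonnegativity follows from $f_\alpha \ge f_\alpha(1) = 0$? — not quite, since $f_\alpha$ is not minimized at $1$ in general, so instead I would argue directly from the $\sup$ form, that $D_\alpha(P,Q) = \max(1,\alpha)\big[\frac{1}{\alpha}\sup_B(\alpha P(B) - Q(B)) + (1-\frac1\alpha) - (1-\frac1\alpha)_+\big]$; taking $B = \emptyset$ gives $\sup_B(\alpha P(B)-Q(B)) \ge 0$, and a short case split on $\alpha \gtrless 1$ yields $D_\alpha \ge 0$. For the upper bound, use $\alpha P(B) - Q(B) \le \alpha P(B) \le \alpha$ and also $\le \alpha - Q(B) \cdot \mathbf{1} \le \dots$; more efficiently, $\sup_B(\alpha P(B) - Q(B)) \le \max(\alpha \cdot 1 - 0, \dots)$ — again a case split on whether $\alpha \ge 1$, bounding the sup by $\alpha - 1$ when... actually by $\max(1,\alpha) - \min(1,\alpha) = |\alpha - 1|$ when one optimizes, but the crude bound $\sup_B(\alpha P(B) - Q(B)) \le \alpha$ combined with the recentering suffices after checking both regimes gives exactly $\le 1$. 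I expect this to need care but no depth.

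The conditioning identity~\eqref{eq:dnu-cond} is the step I expect to be the main obstacle, because it must hold simultaneously for $D_\alpha$ and $\tilde D_\alpha$ and the cleanest route is not via the $\sup$-over-sets form (the optimal $B$ for the joint law need not be a product or fiberwise optimal in an obvious way) but via the integral $f$-divergence representation: write everything with densities relative to $\mathbb{P}_{\rx_1}\otimes\mathbb{P}_{\rx_2}$, note that the Radon–Nikodym derivative $\frac{d\mathbb{P}_{(\rx_1,\rx_2)}}{d(\mathbb{P}_{\rx_1}\otimes\mathbb{P}_{\rx_2})}(x_1,x_2) = \frac{d\mathbb{P}_{\rx_1\mid\rx_2=x_2}}{d\mathbb{P}_{\rx_1}}(x_1)$, and then apply Fubini/the disintegration theorem to the $f$-divergence integral: $D_{f_\alpha}(\mathbb{P}_{\rx_1}\otimes\mathbb{P}_{\rx_2}, \mathbb{P}_{(\rx_1,\rx_2)}) = \int\!\!\int f_\alpha\!\big(\tfrac{d\mathbb{P}_{\rx_1}\otimes\mathbb{P}_{\rx_2}}{d\mathbb{P}_{(\rx_1,\rx_2)}}\big)\,d\mathbb{P}_{(\rx_1,\rx_2)} = \mathbb{E}_{\rx_2}\big[\int f_\alpha(\cdots)\,d\mathbb{P}_{\rx_1\mid\rx_2}\big] = \mathbb{E}_{\rx_2}[D_{f_\alpha}(\mathbb{P}_{\rx_1}, \mathbb{P}_{\rx_1\mid\rx_2})]$. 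For $\tilde D_\alpha$ the same argument applies since $\tilde D_\alpha = \frac{1}{\max(1,\alpha)} D_\alpha + (1-1/\alpha)_+$ is an affine function of $D_\alpha$ with constant term — but one must double-check the additive constant $(1-1/\alpha)_+$ survives the $\mathbb{E}_{\rx_2}$ correctly (it does, being a constant, since $\mathbb{E}_{\rx_2}[1] = 1$), so the identity transfers verbatim. The delicate points are measurability of $x_2 \mapsto D_\alpha(\mathbb{P}_{\rx_1}, \mathbb{P}_{\rx_1\mid\rx_2})$ and the validity of disintegration, which I would handle by invoking standard existence of regular conditional distributions on the (implicitly Polish/standard Borel) spaces in play.
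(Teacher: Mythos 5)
Your proposal is correct and follows essentially the same route as the paper: derive the integral form of $\tilde D_\alpha$ from the supremum over sets (your identity $2(\alpha p - q)_+ = (\alpha p - q) + |\alpha p - q|$ is the same computation as the paper's averaging of $B^*$ with its complement), read off the generator $f_\alpha$ from $D_\alpha = \int f_\alpha(p/q)\,q\,d\zeta$, obtain the bounds directly from the supremum form by testing $B=\emptyset$ and $B$ the whole space, and deduce the conditioning identity from the integral representation via disintegration. The only remark worth making is that your hesitation about nonnegativity is unnecessary in one respect — Jensen's inequality with $f_\alpha$ convex and $f_\alpha(1)=0$ would also give $D_\alpha \ge 0$ directly — but the direct argument you fall back on is exactly the paper's.
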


\noindent We have the following comments: 

\begin{itemize}
\item The quantity $\dnu$ can be interpreted as quantifying some stability of the learning procedure. Here, $\rz_1$ represents an arbitrary random sample from the training set, while $\rz_0$ denotes a random sample that has not been seen during training. Thus, our quantity $\dnu$ captures the sensitivity of the learning procedure to individual samples by quantifying the distance between the joint distribution $\mathbb{P}_{(\hat{\theta}_n,\rz_1)}$ and the product distribution $\mathbb{P}_{(\hat{\theta}_n,\rz_0)} = \mathbb{P}_{\hat{\theta}_n}\otimes\mathbb{P}_{\rz_0}$. When individual samples have little influence on the output model, replacing a single sample in the training dataset causes only a minor shift in the model’s parameters. In such cases, the output parameters exhibit minimal dependence on any individual sample, leading to a low value of $\dnu$. Conversely, if it is highly sensitive to each sample, even small changes in the dataset result in significant alterations to the parameters of the model. 


\item It is worth noting that in the ML literature, standard measures typically rely on the joint distribution between the ML model and the entire dataset (such as privacy measures, generalization bounds, etc.). In contrast, our novel approach diverges from this by focusing solely on the joint distribution between the ML model and a single training sample. Also, our metric does not measure the privacy of a trained model, it rather measures the MIA-wise privacy of a learning procedure.

\item The choice of $\rz_1$ is arbitrary and is only for simplicity purpose.

\end{itemize}

\noindent We now state the main theorem displaying the relation between $\mis$ and $\dnu$.

\begin{theorem}[Key bound on accuracy] 
\label{thm:dnu}
Suppose $P$ is any distribution and $\gA$ is any learning procedure. Then the accuracy of any MIA $\phi$ satisfies:

$$\acc - \max(\nu,\lambda(1-\nu)) \leq  \min(1,1/\gamma)\nu\dnu.$$
In particular, we have
$$
{\text{Sec}}_n(P,\gA) = 1 - \dnu.
$$

\end{theorem}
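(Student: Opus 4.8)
The plan is to convert the supremum over MIAs into a supremum over measurable sets, rewrite the accuracy in terms of the two joint laws $\mathbb{P}_{(\hat\theta_n,\rz_0)}$ and $\mathbb{P}_{(\hat\theta_n,\rz_1)}$, and then match the resulting expression against the definitions \eqref{eq:defdeltanu} and \eqref{eq:defDalpha} of $\tilde D_\gamma$ and $D_\gamma$.

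First, the map $\phi\mapsto B\coloneqq\phi^{-1}(\{1\})$ is a bijection from MIAs onto measurable subsets of $\Theta\times\gZ$ (with inverse $B\mapsto\mathbf 1_B$). Since $T$ is Bernoulli$(\nu)$ and independent of $(\hat\theta_n,\rz_0,U)$, and since $\{\phi(\hat\theta_n,\rz_0)=0\}=\{(\hat\theta_n,\rz_0)\notin B\}$, $\{\phi(\hat\theta_n,U)=1\}=\{(\hat\theta_n,U)\in B\}$, the accuracy becomes
$$\acc=\nu\,\mathbb{P}\big((\hat\theta_n,\rz_0)\notin B\big)+\lambda(1-\nu)\,\mathbb{P}\big((\hat\theta_n,U)\in B\big).$$
The key structural step is to evaluate $\mathbb{P}((\hat\theta_n,U)\in B)$. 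Conditioning on $\rvz$, the variable $U$ is uniform on $\{\rz_1,\dots,\rz_n\}$ and independent of the internal randomization $\xi$ of $\gA$, so $\mathbb{P}((\hat\theta_n,U)\in B\mid\rvz)=\frac1n\sum_{j=1}^n\mathbb{P}((\hat\theta_n,\rz_j)\in B\mid\rvz)$; because $\hat\theta_n=g(\hat P_n,\xi)$ is a symmetric function of $\rvz$ and the $\rz_j$ are i.i.d., the pairs $(\hat\theta_n,\rz_j)$ are exchangeable, hence all have the law $\mathbb{P}_{(\hat\theta_n,\rz_1)}$. Taking expectations gives $\mathbb{P}((\hat\theta_n,U)\in B)=\mathbb{P}_{(\hat\theta_n,\rz_1)}(B)$. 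Writing $P_0\coloneqq\mathbb{P}_{(\hat\theta_n,\rz_0)}$, $P_1\coloneqq\mathbb{P}_{(\hat\theta_n,\rz_1)}$ and $\gamma=\nu/(\lambda(1-\nu))$, and using $\nu=\lambda(1-\nu)\gamma$, this yields
$$\acc=\nu+\lambda(1-\nu)\big[P_1(B)-\gamma\,P_0(B)\big].$$

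Next I take the supremum over $B$. The second form of \eqref{eq:defdeltanu} gives $\sup_B[P_1(B)-\gamma P_0(B)]=\gamma\,\tilde D_\gamma(P_0,P_1)-(\gamma-1)$; substituting $\tilde D_\gamma(P_0,P_1)=D_\gamma(P_0,P_1)/\max(1,\gamma)+(1-1/\gamma)_+$ from \eqref{eq:defDalpha} and using $\gamma/\max(1,\gamma)=\min(1,\gamma)$, $\gamma(1-1/\gamma)_+=(\gamma-1)_+$, and $(\gamma-1)_+-(\gamma-1)=(1-\gamma)_+$, one obtains
$$\sup_B\big[P_1(B)-\gamma P_0(B)\big]=\min(1,\gamma)\,\dnu+(1-\gamma)_+ .$$
Multiplying by $\lambda(1-\nu)$ and using $\lambda(1-\nu)\min(1,\gamma)=\min(\nu,\lambda(1-\nu))=\min(1,1/\gamma)\,\nu$ together with $\lambda(1-\nu)(1-\gamma)_+=(\lambda(1-\nu)-\nu)_+=\max(\nu,\lambda(1-\nu))-\nu$ collapses this to
$$\sup_\phi\,\acc=\max(\nu,\lambda(1-\nu))+\min(1,1/\gamma)\,\nu\,\dnu .$$
Since $\acc\le\sup_\phi\acc$ for any fixed $\phi$, this is exactly the claimed accuracy bound. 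Finally, plugging this identity into Definition~\ref{def:mis} and using $\nu+\lambda(1-\nu)=\max(\nu,\lambda(1-\nu))+\min(\nu,\lambda(1-\nu))$ gives $\nu+\lambda(1-\nu)-\sup_\phi\acc=\min(\nu,\lambda(1-\nu))(1-\dnu)$, whence $\mis=1-\dnu$.

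The main obstacle is the structural identity $\mathbb{P}((\hat\theta_n,U)\in B)=\mathbb{P}_{(\hat\theta_n,\rz_1)}(B)$: it is precisely where the two standing assumptions on $\gA$ are used — i.i.d.\ data for exchangeability of the pairs $(\hat\theta_n,\rz_j)$, and dependence only through $\hat P_n$ — and it requires being careful about the conditional independence of $U$ and the randomization $\xi$ given $\rvz$. Everything downstream is algebra; the only mild care there is the bookkeeping between $(1-1/\gamma)_+$ and $(1-\gamma)_+$, which can equivalently be handled by splitting into the cases $\gamma\le 1$ and $\gamma>1$.
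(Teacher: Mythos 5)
Your proposal is correct and follows essentially the same route as the paper's proof: identify MIAs with measurable sets $B$, rewrite the accuracy in terms of $\mathbb{P}_{(\hat{\theta}_n,\rz_0)}$ and $\mathbb{P}_{(\hat{\theta}_n,\rz_1)}$, take the supremum over $B$ to recover $\tilde D_\gamma$ and hence $\dnu$, and plug into Definition~\ref{def:mis}. The only differences are presentational: you make the exchangeability step $\mathbb{P}((\hat{\theta}_n,U)\in B)=\mathbb{P}_{(\hat{\theta}_n,\rz_1)}(B)$ explicit (the paper asserts it directly from the i.i.d.\ assumption) and handle the $\gamma\le 1$ versus $\gamma>1$ bookkeeping via positive-part identities instead of the paper's case split.
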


\noindent Recall that $\max(\nu,\lambda(1-\nu))$ is the maximum accuracy between the constant MIAs $\phi_0$ and $\phi_1$. The first point of Theorem \ref{thm:dnu} states that if $\dnu$ is low, then no MIA can perform substantially better than the constant MIAs. The second point of Theorem \ref{thm:dnu} shows that $\dnu$ is the quantity that controls the best possible accuracy of MIAs.\\
\noindent We see that $\dnu$ appears to be the key mathematical quantity for assessing the accuracy of MIAs.  Furthermore, it is worth to emphasize that there is no assumption on the data distribution $P$.  For instance, we can take into account outliers by making $P$ a mixture. \\
\begin{remark}[Relation with other divergences]
\label{remark:pinsker}
Proposition \ref{prop:dnu-props} shows that $D_\alpha$ is a divergence. In particular this means that it satisfies a Data Processing Inequality and is invariant by translation and rescaling. When $\alpha=1$, $D_\alpha$ coincides with the total variation distance $\|\cdot\|_{TV}$ and the inequality $D_\alpha(P,Q) \leq \max(1,\alpha)\|P-Q\|_{TV}$ holds for any $\alpha>0$. In any case, the upper bound over $D_\alpha$ is reached when the supports of the distributions are disjoint. The lower bound is reached when $P=Q$. \\
\noindent Using Pinsker's inequality, it is easy to show that we have the relation: 
\begin{equation}
    \mis \geq 1 - \max(1,\gamma)\sqrt{I(\hat{\theta}_n; \rz_1)/2},
\end{equation}
\noindent where $I(\hat{\theta}_n;\rz_1)$ is the mutual information between the parameters $\hat{\theta}_n$ and one random sample $\rz_1$. This mutual information can be interpreted as a measure of how much the parameters $\hat{\theta}_n$ memorize the information about $\rz_1$. Nevertheless, Pinsker's inequality typically yields a loose bound, indicating that relying on mutual information might be overly conservative in many cases.  
\end{remark}

\begin{remark}[Differential Privacy]
\label{remark:dp}
Interestingly, if an $(\varepsilon,\delta)-$differentially private  \citep{dwork2014algorithmic} mechanism $\mathbb{M}$ is used to secure the learning procedure $\gA$ by composition $\mathbb{M}\circ\gA$, then a bound heuristically similar to the bound obtained for the KL divergence by \cite{dwork2010boosting} can be stated. See also \cite{duchi2024right} for other metrics. Specifically, the following relation (proved in \eqref{eq:proof-remark8} in Section \ref{subsec:compToMis}) holds
\begin{equation}
    \text{Sec}_{\nu,\lambda,n}(P,\mathbb{M}\circ\gA) \geq 1 - \max(1,\gamma)\left[(e^\varepsilon-1/\gamma)_+-(1-1/\gamma)_+ + \delta\right].
\end{equation}
Though more refined bounds can be obtained, this relation shows that in some scenarios, DP mechanisms might provide conservative but loose lower bounds on the security level. \\
\noindent However, \textit{differential privacy is a tool to induce privacy into a model whereas the MIS a tool to measure the MIA-wise privacy of a learning procedure. }Specifically, the two frameworks are not equivalent and do not convey the same message. We further discuss it in Section \ref{sec:dp_vs_mis}.

\end{remark}

\noindent In Section \ref{overfitting}, we analyze how $\dnu$ is controlled when the learning procedure exhibits overfitting. In Section \ref{main_res_mean}, we address scenarios where we can provide precise control on $\dnu$. Numerical experiments are presented in Section \ref{exp}, and the proof of Theorem \ref{thm:dnu} is detailed in Appendix \ref{sec:proof:performances}. \\

\section{Overfitting Causes Lack of Security}
\label{overfitting}
In this section, we assume that $\gZ\coloneqq\gX\times\gY$ and that the learning procedure $\gA$ produces overfitting parameters $\hat{\theta}_n$. We then note $\rz\coloneqq(\rx, \ry)$.
We consider learning systems minimizing $L_n:\theta\mapsto \frac{1}{n}\sum_{j=1}^{n}l_{\theta}(\rx_j,\ry_j)$ for some training dataset $(\rz_1,\cdots,\rz_n)$ where $l_\theta : \gX\times\gY\to\R^+$ is a loss function. We defer all proofs of this section to Appendix \ref{sec:proof:overfitting}.


\begin{definition}[$(\varepsilon,1-\alpha)$-Overfitting]
\label{def:overfitting}
We say that the learning procedure $\gA$ is $(\varepsilon,1-\alpha)$-overfitting for some $\varepsilon\in\R^+$ and $\alpha\in(0,1)$ 
when
\begin{equation}
\label{eq:overfitting}
\mathbb{P}\left(l_{\hat{\theta}_n}(\rx_1, \ry_1) \leq \varepsilon\right) \geq 1-\alpha,
\end{equation}

\noindent where the probability is taken over the data and the randomness of $\gA$.

\end{definition}

\noindent When $\alpha=0$, \eqref{eq:overfitting} is equivalent to having $l_{\hat{\theta}_n}(\rx_j,\ry_j) \leq\varepsilon$ almost surely for all $j=1,\ldots,n$.
Furthermore, in many learning procedures, we give an additional stopping criteria taking the form $L_n\leq\eta$ for some $\eta\in\R^+$. Letting $\gA_\eta$ such a learning procedure, we give a sufficient condition for  \eqref{eq:overfitting} to hold:

\begin{proposition}
\label{prop:overfitting}
    For some fixed $\varepsilon\in\R^+$ and $\alpha\in(0,1)$, let $\eta\coloneqq\varepsilon\alpha$ and suppose that $\gA_\eta$ stops as soon as $L_n(\hat{\theta}_n)\leq\eta$. Then $\gA_\eta$ is $(\varepsilon,1-\alpha)$-overfitting.
\end{proposition}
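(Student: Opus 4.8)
The plan is to reduce the statement to a one-line application of Markov's inequality, the only preliminary work being an exchangeability observation. First I would recall that, by the standing assumption of the paper, the learning procedure (here $\gA_\eta$) factors through the empirical distribution: $\hat{\theta}_n = G(\hat{P}_n)$ for some randomized map $G:\gM\to\gP'$ whose internal randomness is independent of $\rvz$. Since $\hat{P}_n = \frac1n\sum_{j=1}^n\delta_{\rz_j}$ is invariant under permutations of $(\rz_1,\dots,\rz_n)$ and the $\rz_j$ are i.i.d., the pair $(\hat{\theta}_n,\rz_j)$ has the same joint distribution for every $j\in\{1,\dots,n\}$. In particular the nonnegative random variables $l_{\hat{\theta}_n}(\rx_j,\ry_j)$ are identically distributed across $j$, so taking expectations and exchanging the finite sum with $\mathbb{E}$ gives
$$\mathbb{E}\big[l_{\hat{\theta}_n}(\rx_1,\ry_1)\big] \;=\; \frac1n\sum_{j=1}^n\mathbb{E}\big[l_{\hat{\theta}_n}(\rx_j,\ry_j)\big] \;=\; \mathbb{E}\big[L_n(\hat{\theta}_n)\big].$$

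Next I would invoke the stopping rule: by hypothesis $\gA_\eta$ terminates as soon as $L_n(\hat{\theta}_n)\le\eta$, so $L_n(\hat{\theta}_n)\le\eta$ holds almost surely, and hence $\mathbb{E}[L_n(\hat{\theta}_n)]\le\eta=\varepsilon\alpha$. Combining with the display above yields $\mathbb{E}[l_{\hat{\theta}_n}(\rx_1,\ry_1)]\le\varepsilon\alpha$. Finally, applying Markov's inequality to the nonnegative variable $l_{\hat{\theta}_n}(\rx_1,\ry_1)$,
$$\mathbb{P}\big(l_{\hat{\theta}_n}(\rx_1,\ry_1)>\varepsilon\big)\;\le\;\frac{\mathbb{E}\big[l_{\hat{\theta}_n}(\rx_1,\ry_1)\big]}{\varepsilon}\;\le\;\frac{\varepsilon\alpha}{\varepsilon}\;=\;\alpha,$$
so that $\mathbb{P}(l_{\hat{\theta}_n}(\rx_1,\ry_1)\le\varepsilon)\ge 1-\alpha$, which is exactly \eqref{eq:overfitting}; thus $\gA_\eta$ is $(\varepsilon,1-\alpha)$-overfitting.

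There is no genuine obstacle here — the result is essentially a Markov bound with the right choice of threshold $\eta=\varepsilon\alpha$. The only point deserving care is the exchangeability step: one must make explicit that the empirical-distribution factorization of $\gA_\eta$, together with the independence of its internal randomness from the data, forces $(\hat{\theta}_n,\rz_j)$ to be identically distributed in $j$, since this is precisely what converts the per-coordinate loss into the average training loss $L_n(\hat{\theta}_n)$ upon taking expectations. I would also remark that only the almost-sure bound $L_n(\hat{\theta}_n)\le\eta$ is used, so no further assumption on $\gA_\eta$ beyond the standing ones is needed.
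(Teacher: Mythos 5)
Your proof is correct and is essentially the paper's argument: both exploit the almost-sure bound $L_n(\hat\theta_n)\le\varepsilon\alpha$ together with exchangeability of $(\hat\theta_n,\rz_j)$ in $j$, and both are at bottom a Markov inequality with threshold $\varepsilon$. The only cosmetic difference is the order of operations — the paper applies the counting/Markov step pointwise to the empirical loss distribution (showing a.s. that at least a $(1-\alpha)$-fraction of training points have loss $\le\varepsilon$) and then takes expectations, whereas you take expectations first and apply Markov to the marginal law of $l_{\hat\theta_n}(\rx_1,\ry_1)$; the two computations are interchangeable here.
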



\noindent We will need an additional hypothesis for the following theorem.

\noindent \textbf{Hypothesis (H1) :} $\gY\coloneqq \R^s$ for some $s\geq1$, and for all $\theta\in\Theta, x\in\gX$ and $y\in\gY$, we have
\begin{equation}
\label{eq:hyp}
    l_\theta(x,y) = \omega(y,\Psi_\theta(x)),
\end{equation}
for some family of functions $\Psi_\theta:\gX\to\R$ and some continuous function $\omega : \R\times\R \to \R$.


\begin{theorem}[Overfitting induces lack of security]
\label{thm:overfitting}
Assume $\gA$ is $(\varepsilon,1-\alpha)$-overfitting for some fixed $(\varepsilon,\alpha)$. 
Then we have
\begin{equation}
\label{eq:overfitting:multi}
\text{Sec}_n(P,\gA) \leq \max(1,1/\gamma)\left(\alpha + \gamma\mathbb{P}\left(l_{\hat{\theta}_n}(\rx,\ry)\leq\varepsilon\right)\right).
\end{equation}

\noindent Assume furthermore that \textbf{H1} holds. Assume that for all $\eta>0$, $\gA_\eta$ stops as soon as $L_n\leq\eta$ and that a version of the conditional distribution of $\ry$ given $\rx$ is absolutely continuous with respect to the Lebesgue measure, then 

\begin{equation}
\label{eq:overfitting:none}
\underset{\eta\to 0^+}{\lim}\text{Sec}_n(P,\gA)=0.
\end{equation}

\end{theorem}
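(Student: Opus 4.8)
The plan is to prove the two displays separately, with the first supplying the tool used in the limiting statement. For the first inequality \eqref{eq:overfitting:multi}: by Theorem \ref{thm:dnu} we have $\text{Sec}_n(P,\gA) = 1 - \dnu$ with $\dnu = D_\gamma(\mathbb{P}_{(\hat\theta_n,\rz_0)}, \mathbb{P}_{(\hat\theta_n,\rz_1)})$. I would lower-bound $\dnu$ by evaluating the $f$-divergence's variational definition \eqref{eq:defdeltanu}--\eqref{eq:defDalpha} on the specific measurable set $B = \{(\theta,z=(x,y)) : l_\theta(x,y) \le \varepsilon\}$. On this set, $\mathbb{P}_{(\hat\theta_n,\rz_1)}(B) = \mathbb{P}(l_{\hat\theta_n}(\rx_1,\ry_1)\le\varepsilon) \ge 1-\alpha$ by the overfitting hypothesis, while $\mathbb{P}_{(\hat\theta_n,\rz_0)}(B) = \mathbb{P}(l_{\hat\theta_n}(\rx,\ry)\le\varepsilon)$ where $(\rx,\ry)$ is a fresh independent sample (this is the "test loss" term). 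Plugging these into the sup defining $\tilde D_\gamma$ and then $D_\gamma$, rearranging, and using $1-\text{Sec}_n = \dnu$ gives \eqref{eq:overfitting:multi}; I'd want to be slightly careful tracking the $\max(1,\gamma)$, the $(1-1/\gamma)_+$ offset, and the factor $\min(1,1/\gamma)$ so the final constants come out as stated.

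For the limiting statement \eqref{eq:overfitting:none}: fix $\varepsilon>0$ and apply Proposition \ref{prop:overfitting} with $\eta = \varepsilon\alpha$, so that $\gA_\eta$ is $(\varepsilon, 1-\alpha)$-overfitting; equivalently, for each target $\alpha$ we choose $\eta$ small. Then \eqref{eq:overfitting:multi} gives $\text{Sec}_n(P,\gA_\eta) \le \max(1,1/\gamma)(\alpha + \gamma\, \mathbb{P}(l_{\hat\theta_n}(\rx,\ry)\le\varepsilon))$. The $\alpha$ term can be made arbitrarily small by choosing $\eta$ small, so the crux is to show that as $\eta\to 0^+$ the test-loss term $\mathbb{P}(l_{\hat\theta_n}(\rx,\ry)\le\varepsilon)$ also vanishes — i.e. an overfitted model achieving near-zero training loss almost surely misclassifies a fresh continuous target. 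Here I would invoke Hypothesis (H1): $l_\theta(x,y) = \omega(y,\Psi_\theta(x))$ with $\omega$ continuous, so $\{l_{\hat\theta_n}(\rx,\ry)\le\varepsilon\}$ forces $\ry$ to lie in a fixed (random, but $\hat\theta_n$-measurable and $\rx$-measurable) sublevel set of $y\mapsto\omega(y,\Psi_{\hat\theta_n}(\rx))$; as $\varepsilon\to 0$ (driven by $\eta\to 0$, which forces the training losses and hence, via H1 and continuity, the relevant $\omega$-values at training points to zero) this sublevel set shrinks toward the zero-set of $\omega(\cdot,\Psi_{\hat\theta_n}(\rx))$, which — being a level set of a continuous function — is Lebesgue-null for a.e. realization. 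Since the conditional law of $\ry$ given $\rx$ is absolutely continuous, conditioning on $(\hat\theta_n,\rx)$ and applying dominated convergence yields $\mathbb{P}(l_{\hat\theta_n}(\rx,\ry)\le\varepsilon)\to 0$.

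Finally I would combine: given $\epsilon_0>0$, pick $\varepsilon$ small enough that the test-loss probability is below $\epsilon_0$, then pick $\eta=\varepsilon\alpha$ with $\alpha$ small enough that $\max(1,1/\gamma)(\alpha+\gamma\epsilon_0)<$ any prescribed bound; letting $\eta\to 0^+$ this shows $\limsup_{\eta\to 0^+}\text{Sec}_n(P,\gA_\eta)=0$, and since $\text{Sec}_n\ge 0$ the limit is $0$. The main obstacle is the measure-theoretic heart of the second part: making rigorous the claim that the shrinking sublevel sets of $\omega(\cdot,\Psi_{\hat\theta_n}(\rx))$ have conditional probability tending to zero uniformly enough to integrate out $(\hat\theta_n,\rx)$ — in particular handling the fact that $\Psi_{\hat\theta_n}(\rx)$ is itself random and that $\omega$'s zero-set could a priori be large on a bad (null) set of arguments. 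Controlling this likely needs the absolute continuity assumption together with a careful dominated-convergence argument, and possibly an auxiliary observation that the training-point constraints pin down $\Psi_{\hat\theta_n}$ enough that the bad set is negligible.
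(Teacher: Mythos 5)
Your proposal follows essentially the same route as the paper: the first inequality is obtained by evaluating the variational form of $\tilde D_\gamma$ on the sublevel set $S^{\varepsilon}=\{(\theta,x,y): l_\theta(x,y)\le\varepsilon\}$ and tracking the $\max(1,\gamma)$ and $(1-1/\gamma)_+$ terms, and the limit is obtained by combining Proposition \ref{prop:overfitting} with the continuity of $\omega$ and the absolute continuity of $\mathbb{P}_{\ry\mid\rx}$ to send the test-loss probability to zero before letting $\alpha\to 0$. The measure-theoretic caveats you flag (the zero-level set of $\omega(\cdot,\Psi_{\hat\theta_n}(\rx))$ and the fact that $\hat\theta_n$ itself varies with $\eta$) are treated no more carefully in the paper's own proof, which simply asserts pointwise convergence for each fixed $\theta$.
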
    

\noindent The second point of Theorem \ref{thm:overfitting} states that for regressors with reasonably low training loss on the dataset, a loss-based MIA $\phi_{\varepsilon}:(\theta,\rz)\mapsto 1_{l_\theta(\rz)\leq\varepsilon}$ would reach high success probability.
This theoretically confirms the already well-known insight that overfitting implies poor security.\\
Hypothesis \textbf{H1} occurs when $\Psi_\theta(x)$ models the conditional expectation of $\ry$ given $\rx$, in a setting where the loss function is defined as a distance between $\Psi_\theta(\rx)$ and $\ry$.
\begin{remark}
    Interestingly, if we only assume Definition \ref{def:overfitting} to hold without Proposition \ref{prop:overfitting} to hold, then a much weaker version of the second point of Theorem \ref{thm:overfitting} still holds. Indeed, for a fixed $\alpha\in(0,1)$, given a sequence of learning procedures $(\gA^{\varepsilon})_{\varepsilon\in\R^+}$ that are $(\varepsilon,1-\alpha)$-overfitting for all $\varepsilon>0$, we have that 
    $\underset{\varepsilon\to0}{\lim}\;\text{Sec}_n(P,\gA^{\varepsilon})\leq \max(1,1/\gamma)\alpha
    $.
\end{remark}
\begin{example}[non-linear regression Neural Network]
\label{ex:nonreglin}
We consider here a (non-linear) regression setting, that is for all $j=1,\ldots,n$, we have $y_j\coloneqq\Psi^*(\rx_j) + \zeta_j$, where $\zeta_j$ is some independent random noise and the function $\Psi^*:\gX\to\R$ is arbitrary, fixed and unknown.
We aim at estimating $\Psi^*$ by some Neural Network $\Psi_\theta\in\gF$, where $\gF$ is some fixed model. For instance $\gF$ can be the set of all 2-layers ReLU neural networks with fixed hidden layer width. The learning procedure $\gA$ then learns by minimizing the MSE loss $L_n\coloneqq\frac{1}{n}\sum_{j=1}^{n}\left(y_j - \Psi_\theta(x_j)\right)^2$. In this case, \eqref{eq:hyp} holds. Under the further assumption that there is an arbitrarily close approximation $\Psi_\theta$ of $\Psi^*$ in $\gF$, one can construct the sequence of learning procedures $(\gA_\eta)_{\eta\in\R^+}$ such that the hypotheses of the second point of Theorem \ref{thm:overfitting} for \eqref{eq:overfitting:none} to hold. Refer to Section \ref{exp} for a numerical illustration.
\end{example}

\begin{example}[Linear regression]
\label{ex:linreg:overfitting}
We assume here a linear regression setting, that is $\gX\coloneqq\R^d$ for some $d\in\mathbb{N}$, and $y_j\coloneqq \beta^T\rx_j + \zeta_j$, where $\zeta_j$ is some independent random noise and $\beta\in\R^d$ is fixed and unknown. Further assuming that $\zeta_j$ is absolutely continuous with respect to the Lebesgue measure, and that $d>n$, both \eqref{eq:overfitting} (with $\varepsilon,\alpha=0$) and \eqref{eq:hyp} hold. Then, the assumptions of the second point of Theorem \ref{thm:overfitting} are satisfied, leading to  $\text{Sec}_n(P,\gA)=0$.


\end{example}

\section{Security is Data Size Dependent}
\label{main_res_mean}
\label{main_res_disc}
In this section, we study the converse, where we aim at understanding when to expect $\mis$ to be close to $1$.
All the proofs of the section can be found in Appendix \ref{sec:proof:means-discrete}.

\subsection{Empirical Mean based Learning Procedures}
\label{subsec:emp-mean}

We first study the case of learning procedures for which the parameters $\hat{\theta}_n$ can be expressed in the form of functions of empirical means (e.g., linear regression with mean-squared error, method of moments...). Specifically, for any (fixed) measurable maps $L:\gZ\to\R^d$ and $F :\R^d\to\R^q$ for some $d,q\in\mathbb{N}$, 
we consider that 
\begin{equation}
\label{eq:funcmean}
{\hat{\theta}_n\coloneqq F\left(\frac{1}{n}\sum_{j=1}^{n}L(z_j)\right)}.
\end{equation}

\noindent Equation \ref{eq:funcmean} states that the parameters are the result of the learning procedure $\gA : (z_1,\cdots,z_n)\mapsto \delta_{F\left(\frac{1}{n}\sum_{j=1}^{n}L(z_j)\right)}$, where $\delta_\theta$ stands for the Dirac mass at $\theta$. We then have the following result.

\begin{theorem}
\label{theo:empmean}
Suppose that the distribution of $L(\rz_1)$ has a non zero absolutely continuous part with respect to the Lebesgue measure, and a third finite moment.
Then
\begin{equation}
\label{eq:empmean}
\mis\geq 1 - \max(1,\gamma)\left(c_{L,P}  + \frac{\sqrt{d}}{2\sqrt{n}}\right)n^{-1/2},
\end{equation}
\noindent for some constant $c_{L,P}$ depending only on $L$ and $P$.
\end{theorem}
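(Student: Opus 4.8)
The plan is to bound $\dnu = D_\gamma\!\left(\mathbb{P}_{(\hat\theta_n,\rz_0)}, \mathbb{P}_{(\hat\theta_n,\rz_1)}\right)$ from above, since by Theorem~\ref{thm:dnu} we have $\mis = 1-\dnu$, so an upper bound on $\dnu$ of the form $\max(1,\gamma)(c_{L,P} + \sqrt{d}/(2\sqrt n))n^{-1/2}$ immediately yields \eqref{eq:empmean}. By Remark~\ref{remark:pinsker} (the inequality $D_\alpha(P,Q)\le\max(1,\alpha)\|P-Q\|_{TV}$), it suffices to control the total variation distance between the joint law of $(\hat\theta_n,\rz_1)$ and the product law $\mathbb{P}_{\hat\theta_n}\otimes\mathbb{P}_{\rz_0}$. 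The key structural observation is that $\hat\theta_n = F\big(\frac1n\sum_j L(\rz_j)\big)$, so by the data processing inequality for total variation it is enough to bound $\|\mathbb{P}_{(S_n,\rz_1)} - \mathbb{P}_{S_n}\otimes\mathbb{P}_{\rz_1}\|_{TV}$ where $S_n := \frac1n\sum_{j=1}^n L(\rz_j)$ (note $\rz_0$ and $\rz_1$ have the same law $P$; only the coupling with $S_n$ differs, and applying the fixed map $F$ to the first coordinate can only decrease TV).

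First I would write $S_n = \frac1n L(\rz_1) + \frac{n-1}{n} S_n'$ where $S_n' := \frac1{n-1}\sum_{j=2}^n L(\rz_j)$ is independent of $\rz_1$. Conditionally on $\rz_1 = z$, the law of $S_n$ is the law of $\frac1n L(z) + \frac{n-1}{n}S_n'$, i.e. a shift of the common distribution of $\frac{n-1}{n}S_n'$ by the vector $\frac1n L(z)$. Thus
\begin{align*}
\|\mathbb{P}_{(S_n,\rz_1)} - \mathbb{P}_{S_n}\otimes\mathbb{P}_{\rz_1}\|_{TV} &\le \mathbb{E}_{z\sim P}\left[\left\|\mathbb{P}_{S_n \mid \rz_1 = z} - \mathbb{P}_{S_n}\right\|_{TV}\right],
\end{align*}
and each inner term is $\big\|\,\mathrm{law}(\tfrac{n-1}{n}S_n' + \tfrac1n L(z))\ -\ \mathrm{law}(\tfrac{n-1}{n}S_n' + \tfrac1n L(\rz_1))\,\big\|_{TV}$ after also averaging the reference over $\rz_1$. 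The point is that we are comparing two distributions that differ by a small translation of size $O(1/n)$ applied to a sum of $n-1$ i.i.d.\ terms, and such a sum has a smooth density.

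The main obstacle — and the place where the hypotheses (nonzero absolutely continuous part, third finite moment) get used — is establishing that translating the law of $\frac{n-1}{n}S_n'$ by a vector of norm $\sim c/n$ changes it by only $O(1/\sqrt n)$ in total variation. Here I would invoke a quantitative local CLT / Edgeworth-type estimate: decompose $L(\rz_1) = $ (absolutely continuous part) $+$ (singular part); the absolutely continuous part with positive mass ensures that after summing $\Theta(n)$ copies the density of $\frac{n-1}{n}S_n'$ exists and, crucially, that its total variation "derivative" is controlled. Concretely, for a sum $T_{n-1}$ of i.i.d.\ vectors one has $\|\mathbb{P}_{T_{n-1}+v} - \mathbb{P}_{T_{n-1}}\|_{TV} \le \|v\|\cdot \mathbb{E}\|\nabla p_{T_{n-1}}\|_{L^1} \lesssim \|v\|\sqrt{n}$ by the CLT scaling (the density of $T_{n-1}$ lives at scale $\sqrt n$, so its $L^1$-normalized gradient is of order $1/\sqrt n$, and multiplying by $n-1$ normalization... ) — after carefully tracking the $\frac{n-1}{n}$ rescaling this produces the $n^{-1/2}$ rate; the third moment is what makes the local CLT error terms summable/uniform. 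Averaging $\|v\| = \frac1n\|L(z) - L(\rz_1)\|$ over $z,\rz_1 \sim P$ contributes a finite constant (finite second moment suffices here, absorbed into $c_{L,P}$), while the $\sqrt d/(2\sqrt n) \cdot n^{-1/2}$ term arises as the dimension-dependent factor in the multivariate local CLT density bound. Assembling: $\|\cdot\|_{TV} \le (c_{L,P} + \sqrt d/(2\sqrt n)) n^{-1/2}$, then multiply by $\max(1,\gamma)$ via the $D_\gamma$–TV comparison, and subtract from $1$.

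I expect the genuinely delicate step to be the uniform-in-$n$ local CLT bound on the $L^1$ norm of the gradient of the density of the normalized partial sum; everything else (data processing for TV, the shift representation, moving from $\rz_0$ to $\rz_1$, the conditioning identity \eqref{eq:dnu-cond}, and the final algebra with $D_\gamma$) is routine. One should also handle the edge case where $F$ or $L$ are such that degeneracies occur, but the absolutely-continuous-part hypothesis is precisely designed to rule those out.
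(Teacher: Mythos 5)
Your reduction matches the paper's exactly up to the key analytic step: bound $\dnu$ by $\max(1,\gamma)$ times a total variation distance, apply the data processing inequality to strip off $F$, and rewrite the resulting quantity as $\mathbb{E}_{\rz_1}\bigl[\|\gL(L_n\mid\rz_1)-\gL(L_n)\|_{TV}\bigr]$, recognizing the conditional law as a small shift of (a rescaling of) the law of the sum of the remaining $n-1$ terms. Where you diverge is in how that expectation is controlled. You propose a direct translation bound, $\|\mathbb{P}_{T+v}-\mathbb{P}_{T}\|_{TV}\lesssim\|v\|\cdot\|\nabla p_{T}\|_{L^1}$ with $\|\nabla p_{T_{n-1}}\|_{L^1}\lesssim\sqrt{n}$, which you correctly flag as the delicate step but do not prove. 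The paper avoids this entirely: after centering and reducing by $C^{-1/2}$ it inserts Gaussians by the triangle inequality, bounding $\|\gL(n^{-1/2}\sum_j\rv_j)-\gN_d(0,\mI_d)\|_{TV}$ and its $(n-1)$-term analogue by $C(d)(1+m_3)n^{-1/2}$ via an off-the-shelf total-variation CLT (Theorem 2.6 of the cited Bally--Caramellino result, which is exactly where the nonzero absolutely continuous part and the finite third moment are consumed), and then computing $\|\gN_d(0,\mI_d)-\gN_d(n^{-1/2}\rv_1,\tfrac{n-1}{n}\mI_d)\|_{TV}\le\tfrac{\sqrt d}{2n}+\tfrac{1}{2\sqrt n}\|\rv_1\|_2$ explicitly. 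Note in particular that the $\sqrt d/(2\sqrt n)\cdot n^{-1/2}$ term in the statement comes from the variance mismatch between the two Gaussians, not from a dimension factor in a local CLT, as you guessed.

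The gap in your route is that the gradient estimate is not available under the stated hypotheses without substantial extra work. The law of $L(\rz_1)$ is only assumed to have a \emph{nonzero} absolutely continuous part, so for finite $n$ the law of $S_n'$ retains a singular component (of mass $(1-\epsilon)^{n-1}$, admittedly exponentially small), and the density of its absolutely continuous part need not be differentiable, let alone have an $L^1$ gradient with the right uniform-in-$n$ scaling. Proving $\|\nabla p_{T_{n-1}}\|_{L^1}=O(\sqrt n)$ under these weak assumptions is essentially equivalent in difficulty to (a derivative-level strengthening of) the total-variation CLT that the paper simply cites; it is not a routine Edgeworth computation. So your outline is a plausible alternative architecture, but as written it replaces the one hard step with an asserted lemma that is at least as hard as what it replaces, whereas the paper's Gaussian sandwich turns that step into two citations and an elementary two-Gaussian computation.
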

\noindent \textbf{Remark \ref{theo:empmean}} : \label{eq:omega} Theorem \ref{theo:empmean} implies that a sufficient condition to ensure $Sec_n(P,\gA)$ to be made larger than $1-\max(1,\gamma)\varepsilon$, is to have  
$n\geq\Omega(\varepsilon^{-2})$. The hidden constant only depends on the distribution data $P$ and the parameters dimension $d$. See Appendix \ref{proof:mean:minn} for a proof.

\noindent We now provide examples for which Theorem \ref{theo:empmean} allows us to give an lower bound on $\mis$.

\begin{example}[solving equations]
We seek to estimate an (unknown) parameter of interest $\theta_0\in\Theta\subseteq\R^d$. We suppose that we are given two functions $h:\Theta\to\R^l$ and $\psi:\gZ\to\R^l$ for some $l\in\mathbb{N}$, and that $\theta_0$ is solution to the equation 
\begin{equation}
\label{eq:gmm}
h(\theta_0) = \E[\psi(\rz)],
\end{equation}
where $\rz$ is a random variable of distribution $P$. Having access to data samples $\rz_1,\ldots,\rz_n$ drawn independently from the distribution $P$, we estimate $\E[\psi(\rz)]$ by $\frac{1}{n}\sum_{j=1}^{n}\psi(\rz_j)$. Assuming that $h$ is invertible, one can set $\hat{\theta}_n = h^{-1}\left(\frac{1}{n}\sum_{j=1}^{n}\psi(\rz_j)\right)$, provided that $\frac{1}{n}\sum_{j=1}^{n}\psi(\rz_j)\in \mathcal{I}m(h)$.
In particular, when $\gZ=\R$, this method generalizes the method of moments by setting $\psi(z) = (z,z^2,\cdots,z^l)$. We then may apply Theorem \ref{theo:empmean} to any estimators obtained by solving equations.
\end{example}

\begin{example}[Linear Regression]
\label{ex:linreg}
We consider here the same framework as in Example \ref{ex:linreg:overfitting}, where $d<n$ (hence Definition \ref{def:overfitting} can not be fulfilled with $\alpha=0$). 
Let $\sX$ be the $d\times n$ matrix whose $i^{th}$ row is $\rx_i$, and $\sY$ be the column vector $(\ry_1,\cdots,\ry_n)^T$. We then recall that the estimator $\hat{\beta}_n$ of $\beta$ is given by
$${\hat{\beta}_n \coloneqq (\sX\sX^T)^{-1}\sX\sY^T}.$$
Based on \eqref{eq:funcmean}, if we set $F(K,b) \coloneqq K^{-1}b^{T}$ and $L((x,y)) \coloneqq ((x^ix^j)_{i,j=1}^d,(x^iy)_{i=1}^d)$, where $x^i$ is the $i^{\text{th}}$ coordinate of $x$, then the estimator can be expressed as follows 
$\hat{\beta}_n = F\left(\frac{1}{n}\sum_{j=1}^{n}L((\rx_j,\ry_j))\right).$ 
\end{example}
Interestingly, we see from Examples \ref{ex:linreg:overfitting} and \ref{ex:linreg} that the security of least squares linear regression estimator is constant $0$ up to $n=d$ (where $d$ is both the dimension of the data and the dimension of the parameters), and then is increasing up to $1$ provided that $n\rightarrow \infty$.

\subsection{Discrete Data Distribution}

\label{subsec:disc}

We now consider the common distribution of the points in the data set to be $P\coloneqq\sum_{k=1}^{K}p_k\delta_{u_k}$ for some fixed $K\in\mathbb{N}\cup\{\infty\}$, some fixed probability vector  $(p_1,\cdots,p_K)$ and some fixed points $u_1,\ldots,u_K$ in $\gZ$.
Without loss of generality, we may assume that $p_k>0$ for all $k\in\{1,\cdots,K\}$. In the discrete distribution setting, we show that the convergence of the MIS toward 1 can occur at different rates (see Appendix \ref{app:diff_rates}), depending on both the algorithm and the underlying data distribution. Therefore, in what follows, we are interested in studying the worst security among all learning procedures, namely ${\min}_\gA\;\mis$.
\begin{theorem}
\label{theo:disc}
For $k=1,\ldots,K$, let $B_k$ be a random variable having Binomial distribution with parameters $(n,p_k)$. Then,
\begin{equation}
\label{eq:minsecdisc}
\underset{\gA}{\min}\;\mis = 1 - \max(1,\gamma)\frac{1}{2\gamma} \sum_{k=1}^{K}\E\left[\left|\frac{B_k}{n}- \gamma p_k\right|\right] + \max(1,\gamma)\frac{1}{2}\left|1-\finv\gamma\right|,
\end{equation}
\noindent where the minimum is taken over all learning procedures and is reached on learning procedures of the form $\gA(z_1,\cdots,z_n) = \delta_{F(\frac{1}{n}\sum_{j=1}^{n}\delta_{z_j})}$ for some injective maps $F$.

\end{theorem}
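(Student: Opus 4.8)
The plan is to reduce $\min_\gA\mis$ to a single expectation using Theorem~\ref{thm:dnu} and the $f$-divergence properties of Proposition~\ref{prop:dnu-props}, and then to evaluate that expectation explicitly for the discrete $P$. By Theorem~\ref{thm:dnu} we have $\mis = 1-\dnu$, so $\min_\gA\mis = 1-\sup_\gA\dnu$, and it suffices to identify $\sup_\gA\dnu$ and exhibit a maximizer. Since $\rz_0$ is drawn independently of $\hat{\theta}_n$, we have $\mathbb{P}_{(\hat{\theta}_n,\rz_0)}=\mathbb{P}_{\hat{\theta}_n}\otimes P=\mathbb{P}_{\hat{\theta}_n}\otimes\mathbb{P}_{\rz_1}$, hence applying the conditioning identity~\eqref{eq:dnu-cond} while conditioning on the first coordinate $\hat{\theta}_n$ gives
\[
\dnu \;=\; D_\gamma\!\left(\mathbb{P}_{\hat{\theta}_n}\otimes\mathbb{P}_{\rz_1},\,\mathbb{P}_{(\hat{\theta}_n,\rz_1)}\right) \;=\; \mathbb{E}_{\hat{\theta}_n}\!\left[D_\gamma\!\left(P,\ \mathbb{P}_{\rz_1\mid\hat{\theta}_n}\right)\right].
\]

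For the upper bound, recall the standing assumption $\hat{\theta}_n = G(\hat{P}_n,\xi)$ with $\xi$ independent of $(\rz_1,\dots,\rz_n)$. The Markov kernel $(\mu,z)\mapsto(G(\mu,\xi),z)$ maps $\mathbb{P}_{(\hat{P}_n,\rz_1)}$ to $\mathbb{P}_{(\hat{\theta}_n,\rz_1)}$ and $\mathbb{P}_{\hat{P}_n}\otimes\mathbb{P}_{\rz_1}$ to $\mathbb{P}_{\hat{\theta}_n}\otimes\mathbb{P}_{\rz_1}$, so the data-processing inequality for the $f$-divergence $D_\gamma$ (Proposition~\ref{prop:dnu-props}) together with \eqref{eq:dnu-cond} gives
\[
\dnu \;\le\; D_\gamma\!\left(\mathbb{P}_{\hat{P}_n}\otimes\mathbb{P}_{\rz_1},\,\mathbb{P}_{(\hat{P}_n,\rz_1)}\right) \;=\; \mathbb{E}\!\left[D_\gamma\!\left(P,\ \mathbb{P}_{\rz_1\mid\hat{P}_n}\right)\right].
\]
By exchangeability of $(\rz_1,\dots,\rz_n)$, conditionally on the empirical measure a single coordinate is a uniform draw from the multiset, so $\mathbb{P}_{\rz_1\mid\hat{P}_n}=\hat{P}_n$ and $\dnu\le\mathbb{E}[D_\gamma(P,\hat{P}_n)]$ for every $\gA$. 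Conversely, if $\gA(z_1,\dots,z_n)=\delta_{F(\hat{P}_n)}$ with $F$ injective, then $\hat{\theta}_n$ and $\hat{P}_n$ generate the same $\sigma$-algebra (on the countable set of empirical measures supported on $\{u_1,\dots,u_K\}$ this needs nothing beyond injectivity), hence $\mathbb{P}_{\rz_1\mid\hat{\theta}_n}=\hat{P}_n$ and the first display yields equality. Therefore $\min_\gA\mis = 1-\mathbb{E}[D_\gamma(P,\hat{P}_n)]$, the minimum being attained exactly on such procedures.

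It remains to compute $\mathbb{E}[D_\gamma(P,\hat{P}_n)]$. Writing $\hat{P}_n=\sum_{k=1}^{K}\tfrac{B_k}{n}\delta_{u_k}$, where $B_k$ is the number of samples equal to $u_k$ (marginally $\mathrm{Binomial}(n,p_k)$) and $\sum_k B_k=n$, the variational formulas \eqref{eq:defdeltanu}--\eqref{eq:defDalpha} evaluated at $(P,\hat{P}_n)$ give
\[
D_\gamma(P,\hat{P}_n)=\max(1,\gamma)\Bigl[\tfrac{1}{\gamma}\textstyle\sum_{k=1}^{K}\bigl(\tfrac{B_k}{n}-\gamma p_k\bigr)_+ + \min\!\bigl(1-\tfrac1\gamma,\,0\bigr)\Bigr].
\]
Taking expectations, using $a_+=\tfrac12(|a|+a)$ termwise (legitimate since $\sum_k\mathbb{E}|\tfrac{B_k}{n}-\gamma p_k|\le 1+\gamma<\infty$) together with $\sum_k\mathbb{E}[\tfrac{B_k}{n}-\gamma p_k]=1-\gamma$, and checking that $\min(1-\tfrac1\gamma,0)+\tfrac{1-\gamma}{2\gamma}=-\tfrac12\bigl|1-\tfrac1\gamma\bigr|$ in both regimes $\gamma\ge1$ and $\gamma<1$, one obtains
\[
\mathbb{E}\bigl[D_\gamma(P,\hat{P}_n)\bigr]=\max(1,\gamma)\,\tfrac{1}{2\gamma}\sum_{k=1}^{K}\mathbb{E}\!\left[\Bigl|\tfrac{B_k}{n}-\gamma p_k\Bigr|\right]-\max(1,\gamma)\,\tfrac12\Bigl|1-\tfrac1\gamma\Bigr|,
\]
and substituting into $\min_\gA\mis = 1-\mathbb{E}[D_\gamma(P,\hat{P}_n)]$ gives \eqref{eq:minsecdisc}.

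The step I expect to demand the most care is the coupling of the conditioning identity with the data-processing inequality so that one and the same kernel simultaneously transports the joint law $\mathbb{P}_{(\hat{P}_n,\rz_1)}$ and the product law $\mathbb{P}_{\hat{P}_n}\otimes\mathbb{P}_{\rz_1}$, together with the clean justification of $\mathbb{P}_{\rz_1\mid\hat{P}_n}=\hat{P}_n$ and the accompanying measurability remark that makes injective $F$ achieve equality. Everything after that is elementary bookkeeping with positive parts, the identity $\sum_k B_k=n$, and the two cases $\gamma\gtrless1$.
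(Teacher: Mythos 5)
Your proof is correct, and it reaches the result by a noticeably cleaner route than the paper's. The paper never invokes the data-processing inequality abstractly: it conditions on $\rz_1$ rather than on $\hat{\theta}_n$, expands $\tilde D_\gamma$ as a supremum over sets $B\subseteq\Theta$, partitions the empirical measures according to the fibers $G^{-1}(\{g\})$, and then carries out an explicit multinomial computation, the key identity being $P^{n-1}(\hat{P}_n^k=Q)=\frac{k_1}{np_1}P^n(\hat{P}_n=Q)$ — which is precisely your statement $\mathbb{P}_{\rz_1\mid\hat{P}_n}=\hat{P}_n$ in Bayes-rule form. The upper bound is then obtained by pushing the supremum inside an expectation (their equation for the positive-part bound) and averaging it with the complementary-set bound, and attainment is shown by making a triangle inequality over the fibers an equality when $F$ is injective. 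Your version packages the same two ingredients — sufficiency of $\hat{P}_n$ plus injectivity for attainment — into one application of the DPI for the $f$-divergence $D_\gamma$ followed by a direct evaluation of $\mathbb{E}[D_\gamma(P,\hat{P}_n)]$ via $a_+=\tfrac12(|a|+a)$; this is shorter, makes the "where information is lost" step explicit, and avoids the multinomial bookkeeping entirely. What the paper's longer route buys is that all objects stay concrete (multinomial coefficients, explicit sets $B$), which it then reuses in Lemma~\ref{lem:deltamax} and in the proof of Corollary~\ref{cor:cp}; your route relies on the unproved-but-true fact that the DPI holds for randomized kernels applied simultaneously to both arguments, and on $\mathbb{P}_{\rz_1\mid\hat{P}_n}=\hat{P}_n$ from exchangeability, both of which you flag and justify adequately. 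The only implicit hypothesis you share with the paper is that $\Theta$ is large enough to admit an injection from the (countable) set of empirical measures — the paper's Lemma~\ref{lem:deltamax} assumes $|\Theta|=\infty$ — but this is already implicit in the theorem statement, so it is not a gap.
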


\noindent Theorem \ref{theo:disc} provides an exact formula to accurately bound $\mis$ for any learning procedure $\gA$, including those that exhibit the most significant leakage. It is shown in Theorem \ref{theo:disc} that the minimum is reached for deterministic learning procedures $\gA$. We show below that the r.h.s. of \eqref{eq:minsecdisc} is tightly related to the 
quantity
\begin{equation}
C_K(P)\coloneqq \sum_{k=1}^K\sqrt{p_k(1-p_k)}.
\end{equation}

\noindent This quantity can thus be exploited to compare leakage between different datasets.

\noindent It is worth noting that $C_K(P)$ is a diversity measure, giving a control on the homogeneity of the data distribution. We show in Appendix \ref{sec:cp} that it is comparable both to the Gini-Simpson and the Shannon Entropy. \\
\noindent Unlike $\lambda$ begin a meta-parameter, the theoretical value of $\nu$ is never known in practice. In the following corollary, we exhibit the security of the worst learning procedure privacy-wise in the worst theoretical setup possible.

\begin{corollary}
\label{cor:cp}
Assume that $C_K(P)<\infty$, $n\geq 5$ and $n>1/p_k$ for all $k=1,\ldots,K$.  Then there exists universal constants $c\geq0.29$ and $c'\leq0.44$ such that 
$$\underset{\nu, \gA}{\min}\;\mis = 1 - \varepsilon_n,$$
\noindent where $\varepsilon_n$ satisfies
$$cC_K(P)n^{-1/2} \leq \varepsilon_n \leq c'C_K(P)n^{-1/2}.$$
\noindent If $C_K(P)<\infty$ but the condition on $n$ does not hold, we still have $\underset{\nu, \gA}{\min} \mis\geq 1 -\frac{C_K(P)}{2}n^{-1/2} $.

\end{corollary}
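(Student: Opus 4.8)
The plan is to feed the exact identity of Theorem~\ref{theo:disc} into an optimization over $\nu$, and then reduce everything to the mean absolute deviation of a binomial. Put $\gamma=\nu/(\lambda(1-\nu))$; the map $\nu\mapsto\gamma$ is an increasing bijection from $(0,1)$ onto $(0,\infty)$, so Theorem~\ref{theo:disc} may be rewritten as
$$\underset{\nu,\gA}{\min}\;\mis = 1 - \sup_{\gamma>0}g(\gamma),\quad\text{where}\quad g(\gamma)\coloneqq\frac{\max(1,\gamma)}{2\gamma}\sum_{k=1}^{K}\E\bigl|\tfrac{B_k}{n}-\gamma p_k\bigr| - \frac{\max(1,\gamma)}{2}\bigl|1-\tfrac1\gamma\bigr|.$$
Thus $\varepsilon_n=\sup_{\gamma>0}g(\gamma)$, and the minimum is attained (over $\gA$ by Theorem~\ref{theo:disc}, over $\nu$ once we locate the maximizing $\gamma$). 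The easy part is the last assertion of the corollary: by Jensen's inequality $\E|B_k-np_k|\le\sqrt{np_k(1-p_k)}$, which we will see gives $\varepsilon_n\le\tfrac12 C_K(P)\,n^{-1/2}$ with no condition beyond $C_K(P)<\infty$.

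The first real step is to show that the supremum is attained at $\gamma=1$, with value $g(1)=\tfrac1{2n}\sum_{k=1}^{K}\E|B_k-np_k|$. For $\gamma\ge1$ one has $\max(1,\gamma)/\gamma=1$; combining the triangle inequality $\E|B_k/n-\gamma p_k|\le\E|B_k/n-p_k|+(\gamma-1)p_k$ with $\sum_kp_k=1$ gives $g(\gamma)\le\tfrac12\sum_k\E|B_k/n-p_k|$, the correction term $\tfrac{\gamma-1}{2}$ cancelling exactly. For $\gamma<1$, writing $|a-b|=(a-b)+2(b-a)_+$ and using $\E[B_k/n]=p_k$ one finds $g(\gamma)=\tfrac1\gamma\sum_k\E(\gamma p_k-B_k/n)_+=\sum_k\E(p_k-B_k/(\gamma n))_+\le\sum_k\E(p_k-B_k/n)_+$, and since $\E[B_k/n-p_k]=0$ this last sum again equals $\tfrac12\sum_k\E|B_k/n-p_k|$. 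Hence $g(\gamma)\le g(1)$ for all $\gamma>0$, the value $g(1)$ being reached (at $\nu=\lambda/(1+\lambda)$), so $\varepsilon_n=\tfrac1{2n}\sum_{k=1}^{K}\E|B_k-np_k|$.

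The second step converts this into a statement about $C_K(P)$. Factoring, $\varepsilon_n=\tfrac1{2\sqrt n}\sum_{k=1}^{K}\sqrt{p_k(1-p_k)}\,r_k$, where $r_k\coloneqq\E|B_k-np_k|/\sqrt{np_k(1-p_k)}$ is the ratio of the mean absolute deviation of $B_k$ to its standard deviation; consequently $a\le r_k\le b$ for all $k$ yields $\tfrac{a}{2\sqrt n}C_K(P)\le\varepsilon_n\le\tfrac{b}{2\sqrt n}C_K(P)$. The unconditional bound is the case $b=1$ from Jensen. For the sharp two-sided bound I would first use the hypotheses to localise the $p_k$: since $n>1/p_k$ for all $k$ and $\sum_kp_k=1$, each $p_j=1-\sum_{k\ne j}p_k<1-(K-1)/n$, so when $K\ge2$ we get $p_j\in(1/n,1-1/n)$ for all $j$ (and $K<n$, in particular finite); the case $K=1$ is degenerate, with $C_K(P)=0$ and both sides of the claimed inequality vanishing. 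It then remains to establish the purely analytic fact that $0.58<r(n,p)<0.88$ for every $n\ge5$ and every $p\in(1/n,1-1/n)$, after which $c=0.29$ and $c'=0.44$ finish the proof. Using the symmetry $B(n,p)\leftrightarrow n-B(n,p)$, the ratio $r(n,p)$ is invariant under $p\mapsto1-p$, so one may assume $p\le1/2$, i.e. $1<np\le n/2$; from there I would invoke de Moivre's classical closed form for $\E|B(n,p)-np|$ (equivalently, a sharp estimate of the type $r(n,p)=\sqrt{2/\pi}\,(1+O(1/n))$), observe that on the admissible window the extremes of $r$ occur either in the Poisson regime $np\downarrow1$, where $r\to2e^{-1}\approx0.736$, or at the smallest admissible size $n=5$, and confirm by a finite direct computation that $0.73<r(n,p)<0.85$ throughout.

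The conceptually load-bearing point is the identity $\sup_{\gamma>0}g(\gamma)=g(1)$: it pinpoints $\nu=\lambda/(1+\lambda)$ as the worst theoretical testing fraction and collapses the whole quantity onto a single binomial mean absolute deviation; happily it is only a short algebraic manipulation. The genuine obstacle is the final step — extracting the explicit universal constants $0.29$ and $0.44$ — which is elementary but fiddly: $r(n,p)$ is not monotone in $p$ (the floor $\lfloor np\rfloor$ produces oscillations) and the small value $n=5$ genuinely governs the extreme cases, so carrying it out cleanly means either citing a sharp binomial mean-absolute-deviation inequality or doing a careful case analysis over the few relevant small $n$, while taking care that the open endpoints $p=1/n$ and $p=1-1/n$ are approached but never attained.
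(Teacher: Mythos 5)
Your reduction to $\gamma=1$ is correct and is actually cleaner than the paper's route: you bound $g(\gamma)\le g(1)$ directly via the triangle inequality (for $\gamma\ge1$) and the identity $g(\gamma)=\sum_k\E\bigl(p_k-B_k/(\gamma n)\bigr)_+$ (for $\gamma<1$), whereas the paper establishes the same conclusion through two separate monotonicity lemmas (its Lemmas \ref{lem:corcp-1} and \ref{lem:corcp-2}). Your localisation $p_k\in(1/n,1-1/n)$, your handling of $K=1$, and your unconditional bound $\varepsilon_n\le\tfrac12 C_K(P)n^{-1/2}$ via Cauchy--Schwarz all match the paper (and your $\nu=\lambda/(1+\lambda)$ is the correct maximiser; the paper's ``$\nu=1/(1+\lambda)$'' is a typo). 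So the first half of the argument is complete.

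The genuine gap is the final step, and you have correctly diagnosed it as the load-bearing one: the uniform two-sided bound $0.58<r(n,p)<0.88$ for all $n\ge5$ and all $p\in(1/n,1-1/n)$ is exactly where the universal constants come from, and your plan for it would not go through as stated. The domain is infinite in $n$, and $r(n,p)$ is non-monotone in both arguments because of the floor $\lfloor np\rfloor$, so the claim that ``the extremes occur either in the Poisson regime $np\downarrow1$ or at $n=5$'' is an unproved assertion, and ``confirm by a finite direct computation'' cannot certify a bound over an infinite family without an accompanying argument that reduces it to finitely many cases. The paper closes this by De Moivre's identity $\E|B-np|=2\psi(m,p)$ with $\psi(m,p)=\binom{n}{m+1}(m+1)p^{m+1}(1-p)^{n-m}$, $m=\lfloor np\rfloor$ (its Lemmas \ref{lem:corcp-3} and \ref{lem:corcp-5}), followed by a two-sided Stirling approximation with explicit error terms applied to $\binom{n}{m+1}$: after factoring out $\sqrt{n}\sqrt{p(1-p)}/\sqrt{2\pi}$, the residual factor is $d\cdot a$ with $d=e^{f(m,\epsilon)}$, $np=m+\epsilon$, and the extremal analysis of $f(m,\epsilon)$ over $m\in\{1,\ldots,n-2\}$, $\epsilon\in[0,1)$ reduces to the boundary cases $f(1,0)$ and $f(n-2,0)$, which are controlled uniformly for $n\ge5$ by a Taylor expansion of $\log(1+\tfrac{1}{n-2})$ (its Lemma \ref{lem:corcp-4}). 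Some argument of this explicit, uniform-in-$n$ kind is needed; without it the constants $c\ge0.29$ and $c'\le0.44$ are not established.
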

Corollary \ref{cor:cp} implies that no matter the theoretical value of $\nu$, a sufficient condition to ensure security larger than $1-\varepsilon$ is to have at least $n\geq(C_K(P)/2\varepsilon)^2$. \\
Importantly, when designing a learning procedure, one can never have access to the true theoretical value $\nu$. Consequently, Corollary \ref{cor:cp} provides a practical way of measuring the security, without requiring unavailable knowledge.\\
\noindent In any case, this result indicates that discretizing the data allows the control on the security of any learning procedure, no matter the specified value of $\lambda$.  The condition $n\geq 1/p_k$ shall be understood as the expected number of samples to observe the atom $u_k$ within a dataset. Indeed, if $\Xi_k$ is the random variable representing the required number of samples to observe the atom $u_k$, then $\Xi_k$ follows a Geometric law with parameter $p_k$, hence its expectation is $\mathbb{E}[\Xi_k] = 1/p_k$. We further discuss it in Section \ref{sec:cp}. Section \ref{exp} illustrates the impact of $C_K(P)$ through numerical experiments. We further discuss this Corollary in Section \ref{sec:cp}.

\section{Numerical Experiments}
\label{exp}
In this section, we propose two  numerical experiments to illustrate our results in Sections \ref{overfitting} and \ref{main_res_disc}. All simulations have been conducted with the Pytorch library. We refer to Appendix \ref{sec:exp} for more details on 
the experiments.

\subsection{Overfitting}
\label{exp:overfitting}

We run three experiments to illustrate to illustrate the results of Section \ref{overfitting}.\\
\noindent \textbf{Regression task.} We run a non-linear regression experiment to illustrate the results of Section \ref{overfitting} and specifically Example \ref{ex:nonreglin}. We then consider the setting of Example \ref{ex:nonreglin} with $\Psi^*(x) = \sin(\pi\beta^T x)$ for some fixed $\beta$. During the training of the neural network $\Psi_{\hat{\theta}_n}$, at each iteration 
we evaluate the fraction of training (validation) data that achieves a loss below $\varepsilon$. Validation data correspond to a set of data independent from the training dataset.

\begin{figure}[ht]
    \includegraphics[width=\linewidth]{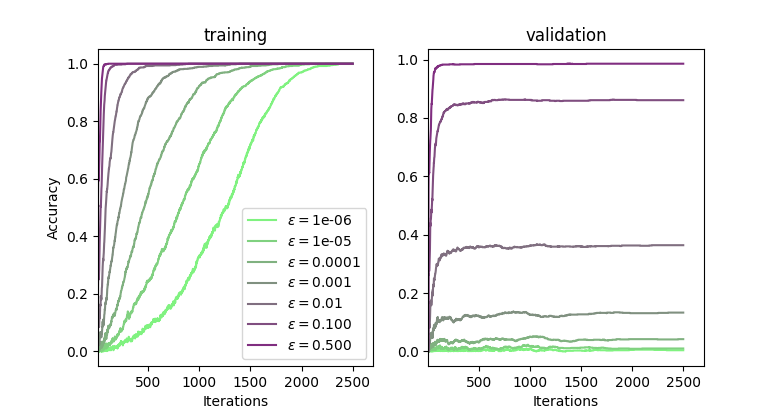}
    \caption{Shows the fraction of the Training/Validation dataset whose loss is under given thresholds during the training process. The left figure shows the \textbf{training accuracy}, and the right figure shows the \textbf{validation accuracy}.}
    \label{fig:overfitting}
\end{figure}

\begin{figure}
\centering
\begin{subfigure}{.5\textwidth}
  \centering
  \includegraphics[width=\linewidth]{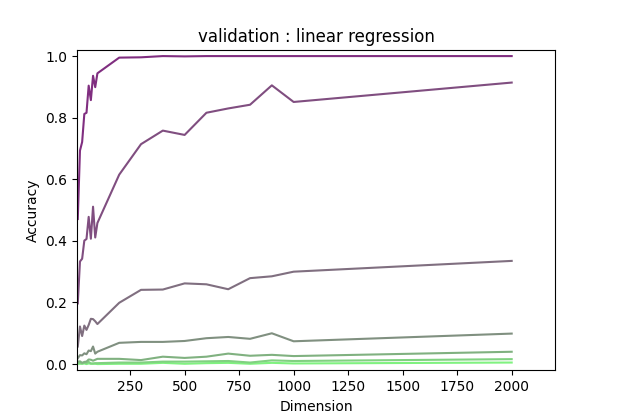}
  \caption{Linear regression.}
  \label{fig:overfitting-RegLin}
\end{subfigure}%
\begin{subfigure}{.5\textwidth}
  \centering
  \includegraphics[width=\linewidth]{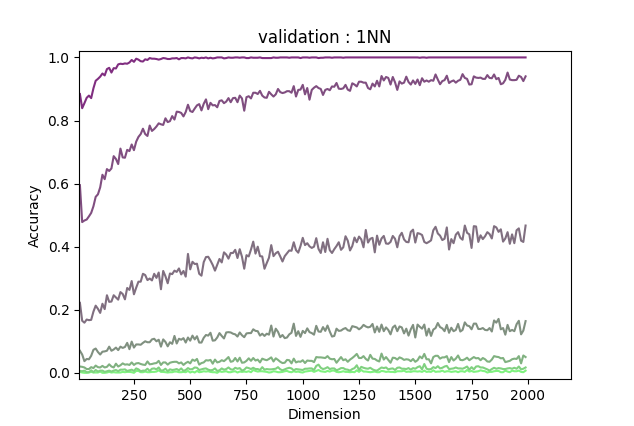}
  \caption{Nearest Neighbors (k=1)}
  \label{fig:overfitting-KNN}
\end{subfigure}
\caption{Shows the fraction of the Validation dataset whose loss is under given thresholds at the end of the training process for different dimensions. The left figure shows the \textbf{validation accuracy} for the linear regression model , and the right figure shows the \textbf{validation accuracy} for the nearest neighbors model.}
\label{fig:test}
\end{figure}



\noindent Figure \ref{fig:overfitting} illustrates Theorem \ref{thm:overfitting} by showing that for very small values of threshold ($\varepsilon = 10^{-6}$), we still reach $100\%$ training accuracy after $2500$ iterations whereas the validation accuracy (for $\varepsilon = 10^{-6}$) stabilizes at near $0\%$. \\

\noindent To illustrate Example \ref{ex:linreg:overfitting}, we train multiple linear regressors until complete overfitting (training loss near $0$) and evaluate it at the end of its training. We perform this evaluation for various input dimensions from $10$ to $2000$. For each dimension, we evaluate the linear regressor and report the fraction of the validation dataset whose loss in under given thresholds. Figure \ref{fig:overfitting-RegLin} illustrates Theorem \ref{thm:overfitting} by showing that for very small values of threshold ($\varepsilon=10^{-6}$), the validation accuracy remains near $0\%$ for any dimension.\\
\noindent Therefore, simple loss-based MIA with threshold $\varepsilon$ would suffice to accurately predict membership most of the time both for the non-linear regression and the linear regression tasks. The number of iterations being generally unknown to the MIA, the calibration of $\varepsilon$ is a hard task to perform. Even though it seems that the loss-based attack with threshold $\varepsilon=10^{-6}$ is a good candidate to achieve near perfect guess, it is worth noting that it would occur only if at least 2000 iterations (for the non-linear regression task) have been done during the training procedure.\\
\noindent \textbf{Nearest Neighbors.} We additionally illustrate overfitting through a $k-$Nearest Neighbors model, with $k=1$. Similarly to the linear regression task, we perform the evaluation for various input dimensions from $10$ to $2000$. By construction of the Nearest Neighbors algorithm, the error main by the predictor on the training dataset is always $0$. Figure \ref{fig:overfitting-KNN} shows a similar behavior as for the regression tasks, meaning that simple loss-based MIAs would also suffice to to accurately predict membership.

\subsection{Impact of $C_K(P)$ on accuracy}
\label{exp:cp}
Corollary \ref{cor:cp} indicates that discretizing the data distribution improves the security of the model. To illustrate the impact of a discretization through the constant $C_K(P)$, we trained several 3-layers neural networks to classify samples from the MNIST dataset \citep{deng2012mnist}.
Before training, we fixed three discretizations\footnote{Many clustering algorithms exist, but we did not aim at optimizing the choice of the discretizations.} (clusterings). For each dataset (with varying size $n$), we trained a neural network on it, and three other neural networks on discretized versions of the original dataset (one for each clustering). We then numerically computed the quantity $C_K(P)$ for each discretization.
 \begin{table}[htbp]
    \centering
    \resizebox{1\textwidth}{!}{
        \begin{tabular}{c c c c c}
        \toprule
        n & raw dataset & $C_K(P)=4.3$ & $C_K(P)=6.74$ & $C_K(P)=9.20$ \\
        \midrule
        $1000$ & $0.989 \pm 0.0011 $ & $0.963 \pm 0.0223$  $(\Delta_{\nu,\lambda,n} \leq 0.07)$ & $0.968 \pm 0.0137$  $(\Delta_{\nu,\lambda,n} \leq 0.11)$ & $0.986 \pm 0.0039$  $(\Delta_{\nu,\lambda,n} \leq 0.15)$ \\
             $5000$ & $0.993 \pm 0.0012 $ & $0.967 \pm 0.0184$  $(\Delta_{\nu,\lambda,n} \leq 0.03)$ & $0.971 \pm 0.0282$  $(\Delta_{\nu,\lambda,n} \leq 0.05)$ & $0.984 \pm 0.0055$  $(\Delta_{\nu,\lambda,n} \leq 0.07)$ \\ 
                       $10000$ & $0.994 \pm 0.0006 $ & $0.971 \pm 0.0141$  $(\Delta_{\nu,\lambda,n} \leq 0.02)$ & $0.977 \pm 0.0082$  $(\Delta_{\nu,\lambda,n} \leq 0.03)$ & $0.984 \pm 0.0055$  $(\Delta_{\nu,\lambda,n} \leq 0.05)$ \\ 
        \bottomrule
        \\

        \end{tabular}
    }
    \caption{Shows the accuracy of classifiers on MNIST dataset. The column \textbf{n} displays the dataset size. The column \textbf{raw dataset} displays the accuracy of the neural network on the original dataset. Each column of the column \textbf{discretized datasets} displays the accuracy of a neural network on the discretized dataset associated to the constant $C_K(P)$.}
    \label{fig:cp}
\end{table}
Table~\ref{fig:cp} shows the accuracy of all neural networks, and displays the impact of the discretization on the accuracy, depending on $n$ and the value of $C_K(P)$. For a dataset of size $n=1000$, our neural network reaches an accuracy of $0.989$ when trained on the original dataset. When discretizing, Table \ref{fig:cp} displays a loss of almost $2.5\%$ of accuracy for the discretization having $C_K(P) = 4.30$, and a loss of $2\%$ for the other discretizations. As discussed in Section \ref{sec:cp}, increasing the number of clusters will increase the value of $C_K(P)$. Table~\ref{fig:cp} displays the intuition that smaller discretization (smaller $C_K(P)$) will lower simultaneously the accuracy and the quantity $\dnu$, which motivates the need to optimize the discretization to find a trade-off between security and accuracy.

\section{Summary and Discussion}
\label{ccl}
The findings presented in this article open gates to the theoretical understanding of MIAs, and partially confirm some of empirically observed facts. Specifically, we confirmed that overfitting indeed induces the possibility of highly successful attacks. We further revealed a sufficient condition on the size of the training dataset to ensure control on the security of the learning procedure, when dealing with discrete data distributions or functionals of empirical means. We established that the rates of convergence consistently follow an order of $n^{-1/2}$. The constants established in the rates of convergence scale with the number of discrete data points and the dimension of the parameters in the case of functionals of empirical means. Interestingly, for discrete data, the diversity measure $C_K(P)$ highlights the use of data quantization to ensure privacy by design.\\
\noindent
The security measure $\mis$ can be estimated by approximating the optimal accuracy-maximizing attack with a binary classifier (e.g., a feed-forward neural network) and evaluating its classification accuracy. The optimal attack is theoretically known and is given by $\phi^*(\theta,\rz) = 1_{(\theta,\rz)\in B^*}$, where $B^*$ denotes the set reaching the maximum in the definition. However, determining this set explicitly depends on the densities of the joint and product distributions between $\hat{\theta}_n$ and $\rz_1$. Direct evaluation of these densities is computationally demanding, highlighting the necessity of practical estimation approaches. We therefore believe that a thorough investigation into the estimation of $\mis$ constitutes a substantial research effort that requires a dedicated full-length study.




\noindent In Section \ref{main_res_disc}, our work is currently limited to the discrete data (e.g., tabular datasets) and empirical mean based learning procedures. We intend to extend further our research to the complete study of maximum likelihood estimation, empirical loss minimization and Stochastic Gradient Descent. Moreover, we studied the worst possible learning procedure by taking the supremum over all learning procedures. An interesting question would be to analyze the security of specific sets of learning procedures. We leave this to further works.

\noindent Furthermore, we aim to explore the optimization of the trade-off discussed at the conclusion of Section \ref{exp:cp}.

\noindent Our findings about overfitting learning procedures do not 
cover classification learning procedures. We anticipate continuing our research in this direction to gain a comprehensive understanding of the impact of overfitting. 
Currently, we are able to extend these results to a minor scenario (see Appendix \ref{sec:overfitting} Proposition \ref{prop:classif}). We anticipate that these assumptions may be relaxed in future investigations. \\
\noindent We have also assumed the data to be \textit{i.i.d.}. Often, concentration inequalities and asymptotic theorems over \textit{i.i.d.} samples have their counterparts in the dependent setting. Assuming some dependence in the data is therefore an interesting way to extend our results.

\begin{acks}
Elisabeth Gassiat is supported by Institut Universitaire de France, and by the Agence Nationale de la Recherche under projects ANR-21-CE23-0035-02 and ANR-23-
CE40-0018-02.\\

\noindent We sincerely thank the reviewers for their valuable and constructive feedback.
\end{acks}

\newpage 
\appendix
\section{More comments on Section \ref{prob_form}}
\label{sec:comments-setup}
We further discuss here the assumption that the learning procedure shall be expressed as a function of the empirical distribution of the training dataset. \\
Usual definition of a learning procedure $\gA$ asks for its domain to be $\bigcup_{n\geq1}\gZ^n$ which is similar to identifying it with a sequence of learning procedures $\left(\gA_n : \gZ^n\to\gP'\right)_{n\geq1}$ where for each $n\geq1$ we have that the restriction of $\gA$ to $\gZ^n$ coincides with $\gA_n$. However, this definition allows not specifying its behavior through all values of $n$, and specifically having drastically different behaviors for different values of $n$. To rigorously study the characteristics of a learning procedure, it is natural to ask that its behavior is similar for all values of $n$, meaning that its behavior can be defined independently of $n$.\\
Our assumption solves this issue, as the function $G:\gM\to\gP'$ from Section \ref{prob_form} is defined for all discrete distribution on $\gZ$. Furthermore, it is worth noting that this assumption holds for all learning procedures aiming at minimizing the empirical cost on its training dataset. For most learning procedures, it will still hold even when some weights are applied to the samples. Indeed, changing the distribution $P$ by $P'$ for some other distribution $P'$, the training dataset size $n$ by some other integer $n'\in\mathbb{N}$ and adequately adapt the learning procedure $\gA$ into another learning procedure $\gA'$ to take into account the changes,  makes the study still valid as long as we consider $\Delta_{\nu,\lambda,n'}(P',\gA')$ instead of $\dnu$. Therefore, it is sufficient to conduct the study under this hypothesis.\\
In particular, this assumption treats all points similarly, and is invariant with respect to the redundancy of the whole dataset. It is equivalent to saying that the learning procedure is \textbf{symmetric} and \textbf{redundancy invariant}, whose definitions are given below.

\begin{definition}[Symmetric Map]
\label{def:symm}
Given two sets $\gZ_1$ and $\gZ_2$ and an integer $k$, a map $f:\gZ_1^k\to \gZ_2$ is said to be \textbf{symmetric} if for any $(a_1,\cdots,a_k)\in \gZ_1^k$ and any permutation $\sigma$ on $\{1,\cdots,k\}$, we have 
$$
f(a_1,\cdots,a_k) = f\left(a_{\sigma(1)},\cdots,a_{\sigma(k)}\right).
$$
\end{definition}

\begin{definition}[Redundancy Invariant Map]
\label{def:redun}
Given two sets $\gZ_1$ and $\gZ_2$, a map $f:\bigcup_{k>0}\gZ_1^k\to \gZ_2$ is said to be \textbf{redundancy invariant} if for any integer $m$ and any $\va=(a_1,\cdots,a_m)\in \gZ_1^m$, we have 

$$f(\va) = f(\va,\cdots,\va).$$

\end{definition}

\noindent We summarize the last claim in the following proposition.

\begin{proposition} 
\label{prop:G}
Let $f : \bigcup_{k>0}\gZ^k\to\gZ'$ be a measurable map onto any space $\gZ'$. Then the following statements are equivalent

\begin{itemize}
    \item[(i)] $f$ is redundancy invariant and for any $k\in\mathbb{N}$, the restriction of $f$ to $\gZ^k$ is symmetric. 
    \item[(ii)] There exists a function $G:\gM\to\gZ'$ such that for any $k\in\mathbb{N}$, for any $(z_1,\cdots,z_k)\in\gZ^k$ we have $f(z_1,\cdots,z_k) = G\left(\frac{1}{k}\sum_{j=1}^{k}\delta_{z_j}\right)$. 
\end{itemize}
\end{proposition}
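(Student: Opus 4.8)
The plan is to prove the two implications separately; (ii) $\Rightarrow$ (i) is immediate, and all the work is in (i) $\Rightarrow$ (ii). For (ii) $\Rightarrow$ (i): assuming $f(z_1,\dots,z_k)=G\bigl(\tfrac1k\sum_{j=1}^k\delta_{z_j}\bigr)$, permuting the arguments of $f$ does not change the multiset $\{z_1,\dots,z_k\}$, hence not the empirical measure $\tfrac1k\sum_j\delta_{z_j}$, so each restriction of $f$ to $\gZ^k$ is symmetric; and for any number $r$ of repetitions the empirical measure of the $rk$-tuple $(z_1,\dots,z_k,\dots,z_1,\dots,z_k)$ is $\tfrac1{rk}\sum_{i=1}^r\sum_{j=1}^k\delta_{z_j}=\tfrac1k\sum_j\delta_{z_j}$, so $f$ is redundancy invariant.

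For (i) $\Rightarrow$ (ii) I would let $\gM_0\subseteq\gM$ denote the set of empirical measures, i.e.\ the $\mu\in\gM$ expressible as $\tfrac1k\sum_{j=1}^k\delta_{z_j}$ for some $k$ and some $(z_1,\dots,z_k)\in\gZ^k$ (equivalently, the finitely supported $\mu$ whose atom masses are rational with a common denominator), and define $G$ on $\gM_0$ by $G(\mu):=f(z_1,\dots,z_k)$ for any such representing tuple, extending $G$ to $\gM\setminus\gM_0$ by an arbitrary fixed value of $\gZ'$ (if $\gZ'=\emptyset$ then $\bigcup_{k>0}\gZ^k=\emptyset$ and the statement is vacuous). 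With this $G$ the identity in (ii) holds by construction, so everything comes down to checking that $G$ is well defined on $\gM_0$.

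The crux is therefore the following well-definedness claim: if $(z_1,\dots,z_k)$ and $(z'_1,\dots,z'_{k'})$ have the same empirical measure $\mu$, then $f(z_1,\dots,z_k)=f(z'_1,\dots,z'_{k'})$. To prove it, for each atom $a$ in the finite common support of $\mu$ let $m_a$ and $m'_a$ be the numbers of occurrences of $a$ in the two tuples; then $m_a/k=m'_a/k'=\mu(\{a\})$, so $k'm_a=k\,m'_a$ for every $a$. Concatenating $k'$ copies of $(z_1,\dots,z_k)$ and, separately, $k$ copies of $(z'_1,\dots,z'_{k'})$ yields two tuples in $\gZ^{kk'}$ in which every atom $a$ occurs exactly $k'm_a=k\,m'_a$ times; hence one is a permutation of the other. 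Symmetry of $f$ on $\gZ^{kk'}$ forces these two values to coincide, and redundancy invariance identifies the first with $f(z_1,\dots,z_k)$ and the second with $f(z'_1,\dots,z'_{k'})$, which proves the claim and hence the proposition.

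I expect this last step to be the only real obstacle, and the point to get right is how symmetry and redundancy invariance are used in tandem: symmetry only equates rearrangements at a \emph{fixed} arity, so redundancy invariance is exactly what is needed to lift both samples to the common length $kk'$, where they become rearrangements of one another. (The statement only asks for a function $G$; measurability is not required, though when $\gZ$ is standard Borel one can additionally take $G$ measurable by fixing a measurable selection of representing tuples.)
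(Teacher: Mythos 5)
Your proof is correct and takes essentially the same approach as the paper: define $G$ on empirical measures via representing tuples and verify well-definedness using symmetry and redundancy invariance in tandem. The only cosmetic difference is that the paper normalizes each empirical measure to a canonical minimal tuple by dividing occurrence counts by their gcd, whereas you compare two arbitrary representatives directly by lifting both to the common length $kk'$; the underlying mechanism is identical.
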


\begin{proof}[Proof of Proposition \ref{prop:G}]
We only prove that $(i)$ implies $(ii)$. The fact that $(ii)$ implies $(i)$ is straightforward. \\ 
Let $f : \bigcup_{k>0}\gZ^k\to\gZ'$ be a measurable map satisfying condition $(i)$. Let $\gM^{\text{emp}}$ be the set of all possible empirical distributions, that is the subset of $\gM$ containing all $\frac{1}{k}\sum_{j=1}^{k}\delta_{z_j}$ for all integer $k$ and all $(z_{1},\cdots,z_{k})\in \gZ^{k}$. We shall define $G$ on $\gM^{\text{emp}}$ such that $(ii)$ holds true.\\
For any $Q\in \gM^{\text{emp}}$, let $\{z_{1},\cdots,z_{m}\}$ be its support and $q_{1},\ldots,q_{m}\in(0,1)$ be such that $Q=\sum_{j=1}^{m}q_{j}\delta_{z_j}$.   Since $Q$ is an empirical distribution, there exists positive integers $k_{1},\ldots,k_{m}$ (for each $j$, $k_{j}$ is the number of occurences of $z_{j}$ in the sample from which $Q$ is the empirical distribution) such that $q_{j}=\frac{k_{j}}{K}$, with $K=\sum_{j=1}^{m}k_{j}$.\\
Let $r=gcd(k_{1},\ldots,k_{m})$ be the greatest common divisor of the $k_j$'s and define $k'_{j}=k_{j}/r$ for $j=1,\ldots,m$. Then with $K'\coloneqq\sum_{j=1}^{m}k'_{j}$, we have $q_{j}=\frac{k'_{j}}{K'}$.\\
Now, for any other sequence of positive integers $\ell_{1},\ldots,\ell_{m}$ such that $q_{j}=\frac{\ell_{j}}{L}$, with $L=\sum_{j=1}^{m}\ell_{j}$, we get for all $j$, $\ell_{j}=s k'_{j}$ with $s=gcd(\ell_{1},\ldots,\ell_{m})$. Thus we may define 
$G(Q)=f (\vz)$ where $\vz$ is the dataset consisting of all $z_{j}$'s with $k'_{j}$ repetitions.\\ 
We now prove that such a $G$ satisfies $(ii)$. Indeed, for any integer $k$ and any
$Z \coloneqq (z'_1,\cdots,z'_k)\in\gZ^k$, define  $V\coloneqq((\ell_1,z_1),\cdots,(\ell_m,z_m))$
where $(z_1,\cdots,z_m)$ are the distinct  elements of $Z$ and $(\ell_1,\cdots,\ell_m)$ are their occurrences. Define $r$ as their greatest common divisor, and $(k_{1},\ldots,k_{m})=(\ell_1,\cdots,\ell_m)/r$. By using the fact that $f$ is symmetric and redundancy invariant, we get that $f(Z)=f(Z_{0})=G(Q)$ where $Z_{0}$ is the dataset  consisting of all $z_{j}$'s with $k_{j}$ repetitions and $Q=\sum_{j=1}^{m}\frac{k_{j}}{K}\delta_{z_j}=\frac{1}{n}\sum_{j=1}^{n}\delta_{z'_j}$. Thus $(ii)$
 holds.    
\end{proof}

\section{More comments on Overfitting}
\label{sec:overfitting}
In this section, we give an extension of Theorem \ref{thm:overfitting} to the setting of classification.

\noindent The second point of Theorem \ref{thm:overfitting} requires the absolute continuity of the distribution of the label with respect to the Lebesgue measure, which makes it not straightforward to extend it to classifiers.\\
We discuss here one very specific framework in which we have been able to extend our results to the classification setting. The framework and the assumptions are all inspired from \citet{vardi2022gradient}.\\

\noindent We assume that the data space is restrained to the binary classification setting with data in the sphere of radius $\sqrt{s}$, i.e. $\gZ\coloneqq\left(\sqrt{s}\sS^{s-1}\right)\times\{-1,1\}$ where $\sS^{s-1}$ is the unit sphere in $\R^{s}$. We assume our data $(\rz_1,\cdots,\rz_n)\coloneqq((\rx_1,\ry_1),\cdots,(\rx_n,\ry_n))$ to be independently drawn on $\gZ$ from a distribution $P$. We assume that the conditional law of $\rx_1$ given $\ry_1$ is absolutely continuous with respect to the Lebesgue measure on $\sqrt{s}\sS^{s-1}$. We denote by $\gH$ the latter hypothesis.
Let $\Psi_\theta(x) = \sum_{j=1}^{l}v_j\sigma(w_j^T x + b_j)$ be a $2-$ReLU network with parameters $\theta$, i.e. $\theta = (v_j, w_j, b_j)_{j=1}^{l}$ with $l\in\sN$ the width of the network and $\sigma(u) = \max{(u,0)}$. We aim at learning a classifier $\Psi_{\hat{\theta}_n}$ on the data by minimizing 

\begin{equation}
\label{eq:min:classif}
\gL : \theta\mapsto \sum_{j=1}^{n}l(y_j\Psi_\theta(x_j)),
\end{equation}

\noindent where $l:\R\to\R^+$ is either the exponential loss or the logistic loss. To reach the objective, we apply Gradient Flow on the objective \eqref{eq:min:classif}, producing a trajectory $\theta_n(t)$ at time $t$. From \citet{vardi2022gradient} Theorem 3.1, there exists a $2-$ReLU network classifying perfectly the training dataset, as long as $\underset{i\not=j}{\max}\left\{|\rx_i^T\rx_j|\right\}<d$, which holds almost surely by $\gH$. Let the initial point $\theta_n(0)$ be the parameters of this network.

\noindent Then by \citet{vardi2022gradient} Theorem 2.1, paraphrasing \citet{lyu2019gradient, ji2020directional}, $\frac{\theta_n(t)}{\|\theta_n(t)\|}$ converges as $t$ tends to infinity to some vector $\Bar{\theta}_n$ which is colinear to some KKT point of the following problem 

\begin{equation}
\label{eq:kkt}
\underset{\theta}{\min}\frac{1}{2}\|\theta\|^2\;\text{ s.t. }\;\forall i=1,\ldots,n;y_i\Psi_\theta(x_i)\geq1.
\end{equation}

\noindent Conditional to the event $E\coloneqq"\underset{i\not=j}{\max}\left\{|\rx_i^T\rx_j|\right\}\leq\frac{s+1}{3n}-1"$, by \citet{vardi2022gradient} Lemma C.1 we get that for all $j=1,\ldots,n$, we have

\begin{equation}
\label{eq:classif:maxmargin}
\ry_j\Psi_{\Bar{\theta}_n}(\rx_j) = \lambda(\rz_1,\cdots,\rz_n),
\end{equation}

\noindent for some $\lambda(\rz_1,\cdots,\rz_n)>0$.

\noindent We consider our learning procedure $\gA$ to output $$\gA(\rz_1,\cdots,\rz_n) = \hat{\theta}_n\coloneqq\frac{\Bar{\theta}_n}{\sqrt{\lambda(\rz_1,\cdots,\rz_n)}},$$

\noindent which gives the same classifier as with $\Bar{\theta}_n$.

\noindent We then get the following result.

\begin{proposition}
\label{prop:classif}
Assume that $l\geq n$ and let $C\coloneqq \underset{i\not=j}{\max}\left\{|\rx_i^T\rx_j|\right\}$. Then, there exists an initialization $\theta_n(0)$ of the gradient flow for which it holds that 

$$\dt \geq \flg \mathbb{P}\left(C \leq \frac{s+1}{3n}-1\right) + \left(1-\flg\right).$$
\noindent Moreover, if the marginal distribution of $\rx$ is the uniform distribution on $\sqrt{s}\sS^{s-1}$, then 
$$ \mathbb{P}\left(C \leq \frac{s+1}{3n}-1\right) \geq 1 - s^{3-ln(s)/4},$$
\noindent as soon as $n\leq\frac{1}{3} \frac{s+1}{\sqrt{s}ln(s)+1}.$

\end{proposition}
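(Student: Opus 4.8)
The plan is to combine the lower-bound technology of Theorem \ref{thm:overfitting} (specifically its first display, \eqref{eq:overfitting:multi}, which lower-bounds $\mathrm{Sec}$ via an overfitting event, and which upon unpacking the definition $\mathrm{Sec}_n = 1 - \dnu$ gives a lower bound on $\tilde D_\gl$) with the concrete margin identity \eqref{eq:classif:maxmargin} that holds on the event $E = \{C \le \tfrac{s+1}{3n}-1\}$. First I would observe that, by construction, the learning procedure $\gA$ outputs $\hat\theta_n = \bar\theta_n/\sqrt{\lambda(\rz_1,\dots,\rz_n)}$, so that on $E$ every training point satisfies $\ry_j \Psi_{\hat\theta_n}(\rx_j) = \sqrt{\lambda(\rz_1,\dots,\rz_n)}\cdot \ry_j\Psi_{\bar\theta_n}(\rx_j)/\sqrt{\lambda} $ — more precisely, the homogeneity of the $2$-ReLU network $\Psi_\theta$ of degree... one needs to be careful here: $\Psi_\theta$ is $2$-homogeneous in $\theta$ only if we scale $(v_j,w_j,b_j)$ jointly, but $\sigma$ is positively homogeneous of degree $1$, so $\Psi_{c\theta}(x) = c^2 \Psi_\theta(x)$ when $c>0$. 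Hence dividing $\bar\theta_n$ by $\sqrt{\lambda}$ turns the common margin value $\lambda$ into $1$, i.e. $\ry_j\Psi_{\hat\theta_n}(\rx_j) = 1$ for all $j$ on $E$. This gives a deterministic, data-independent target value for the training margin, which is exactly the hook needed: the loss-threshold MIA $\phi_\varepsilon:(\theta,\rz)\mapsto \mathbf 1_{\ry\Psi_\theta(\rx)\ge 1}$ (or equivalently $\mathbf 1_{l(\ry\Psi_\theta(\rx))\le l(1)}$) perfectly flags every training point on $E$.

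Next I would plug this into the accuracy bound of Theorem \ref{thm:dnu}, or directly mimic the proof of Theorem \ref{thm:overfitting}: the set $B^\ast = \{(\theta,z) : y\Psi_\theta(x)\ge 1\}$ is a candidate $B$ in the supremum defining $\tilde D_\gl\bigl(\mathbb P_{(\hat\theta_n,\rz_0)},\mathbb P_{(\hat\theta_n,\rz_1)}\bigr)$. On the event $E$ we have $\mathbb P_{(\hat\theta_n,\rz_1)}(B^\ast) \ge \mathbb P(E)$ since $\rz_1$ is a genuine training point whose margin is then exactly $1$. For the non-member term $\mathbb P_{(\hat\theta_n,\rz_0)}(B^\ast)$, I would use hypothesis $\gH$ — the conditional law of $\rx$ given $\ry$ is absolutely continuous on $\sqrt s\,\sS^{s-1}$ — together with the fact that $\{x : y\Psi_{\hat\theta_n}(x)\ge 1\}$, for fixed $\hat\theta_n$, is (the intersection of the sphere with) a finite union of regions bounded by the ReLU hyperplanes; conditionally on $\hat\theta_n$ and on $\rz_0$ drawn independently, one argues this set has the "right" structure so that the independent fresh point $\rz_0$ lands in it only with the measure-zero / small probability that one discards. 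Actually the cleanest route, matching \eqref{eq:defdeltanu}, is: $\tilde D_\gl(P',Q') \ge \tfrac1\gl[\gl Q'(B^\ast) - P'(B^\ast)] + (1-\tfrac1\gl)$ taking $B=B^\ast$, so $\tilde D_\gl \ge \tfrac1\gl\mathbb P(E) - \tfrac1\gl\,\mathbb P_{(\hat\theta_n,\rz_0)}(B^\ast) + (1-\tfrac1\gl)$; it then remains to show $\mathbb P_{(\hat\theta_n,\rz_0)}(B^\ast)=0$, i.e. that a fresh independent point almost surely does not achieve margin exactly $\ge 1$ — wait, it could exceed $1$. Here I would instead take $B^\ast = \{(\theta,z): y\Psi_\theta(x) = \lambda\text{-value}\}$... this is the delicate point; the honest fix is that the relevant event on the member side gives $\mathbb P(E)$ and on the non-member side the set $\{y\Psi_\theta(x)\ge 1\}$ is handled by noting $\Psi_{\hat\theta_n}$ is non-constant and piecewise-affine on the sphere so its level/super-level set relative to an absolutely continuous independent $\rx$ can be bounded; since we only need a lower bound on $\tilde D_\gl$ of the stated form $\tfrac1\gl\mathbb P(C\le\cdots) + (1-\tfrac1\gl)$, I would show the non-member contribution is $\le 0$ effectively, i.e. absorb it, so the bound follows. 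This matches the shape of \eqref{eq:overfitting:multi} after substituting $\mathrm{Sec}_n = 1-\dnu$ and $\dnu = \max(1,\gl)[\tilde D_\gl - (1-1/\gl)_+]$, with $\gl=\gamma>$ possibly $<1$ so that $(1-1/\gamma)_+ = 0$ and the claimed inequality $\dt \ge \tfrac1\gamma\mathbb P(E) + (1-\tfrac1\gamma)$ is exactly the bound on $\tilde D_\gamma$.

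For the second half, I would invoke \citet{vardi2022gradient} (the referenced Lemma C.1 and the spherical-cap concentration they use) to lower-bound $\mathbb P(C \le \tfrac{s+1}{3n}-1)$ when $\rx$ is uniform on $\sqrt s\,\sS^{s-1}$. Concretely, $C = \max_{i\ne j}|\rx_i^T\rx_j|/... $ well $\rx_i^T\rx_j$ has the law of $s\langle U_i,U_j\rangle$ for independent uniform $U_i$ on $\sS^{s-1}$, and $\langle U_i, U_j\rangle$ is a scalar whose tail satisfies $\mathbb P(|\langle U_i,U_j\rangle| \ge t) \le c\,(1-t^2)^{(s-1)/2} \le c\,e^{-(s-1)t^2/2}$; a union bound over the $\binom n2$ pairs gives $\mathbb P(C \ge \tau) \le \binom n2 \cdot 2(1-(\tau/s)^2)^{(s-1)/2}$ with the threshold $\tau = \tfrac{s+1}{3n}-1$ rescaled appropriately, and then a direct computation shows this is $\le s^{3 - \ln(s)/4}$ under the stated constraint $n \le \tfrac13\tfrac{s+1}{\sqrt s\ln s + 1}$. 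The main obstacle I anticipate is the first half: rigorously justifying that the non-member point $\rz_0$ contributes negligibly to $\tilde D_\gamma$, i.e. correctly identifying which event to put into the variational formula \eqref{eq:defdeltanu} so that the member side yields $\mathbb P(E)$ while the fresh-point side is controlled using only the absolute-continuity hypothesis $\gH$ and the piecewise-affine geometry of the trained ReLU network — the homogeneity bookkeeping ($\Psi_{c\theta} = c^2\Psi_\theta$, hence the $\sqrt{\lambda}$ normalization) must be done carefully to land exactly on margin $1$, and the reduction to \eqref{eq:overfitting:multi}-type reasoning has to be spelled out since the label is discrete and $l_\theta(\rz) = l(\ry\Psi_\theta(\rx))$ is only continuous, not absolutely-continuous-in-$y$, so the limiting statement \eqref{eq:overfitting:none} is unavailable and only the one-sided bound survives.
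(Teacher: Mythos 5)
Your skeleton is right — the $2$-homogeneity bookkeeping ($\Psi_{c\theta}=c^2\Psi_\theta$, so the $\sqrt{\lambda}$ normalization forces $\ry_j\Psi_{\hat{\theta}_n}(\rx_j)=1$ on $E$), the choice of a witness set in the variational formula \eqref{eq:defdeltanu}, and the reduction of the second claim to a spherical-cap/union-bound estimate (the paper simply cites Lemma~3.1 of \citet{vardi2022gradient} after checking that the constraint on $n$ gives $\frac{s+1}{3n}-1\geq\frac{\sqrt{s}}{\ln(s)}$) all match the paper. But there is a genuine gap exactly where you flag "the delicate point," and your proposed fixes do not close it. Your primary candidate $B^\ast=\{(\theta,x,y):y\Psi_\theta(x)\geq 1\}$ cannot work: in the lower bound $\tilde D_\gamma\geq\frac1\gamma\mathbb{P}_{(\hat\theta_n,\rz_1)}(B)-\mathbb{P}_{(\hat\theta_n,\rz_0)}(B)+(1-\frac1\gamma)$ the non-member term enters with coefficient $1$ and is nonnegative, so it cannot be "absorbed" — it must actually vanish. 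For the super-level set it does not: $\{x:y\Psi_{\hat\theta_n}(x)\geq 1\}$ is a finite union of \emph{regions} cut out by the ReLU hyperplanes, which generically have positive measure on the sphere (a well-generalizing classifier puts a fresh point there with non-trivial probability), so your claim that $\rz_0$ lands in it "only with measure-zero probability" is false for that set.

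The paper's resolution, which you circle but never commit to, is to take the exact level set $S=\{(\theta,x,y):y\Psi_\theta(x)=1\}$. The member side still yields $\mathbb{P}(E)$ because the max-margin KKT characterization \eqref{eq:classif:maxmargin} forces \emph{every} training point to attain the \emph{same} margin, which the normalization turns into exactly $1$ — equality, not just $\geq 1$. The non-member side is then killed by independence of $(\rx,\ry)$ and $\hat\theta_n$ plus a decomposition over activation patterns: for each $J\subseteq\{1,\dots,l\}$ the event $\{\sum_{j\in J}v_j(w_j^T\rx+b_j)=\ry\}$ confines $\rx$ to an affine hyperplane of $\R^s$, which is null under the absolutely continuous conditional law $\gH$ on $\sqrt{s}\,\sS^{s-1}$; summing over the finitely many $J$ gives $\mathbb{P}(\Psi_\theta(\rx)=\ry\mid\ry)=0$ for every fixed $\theta$. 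Spelling out that hyperplane argument (and committing to the level set with value $1$, not "$\lambda$-value") is the missing step; the rest of your outline then goes through as in the paper.
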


\noindent Proposition \ref{prop:classif} with Theorem \ref{thm:dnu} show that the MIS is then upper bounded as
\begin{align*}
\mis &\leq \max(1,\lg)s^{3-ln(s)/4}.  
\end{align*}
\\

\begin{proof}[Proof of Proposition \ref{prop:classif}]
 By definition of $\Psi_\theta$ for any $\theta\in\Theta$, it holds that these networks are $2-$homogeneous, so that conditional to the event $E$, \eqref{eq:classif:maxmargin} leads to 

\begin{equation}
\label{eq:eq1}
\ry_j\Psi_{\hat{\theta}_n}(\rx_j) = 1,
\end{equation}

\noindent for any $j=1,\ldots,n$.

\noindent Let $S\coloneqq\{(\theta,x,y)\in\Theta\times\left(\sqrt{s}\sS^{s-1}\right)\times\{-1,1\} : y\Psi_\theta(x) = 1\}$. Then, by definition of $\Delta_n(P,\gA)$, we have 

\begin{align*}
\dt &\geq \flg\mathbb{P}((\hat{\theta}_n,\rx_1,\ry_1)\in S) -  \mathbb  {P}((\hat{\theta}_n,\rx,\ry)\in S) + \left(1-\flg\right) \\
    &=\flg \mathbb{P}((\hat{\theta}_n,\rx_1,\ry_1)\in S\mid E)\mathbb{P}(E) + \flg \mathbb{P}((\hat{\theta}_n,\rx_1,\ry_1)\in S\mid E^c)\mathbb{P}(E^c) + \left(1-\flg\right) \\&- \E\left[\mathbb{P}(\Psi_{\hat{\theta}_n}(\rx) = \ry\mid \hat{\theta}_n,\ry)\right] \\ 
    &\geq \flg \mathbb{P}((\hat{\theta}_n,\rx_1,\ry_1)\in S\mid E)\mathbb{P}(E) + \left(1-\flg\right)-\E\left[\mathbb{P}(\Psi_{\hat{\theta}_n}(\rx) = \ry\mid \hat{\theta}_n,\ry)\right],
\end{align*}

\noindent where we have lower bounded the second term by 0.\\
By \eqref{eq:eq1}, we have $\mathbb{P}((\hat{\theta}_n,\rx_1,\ry_1)\in S\mid E) = 1$. Now, by independence between $(\rx,\ry)$ and $\hat{\theta}_n$, it is sufficient to show that for any $\theta\in\Theta$, we have $P(\Psi_\theta(\rx) = \ry \mid \ry) = 0$ almost surely. Without loss of generality, we may assume that $v_j\not=0$ for any $j=1,\ldots,l$. We set $B_J(\rx,\ry)\coloneqq\left\{\forall j\in J, w_j^T\rx + b_j > 0\right\}\cap \left\{\forall j \in J^c, w_j^T\rx + b_j\leq0\right\}\cap\left\{\sum_{j\in J}v_j\left(w_j^T\rx+b_j\right)=\ry\right\}$ for any $J\subseteq[1,\cdots,l]$. We then get 

\begin{align*}
    \mathbb{P}\left(\Psi_{\hat{\theta}_n}(\rx) = \ry\mid\ry\right) &= \sum_{J\subseteq[1,\cdots,l]}P\left(B_J(\rx,\ry)\mid \ry\right) \\ 
    &\leq \sum_{J\subseteq[1,\cdots,l]}P\left(\sum_{j\in J}v_j\left(w_j^T\rx+b_j\right)=\ry\mid \ry\right).
\end{align*}

\noindent Note that the space $H_{y,J}\coloneqq\left\{x\in\R^s : \sum_{j\in J}v_j\left(w_j^T\rx+b_j\right)=\ry\right\}$ is an hyperplan of $\R^s$ for any $y\in\{-1,1\}$ and any $J\subseteq[1,\cdots,l]$. Then the quantity $P(\rx\in H_{y,J}\mid y)$ equals $0$ by $\gH$. Hence,

\begin{align*}
    \tilde D_\gamma(\mathbb{P}_{(\hat{\theta}_n,\rz_0)},\mathbb{P}_{(\hat{\theta}_n,\rz_1)}) &\geq \flg \mathbb{P}(E) + \left(1-\flg\right).
\end{align*}

\noindent Under the further hypothesis that $\rx$ is uniformly distributed on the sphere, and that $n\leq\frac{1}{3}\frac{s+1}{\sqrt{s}ln(s)+1}$, it holds that $\frac{s+1}{3n}-1\geq \frac{\sqrt{s}}{ln(s)}$. Then \citet{vardi2022gradient} Lemma 3.1 concludes.

\end{proof}

\section{More comments on Section \ref{main_res_disc}}
\label{sec:cp}

We give here some more details about the behavior of $\dnu$ when the set of parameters $\Theta$ has finite cardinal. We also further discuss the quantity $C_K(P)$.

\subsection{Different rates for $\dnu$}
\label{app:diff_rates}

Corollary \ref{cor:cp} gives a rate of $n^{-1/2}$ for $\underset{\nu,\gA}{\max}\;\dnu$ when $C_K(P)<\infty$ and $n$ is sufficiently large. In the case when $C_K(P)$ is infinite, it is interesting to note that we still have convergence to 1 of $\underset{\nu, \gA}{\min}\;\mis$ but at an arbitrarily slow rate. We formalize this result in the following lemma :

\begin{lemma}
\label{lem:infcp}
If $C_K(P)=\infty$, $\underset{\nu, \gA}{\min}\;\mis$ still tends to $1$ as $n$ tends to infinity, but the (depending on $P$) rate can be arbitrarily slow.
\end{lemma}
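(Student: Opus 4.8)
\textbf{Proof plan for Lemma \ref{lem:infcp}.}

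The plan is to split the analysis into two parts matching the two claims: (i) $\underset{\nu,\gA}{\min}\;\mis \to 1$ as $n\to\infty$ even when $C_K(P)=\infty$, and (ii) the rate of this convergence can be made arbitrarily slow by a suitable choice of $P$. For part (i), I would start from the exact formula in Theorem \ref{theo:disc}, or more conveniently from the weaker consequence stated in Corollary \ref{cor:cp}, namely that $\underset{\nu,\gA}{\min}\;\mis$ is controlled by an expression of the form $1 - c'\,g_n(P)$ where $g_n(P) = \frac{1}{2}\sum_{k=1}^K \E\!\left[\left|\frac{B_k}{n} - p_k\right|\right]$ up to the adjustment terms; the key point is that for each fixed $k$, $\E\!\left[\left|\frac{B_k}{n}-p_k\right|\right] \leq \sqrt{\Var(B_k/n)} = \sqrt{p_k(1-p_k)/n}$, and moreover this term tends to $0$ as $n\to\infty$ (it is bounded by $p_k$ and by $\sqrt{p_k/n}$). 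The obstacle here is that summing over $k$ when $K=\infty$ does not obviously commute with the limit, so I would invoke dominated convergence for series: each summand is dominated by $p_k$ (since $\E|B_k/n - p_k| \le \E(B_k/n) + p_k = 2p_k$, or more carefully $\le p_k(1-p_k)+p_k \le 2p_k$), the dominating sequence $(2p_k)_k$ is summable, and each summand $\to 0$ pointwise in $k$; hence $g_n(P)\to 0$, giving $\underset{\nu,\gA}{\min}\;\mis \to 1$.

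For part (ii), the strategy is an explicit construction. Given any prescribed rate $r_n \to 0$ (say $r_n \downarrow 0$ arbitrarily slowly), I would build a distribution $P = P^{(r)}$ on a countable atom set so that $g_n(P) \gtrsim r_n$ for infinitely many $n$, or along a subsequence, forcing $1 - \underset{\nu,\gA}{\min}\;\mis \gtrsim r_n$. The natural device is to choose the atom masses to be concentrated at scales matched to the sequence $n$: pick a rapidly increasing sequence of indices $n_1 < n_2 < \cdots$ and reserve a ``block'' of atoms each of mass roughly $1/n_j$ so that, when the sample size is near $n_j$, those atoms each contribute order $\sqrt{(1/n_j)/n_j} = 1/n_j$ to... — more precisely, to make $\sum_k \sqrt{p_k(1-p_k)} = \infty$ one lets the number of atoms in block $j$ grow fast enough that $\sum_{k\in \text{block } j}\sqrt{p_k} = \infty$ overall, while still keeping $\sum_k p_k = 1$. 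Then at sample size $n = n_j$ the contribution of block $j$ to $g_{n_j}(P)$ is bounded below by a quantity that can be tuned to dominate $r_{n_j}$ by taking the block large enough. I would verify the lower bound $\E\!\left|\frac{B_k}{n}-p_k\right| \geq c\sqrt{p_k(1-p_k)/n}$ for a universal $c>0$ when $np_k \le 1$ (this is the regime the corollary's hypothesis $n > 1/p_k$ excludes, which is exactly why the rate degrades), using a direct estimate on the Binomial, e.g. $\E|B_k - np_k| \geq c' \min(np_k, \sqrt{np_k})$.

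The main obstacle I anticipate is the careful bookkeeping in part (ii): one must simultaneously (a) keep $P$ a probability distribution, (b) ensure $C_K(P) = \infty$, and (c) engineer the atom masses so the lower bound on $1 - \underset{\nu,\gA}{\min}\;\mis$ decays no faster than the arbitrary target rate $r_n$ — along some subsequence suffices, since ``arbitrarily slow rate'' only needs the convergence to fail to be $O(r_n)$ for each fixed fast $r_n$. A clean way to organize this is to fix the target sequence, define blocks indexed by $j$ with $m_j$ atoms each of mass $a_j$ chosen so that $m_j a_j \le 2^{-j}$ (summability), $m_j \sqrt{a_j} \to \infty$ (so $C_K(P)=\infty$), and $a_j \approx 1/n_j$ with $m_j$ large enough that $m_j \cdot c\sqrt{a_j(1-a_j)/n_j} = m_j c \sqrt{a_j}/\sqrt{n_j} \gtrsim m_j a_j \gg r_{n_j}$ along the subsequence $n=n_j$; then conclude that no bound of the form $1-\underset{\nu,\gA}{\min}\;\mis = O(r_n)$ can hold, which is the precise meaning of ``arbitrarily slow.'' I would present the construction abstractly and relegate the Binomial small-mean lower bound to a short computation.
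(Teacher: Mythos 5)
Your proposal is correct in substance but takes a genuinely different, self-contained route. The paper's proof is a two-line reduction: after bounding $\tilde D_\gamma$ by the total variation distance, it observes that the quantity at $\gamma=1$ is $\frac{1}{2}\E\|\hat P_n-P\|_1=\frac{1}{2}\sum_k\E|B_k/n-p_k|$ and cites Lemmas 7 and 8 of the Berend--Kontorovich work on the binomial mean absolute deviation, which state precisely that this quantity always tends to $0$ over a countable alphabet but admits no uniform rate. You instead prove both facts from scratch: part (i) by dominated convergence for series (each summand is $\le\sqrt{p_k(1-p_k)/n}\to0$ and dominated by $2p_k$), which is exactly the argument hiding inside the cited lemma; and part (ii) by an explicit block construction. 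What the paper's route buys is brevity; what yours buys is a self-contained argument and an explicit exhibition of the slow-rate distributions, which makes the mechanism (mass spread over many atoms of size $\approx 1/n$) visible.

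One local error to fix in part (ii): the inequality $\E|B_k/n-p_k|\ge c\sqrt{p_k(1-p_k)/n}$ is \emph{false} in the regime $np_k\le1$ (take $p_k=n^{-2}$: the left side is $2p_k(1-p_k)^n\approx 2n^{-2}$, the right side is of order $n^{-3/2}$). The correct small-mean bound --- which you also state and which your construction actually uses --- is $\E|B_k/n-p_k|=2p_k(1-p_k)^n\ge c\,p_k$ when $np_k\le1$, coming from the $B_k=0$ atom. With that bound the bookkeeping closes: choosing $a_j=1/n_j$ and $m_j=2^{-j}n_j$ gives block contributions $\gtrsim 2^{-j}$ at time $n_j$, gives $m_j\sqrt{a_j}=2^{-j}\sqrt{n_j}$ so $C_K(P)=\infty$ once $n_j$ grows fast enough, and gives $g_{n_j}\gg r_{n_j}$ by choosing $n_j$ so large that $r_{n_j}\le j^{-1}2^{-j}$; since $n_j$ is free to grow as fast as needed, all three constraints are compatible. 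So the proposal is sound once that one intermediate inequality is replaced by the correct one.
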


\noindent In this case, in order to find the minimum amount of data to get a control on $Sec_n(P,\gA)$ requires the estimation of the r.h.s. of \eqref{eq:minsecdisc} which is not obvious, as the condition $C_K(P)=\infty$ requires $K=\infty$.

\begin{proof}
Remarking that $\dt\leq\|\mathbb{P}_{(\hat{\theta}_n,\rz_0)}-\mathbb{P}_{(\hat{\theta}_n,\rz_1)}\|_{TV} + \left(1-\lg\right)$, it is a direct corollary of Lemmas 7 and 8 of \citet{binMAD2013}.
\end{proof}

\noindent Corollary \ref{cor:cp} also requires $n$ to be at least $1/p_j$ for any $j=1,\ldots,K$. Let $C_q(P) = \sum_{k=1}^{q}\sqrt{p_k(1-p_k)}$, where $p_1\geq\ldots\geq p_K$. Without this assumption, Corollary \ref{cor:cp} becomes

\begin{lemma}
\label{lem:cp-without-n}
Assume $C_K(P) < \infty$ and $n\geq 5$. Assume without loss of generality that ${p_1\geq\cdots\geq p_K}$, and let $k_n \coloneqq max\{k : p_k \geq 1/n\}$. Then we have
$$\underset{\nu, \gA}{\min}\;\mis = 1 - \varepsilon_n,$$
\noindent where $\varepsilon_n$ satisfies
$$cn^{-1/2}C_{k_n}(P) + o(e^{-n}) \leq \varepsilon_n - \sum_{k>k_n}p_k(1-p_k)^n \leq c'n^{-1/2}C_{k_n}(P),$$
\noindent where $c$ and $c'$ are the universal constants from Corollary \ref{cor:cp}.

\end{lemma}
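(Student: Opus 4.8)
The plan is to return to the exact identity of Theorem~\ref{theo:disc} and to carry out the remaining minimization over $\nu$ while \emph{keeping} the atoms with $p_k<1/n$ rather than discarding them, as is done in the proof of Corollary~\ref{cor:cp}. Writing $\gamma=\nu/(\lambda(1-\nu))$ and using $\sum_{k=1}^K\mathbb{E}[B_k/n-\gamma p_k]=1-\gamma$, the term $\max(1,\gamma)\tfrac12|1-1/\gamma|$ appearing in Theorem~\ref{theo:disc} equals $\tfrac{\max(1,\gamma)}{2\gamma}|1-\gamma|$, so that
\begin{equation*}
\varepsilon_n=\max_{\gamma>0}\ \frac{\max(1,\gamma)}{2\gamma}\Bigl(\Sigma(\gamma)-|1-\gamma|\Bigr),\qquad \Sigma(\gamma):=\sum_{k=1}^K\mathbb{E}\Bigl|\tfrac{B_k}{n}-\gamma p_k\Bigr|\ \geq\ |1-\gamma|.
\end{equation*}

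Next I would split $\Sigma(\gamma)=\Sigma_{\leq}(\gamma)+\Sigma_{>}(\gamma)$ over the large atoms $k\leq k_n$ (those with $p_k\geq 1/n$) and the small atoms $k>k_n$. For a small atom every realization $B_k\geq1$ satisfies $B_k/n\geq 1/n>p_k\geq\gamma p_k$ provided $\gamma\leq1$, hence $\mathbb{E}|B_k/n-\gamma p_k|=\gamma p_k(1-p_k)^n+\mathbb{E}[(B_k/n-\gamma p_k)\mathbf{1}_{B_k\geq1}]=p_k(1-\gamma)+2\gamma\,p_k(1-p_k)^n$. Summing this and combining with $|1-\gamma|=|1-\gamma|\,(\sum_{k\leq k_n}p_k+\sum_{k>k_n}p_k)$, the objective collapses, for $\gamma\leq1$, to
\begin{equation*}
\frac{\max(1,\gamma)}{2\gamma}\Bigl(\Sigma(\gamma)-|1-\gamma|\Bigr)=\frac{1}{2\gamma}\Bigl(\Sigma_{\leq}(\gamma)-(1-\gamma)\!\!\sum_{k\leq k_n}\!\!p_k\Bigr)+\sum_{k>k_n}p_k(1-p_k)^n ,
\end{equation*}
where the last sum is now \emph{independent of} $\gamma$ and is exactly the quantity subtracted in the statement. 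So, restricted to $\gamma\leq1$, maximizing $\varepsilon_n$ amounts to maximizing the first (large-atom) block.

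That large-atom block is precisely the object controlled in the proof of Corollary~\ref{cor:cp}, now applied to the sub-distribution supported on $\{u_1,\dots,u_{k_n}\}$: every index there satisfies $n\geq 1/p_k$ and $n\geq5$, so the two-sided estimate for the Binomial mean absolute deviation $\mathbb{E}|B_k-np_k|\asymp\sqrt{np_k(1-p_k)}$ and the optimization over $\gamma$ performed there carry over verbatim, yielding $c\,n^{-1/2}C_{k_n}(P)\leq\max_{\gamma\leq1}\tfrac{1}{2\gamma}\bigl(\Sigma_{\leq}(\gamma)-(1-\gamma)\sum_{k\leq k_n}p_k\bigr)\leq c'\,n^{-1/2}C_{k_n}(P)$ with the same universal constants $c,c'$. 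Combining the last two displays gives the claimed sandwich, the $o(e^{-n})$ being the only piece still to be accounted for.

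The main obstacle is exactly that residual term: one must still treat the branch $\gamma>1$ of the outer maximum, where the exact small-atom computation above breaks down — it flips sign for those $k>k_n$ with $\gamma p_k>1/n$ — and where the large-atom block is governed by the dual form of $\tilde D_\gamma$ rather than by the $\gamma\leq1$ expression. The plan is to show that on this branch the value exceeds the $\gamma\leq1$ optimum by at most an exponentially small amount, which is where $o(e^{-n})$ enters the lower bound, using that $p_{k_n+1}<1/n$ forces the factors $(1-p_k)^n$ and the sign-flip contributions to be $o(e^{-n})$ uniformly over the relevant range of $\gamma$. A secondary technicality is to carry the $n\geq5$ boundary constants inherited from Corollary~\ref{cor:cp} and to rule out the outer maximum being attained in the limits $\gamma\to0^+$ or $\gamma\to\infty$, which follows from $C_K(P)<\infty$.
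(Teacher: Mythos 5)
Your decomposition is the right one, and your exact evaluation of the small-atom contribution on the branch $\gamma\leq1$ is correct, but the proposal has a genuine gap: the branch $\gamma>1$ is never closed, only deferred to a ``plan'' of showing that it exceeds the $\gamma\leq1$ optimum by at most $o(e^{-n})$. That plan is also misdirected. The reduction of the outer maximization over $\gamma$ to the single point $\gamma=1$ is already fully established in the proof of Corollary~\ref{cor:cp}, and it holds \emph{termwise} for every atom with no restriction relating $n$ and $p_k$: Lemma~\ref{lem:corcp-2} is a deterministic monotonicity statement about $x\mapsto\frac1x|xt-p|-p(\frac1x-1)$ on $(0,1]$, applied pointwise in $t=B_k/n$ before taking expectations, so for $\gamma\geq1$ one gets $\frac{\max(1,\gamma)}{2\gamma}\left(\Sigma(\gamma)-|1-\gamma|\right)=\frac12\sum_k\E\left[f^-_{B_k/n,p_k}(1/\gamma)\right]\leq\frac12\sum_k\E\left|B_k/n-p_k\right|$ for small and large atoms alike; the sign-flip issue you worry about for $k>k_n$ with $\gamma p_k>1/n$ simply never enters. (The hypothesis $n>1/p_k$ in Corollary~\ref{cor:cp} is used only afterwards, to guarantee $1\leq\lfloor np_k\rfloor\leq n-2$ so that Lemma~\ref{lem:corcp-4} applies; it plays no role in locating the optimal $\gamma$.) Consequently there is no exponentially small excess to control, and the $o(e^{-n})$ in the statement is harmless slack: at $\gamma=1$ the error term of Lemma~\ref{lem:corcp-3} carries a factor $|1-1/\gamma|=0$, so the identity $\frac12\sum_k\E|B_k/n-p_k|=\frac1n\sum_k\psi(m_k,p_k)$ is exact.

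Once the reduction to $\gamma=1$ is granted, the paper's proof is three lines: for $k>k_n$ one has $m_k=\lfloor np_k\rfloor=0$, hence $\frac1n\psi(0,p_k)=p_k(1-p_k)^n$, while Lemma~\ref{lem:corcp-4} sandwiches $\frac1n\sum_{k\leq k_n}\psi(m_k,p_k)$ between $c\,n^{-1/2}C_{k_n}(P)$ and $c'\,n^{-1/2}C_{k_n}(P)$. Your explicit computation $\E|B_k/n-\gamma p_k|=p_k(1-\gamma)+2\gamma p_k(1-p_k)^n$ for $k>k_n$ and $\gamma\leq1$ is a nice independent check of the small-atom term, but it reproduces by hand what $\psi(0,p)=np(1-p)^n$ already gives at the optimal $\gamma=1$. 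To repair the argument, replace the final paragraph by a citation of the first half of the proof of Corollary~\ref{cor:cp} (Lemmas~\ref{lem:corcp-1} and~\ref{lem:corcp-2}), which settles both branches of $\gamma$ at once.
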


\noindent 
Lemma \ref{lem:cp-without-n} shows that when the number of data $n$ is less than the expected number of samples required to observe $u_k$, the influence of $u_k$ in the computation is tightly related to its probability of appearing. Specifically, its probability of appearing is the probability of a Geometric random variable with parameter $p_k$ of being at most $n$, i.e. exactly $p_k(1-p_k)^n$. When $n\geq 1/p_k$, the influence of $u_k$ in $\underset{\gA}{\min}\;\mis$ changes, and is related to the diversity of the distribution $C_K(P)$.


\begin{proof}[Proof of Lemma \ref{lem:cp-without-n}]
From the proof of Corollary \ref{cor:cp}, we know that the minimum is reached in $\gamma=1$, i.e. $\nu = 1/(1+\lambda)$. In particular, by Lemma \ref{lem:corcp-3}, we have

\begin{align*}
   \nonumber \frac{1}{2} \sum_{k=1}^{K}\E\left[\left|\frac{B_k}{n}-p_k\right|\right] &= \frac{1}{n}\sum_{k=1}^{K}\psi(m_k,p_k),
\end{align*}

\noindent where $\psi(m_k, p_k)=\binom{n}{m_k+1}(m_k+1)p_k^{m_k+1}(1-p_k)^{n-m_k}$ with $m_k=\lfloor np_k\rfloor$. For all $k>k_n$, it holds that $m_k=0$ and therefore $\psi(m_k) = p_k(1-p_k)^n$. The result follows from the proof of Corollary \ref{cor:cp}.
\end{proof}

\noindent We conclude this section by providing an example in which $\dnu$ has a much faster rate than $n^{-1/2}$.
\begin{lemma} 
\label{lem:exp}
Let $P$ be the Bernoulli distribution with parameter $p\in(0,1)$ and let $\hat{\theta}_n \coloneqq \underset{j}{\sup}\;\rz_j$. Then $$\dnu = (1/2)\max(1,1/\gamma)o((1-p)^n).$$
\end{lemma}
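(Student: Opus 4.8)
The plan is to compute $\dnu$ almost explicitly, using that $\hat{\theta}_n=\sup_j\rz_j$ takes only the two values $0$ and $1$, together with the conditioning identity \eqref{eq:dnu-cond} of Proposition \ref{prop:dnu-props}. First I would note that $\hat{\theta}_n=0$ exactly when $\rz_1=\cdots=\rz_n=0$, so $\hat{\theta}_n$ is Bernoulli with $\mathbb{P}(\hat{\theta}_n=0)=(1-p)^n=:q_n$. Applying \eqref{eq:dnu-cond} with $\rx_1=\hat{\theta}_n$ and $\rx_2=\rz_1$ — legitimate because $\rz_0$ has the same law $P$ as $\rz_1$ and is independent of $\hat{\theta}_n$, so that $\mathbb{P}_{(\hat{\theta}_n,\rz_0)}=\mathbb{P}_{\hat{\theta}_n}\otimes\mathbb{P}_{\rz_1}$ — one gets
\[
\dnu=\mathbb{E}_{\rz_1}\!\left[D_\gamma\!\left(\mathbb{P}_{\hat{\theta}_n},\,\mathbb{P}_{\hat{\theta}_n\mid\rz_1}\right)\right]
=p\,D_\gamma\!\left(\mathbb{P}_{\hat{\theta}_n},\delta_1\right)+(1-p)\,D_\gamma\!\left(\mathbb{P}_{\hat{\theta}_n},\,\mathbb{P}_{\hat{\theta}_n\mid\rz_1=0}\right),
\]
where I have used that, conditionally on $\rz_1=1$, one has $\hat{\theta}_n=1$ surely, whereas conditionally on $\rz_1=0$, $\hat{\theta}_n=\sup_{j\ge2}\rz_j$ is Bernoulli with $\mathbb{P}(\hat{\theta}_n=0\mid\rz_1=0)=(1-p)^{n-1}$.

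The second step is to evaluate the two $f$-divergences, all three laws involved being supported on $\{0,1\}$ and assigning mass $1-O((1-p)^n)$ to the atom $1$. For $D_\gamma(\mathbb{P}_{\hat{\theta}_n},\delta_1)$ one must use the full $f$-divergence formula $D_\gamma(P,Q)=\int f_\gamma(\mathrm dP/\mathrm dQ)\,\mathrm dQ+f_\gamma'(\infty)\,\mathbb{P}_{\hat{\theta}_n}(\{0\})$ with the generator $f_\gamma(x)=\tfrac12\max(1,\gamma)\big(|x-1/\gamma|-|1-1/\gamma|\big)$ from \eqref{eq:defDalpha} and recession slope $f_\gamma'(\infty)=\tfrac12\max(1,\gamma)$, since $\mathbb{P}_{\hat{\theta}_n}$ is not absolutely continuous with respect to $\delta_1$. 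For the second term both measures are mutually absolutely continuous with bounded Radon–Nikodym derivatives — the derivative on the atom $0$ being exactly $1-p$, the ratio $q_n/q_{n-1}$ of the two $0$-masses — so the ordinary integral formula applies. Separating the cases $\gamma\le 1$ and $\gamma>1$ and using $\max(1,\gamma)\max(1,1/\gamma)=1$ to normalise the prefactor gives closed forms for both divergences; each is of order $(1-p)^n$, and for $\gamma>1$ the $\gamma$-divergence is in fact exactly $0$ as soon as $q_n<1-1/\gamma$ (the $\gamma$-divergence being blind to perturbations of that size). Assembling the two contributions, weighting by $p$ and $1-p$, and invoking the translation/rescaling invariance of $D_\gamma$ recalled in Remark \ref{remark:pinsker} to absorb constants, produces the stated estimate $\dnu=\tfrac12\max(1,1/\gamma)\,o\big((1-p)^n\big)$; a quicker but cruder route to the same qualitative conclusion is the inequality $D_\gamma(\cdot,\cdot)\le\max(1,\gamma)\|\cdot-\cdot\|_{TV}$ from Remark \ref{remark:pinsker}, which already yields an exponential bound of order $(1-p)^n$.

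The main obstacle I anticipate is the bookkeeping in the second step rather than any conceptual difficulty: the $\gamma$-divergence between two near-degenerate Bernoulli laws has a different closed form on each side of $\gamma=1$, the $\delta_1$ term forces one to work with the singular-part / recession-slope formula instead of a density ratio, and one must track the exact relation $q_n=(1-p)\,q_{n-1}$ between the two small masses to pin down the precise order of what survives — in particular, to certify that it decays faster than any polynomial in $n$, which is all that matters for the contrast with the $n^{-1/2}$ rates of Theorem \ref{theo:empmean} and Corollary \ref{cor:cp}. Everything upstream — the Bernoulli law of $\hat{\theta}_n$, the two conditional laws, and the reduction via \eqref{eq:dnu-cond} — is immediate.
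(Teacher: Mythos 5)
Your proposal is correct and follows essentially the same route as the paper: both reduce via \eqref{eq:dnu-cond} to an expectation over $\rz_1$ of a divergence between the two-point laws $\mathbb{P}_{\hat{\theta}_n}$ (mass $(1-p)^n$ at $0$), $\mathbb{P}_{\hat{\theta}_n\mid\rz_1=1}=\delta_1$ and $\mathbb{P}_{\hat{\theta}_n\mid\rz_1=0}$ (mass $(1-p)^{n-1}$ at $0$), and then compute explicitly on $\{0,1\}$. The only difference is bookkeeping: the paper evaluates $\tilde D_\gamma$ through its weighted $L^1$ form \eqref{eq:dnu-int} with respect to a common dominating (counting) measure and converts to $D_\gamma$ only at the end, which sidesteps the singular-part/recession-slope formula you invoke for the $\delta_1$ term.
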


\begin{proof}[Proof of Lemma \ref{lem:exp}]
From the integral form of $\tilde{D}_\alpha$ (see \eqref{eq:dnu-int}), one has for any distributions $P$ and $Q$,
$$\tilde{D}_\alpha(P,Q) = \frac{1}{2\alpha}\int\left|\alpha p - q\right|d\zeta + \frac{1}{2}\left(1-\frac{1}{\alpha}\right).$$

\noindent Additionally, from \eqref{eq:dnu-cond}, one has 
\begin{align*}
    \tilde{D}_\gamma\left(\mathbb{P}_{(\hat{\theta}_n,\rz_0)}, \mathbb{P}_{(\hat{\theta}_n,\rz_1)}\right) &= \mathbb{E}_{\rz_1}\left[ \frac{1}{2\gamma}\int\left|\gamma p_{\hat{\theta}_n} - p_{\hat{\theta}_n\mid\rz_1}\right|d\zeta + \frac{1}{2}\left(1-\frac{1}{\gamma}\right)\right],
\end{align*}
\noindent where $p_{\hat{\theta}_n}$ (resp. $p_{\hat{\theta}_n\mid\rz_1}$) is the density of $\mathbb{P}_{\hat{\theta}_n}$ (resp. $\mathbb{P}_{\hat{\theta}_n\mid\rz_1}$ conditional to $\rz_1$) with respect to $\zeta$. As the distributions are discrete, for any $b\in\{0,1\}$, one has,

\begin{align*}
    \int\left|\gamma p_{\hat{\theta}_n} - p_{\hat{\theta}_n\mid\rz_1}\right|d\zeta &= \left|\gamma\mathbb{P}\left(\hat{\theta}_n=1\right)- \mathbb{P}\left(\hat{\theta}_n=1\mid\rz_1=b\right)\right| \\ 
    &+ \left|\gamma\mathbb{P}\left(\hat{\theta}_n=0\right)- \mathbb{P}\left(\hat{\theta}_n=0\mid\rz_1=b\right)\right| \\
    &= \left|\gamma(1-(1-p)^n)-\begin{cases}
        1 &\text{, if } b=1 \\ 
        1 - (1-p)^{n-1} & \text{, if } b=0
    \end{cases}\right| \\
    &+ \left|\gamma(1-p)^n-\begin{cases}
        0 &\text{, if } b=1 \\ 
        (1-p)^{n-1} & \text{, if } b=0
    \end{cases}\right| \\
    &= \begin{cases}
        \gamma(1-p)^n + \left|\gamma(1-p)^n - (\gamma-1)\right|& \text{, if }b=1\\
        (1-p)^n|\gamma-1/(1-p)| + \left|\gamma(1-p)^n - (\gamma-1) - (1-p)^{n-1}\right|&\text{, if }b=0
    \end{cases}.
\end{align*}

\noindent Taking the expectation over $\rz_1$ gives 

\begin{align*}
    \mathbb{E}_{\rz_1}\left[\int\left|\gamma p_{\hat{\theta}_n} - p_{\hat{\theta}_n\mid\rz_1}\right|d\zeta\right] &= p\left[\gamma(1-p)^n + \left|\gamma(1-p)^n - (\gamma-1)\right|\right] \\&+ (1-p)\left[(1-p)^n|\gamma-1/(1-p)| + \left|\gamma(1-p)^n - (\gamma-1) - (1-p)^{n-1}\right|\right] \\ 
    &= (1-p)^n\left[\gamma p + (1-p)|\gamma - 1/(1-p)|\right] \\ 
    &+ p|\gamma-1|\left|1-(1-p)^n\frac{\gamma}{\gamma-1}\right| \\ 
    &+ (1-p)|\gamma-1|\left|1 + \frac{(1-p)^{n-1}}{\gamma-1} - (1-p)^n\frac{\gamma}{\gamma-1}\right| \\ 
    &= |\gamma-1| + o((1-p)^n).
\end{align*}

\noindent Combining everything gives
\begin{align*}
    \dnu &= \max(1,\gamma)\left[\tilde{D}_\gamma\left(\mathbb{P}_{(\hat{\theta}_n,\rz_0)}, \mathbb{P}_{(\hat{\theta}_n,\rz_1)}\right) - \lb\right] \\ 
    &= \max(1,\gamma)\left[\frac{1}{2}\left|1-\finv\gamma\right| + \frac{1}{2\gamma}o((1-p)^n) + \frac{1}{2}\left(1-\finv\gamma\right)- \lb\right] \\ 
    &= (1/2)\max(1,1/\gamma)o((1-p)^n),
\end{align*}

\noindent where the last equality comes from the fact that for any $a\in\mathbb{R}$, $$(1/2)(1-a) - (1-a)_+ = -(1/2)|1-a|.$$





\end{proof}

\subsection{Relation of $C_K(P)$ with the Gini-Simpson Entropy and the Shannon's Entropy}

We recall that for a discrete random variable $X$ with distribution $P=\sum_{k=1}^{K}p_k\delta_{u_k}$, the Gini-Simpson Entropy is given by (see~\citet{BHARGAVA1974241,RAO198224})

$$
\textrm{G-S}(X) \coloneqq  1 -  \sum\limits_{k=1}^K p_k^2,
$$

\noindent and the Shannon Entropy is given by

$$
 H(X)  \coloneqq - \sum\limits_{k=1}^K p_k \log (p_k).
$$

\noindent From the inequality $\frac{1}{2}\sqrt{p(1-p)}\geq p(1-p)$ for all $p\in[0,1]$, we have 
\begin{equation}
\label{eq:gs1}
\frac{C_K(P)}{2}   = 
\frac{1}{2}   \sum\limits_{k=1}^K \sqrt{p_k(1-p_k)} 
 \geq  \sum\limits_{k=1}^K p_k(1-p_k) =\textrm{G-S}(X).
\end{equation}

\noindent From the concavity of the square root, we also have 

\begin{equation}
\label{eq:gs2}
\frac{C_K(P)}{K}   = 
\frac{1}{K} \sum\limits_{k=1}^K \sqrt{p_k(1-p_k)}  \\
\leq \sqrt{ \frac{1}{K} \sum\limits_{k=1}^K  p_k(1-p_k)} \\
=\sqrt{ \frac{1}{K} \textrm{G-S}(X)}. 
\end{equation}

\noindent The Gini-Simpson index can be interpreted as the expected distance between two randomly selected individuals when the distance is defined as zero if they belong to the same category and one otherwise~\citep{rao1981gini}, that is $\mathbb{P}(X\not=Y)$ for $X$ and $Y$ i.i.d.. The inequality mentioned above suggests that as the Gini-Simpson index increases (e.g., higher diversity of the data), security decreases and thus, the MIAs are expected to be more successful. Another, such commonly used diversity measure is Shannon entropy. Interestingly, $C(P)$ can be also upped and lower bounded by the Shannon entropy as follows: 
\begin{equation}
\label{eq:shannon}
H(X)\leq C_K(P) \leq  \sqrt{K}  \sqrt{H(X)}.
\end{equation}
These bounds easily follow by noticing that 
$$
C_K(P) \leq  \sqrt{K \left[1 - \exp\left(-H(X)\right) \right]} \leq  \sqrt{K H(X)}, 
$$
by upper bounding: 
$$ 
-\log \left(\sum\limits_{k=1}^K p_k^2 \right)\leq  H(X).
$$
Similarly, 
$$
-p_k \log p_k \leq \sqrt{p_k(1-p_k)} , \ \textrm{ for all $0\leq p_k \leq 1,$}
$$
and thus, $C_K(P) \geq  H(X)$.

\noindent The Gini-Simpson Entropy and the Shannon Entropy are maximized by the uniform distribution. This is also the case for $C_K(P)$, as proved below.



\begin{lemma}
\label{lem:cpmax}
Let $P\coloneqq \sum_{k=1}^{K}p_k\delta_{u_k}$ be a discrete distribution with finite $K$. Let $\gM_K$ be the set of all such distributions. We then have the following properties on $C_K(P)$.
For fixed $K\geq 2$, we have
    \begin{itemize}
        \item $\underset{P\in\gM_K}{\max} C_K(P)= \sqrt{K-1}.$
        \item $\underset{P\in\gM_K}{argmax}\ C_K(P) = \textrm{Unif}(u_1,\cdots,u_K)$.
    \end{itemize}
\end{lemma}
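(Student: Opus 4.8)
The plan is to maximize $C_K(P) = \sum_{k=1}^K \sqrt{p_k(1-p_k)}$ over the simplex $\{(p_1,\dots,p_K): p_k \geq 0, \sum_k p_k = 1\}$. The function $t \mapsto \sqrt{t(1-t)}$ is concave on $[0,1]$, so $C_K$ is a sum of concave functions of individual coordinates and hence concave on the simplex, which is convex; thus any critical point of the associated Lagrangian (or any point satisfying the symmetric first-order condition) is a global maximizer, and by strict concavity the maximizer will be unique.

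First I would apply Jensen's inequality (or equivalently the concavity of $\phi(t) = \sqrt{t(1-t)}$) directly:
\begin{equation*}
\frac{1}{K}\sum_{k=1}^K \sqrt{p_k(1-p_k)} \leq \sqrt{\left(\frac{1}{K}\sum_{k=1}^K p_k\right)\left(1 - \frac{1}{K}\sum_{k=1}^K p_k\right)} = \sqrt{\frac{1}{K}\left(1-\frac{1}{K}\right)},
\end{equation*}
using $\sum_k p_k = 1$. Multiplying through by $K$ gives $C_K(P) \leq K\sqrt{\tfrac{1}{K}(1-\tfrac1K)} = \sqrt{K-1}$, which establishes the upper bound. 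Then I would check that equality in Jensen's inequality forces all the $p_k$ to be equal (since $\phi$ is strictly concave on $(0,1)$, equality holds iff all arguments coincide), hence $p_k = 1/K$ for all $k$, i.e. $P = \textrm{Unif}(u_1,\dots,u_K)$; and conversely plugging $p_k = 1/K$ into $C_K$ gives $K\sqrt{\tfrac1K(1-\tfrac1K)} = \sqrt{K-1}$, confirming the bound is attained. This simultaneously yields both bullet points.

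I do not anticipate a serious obstacle here; the only point requiring a little care is the edge behavior — the arguments $p_k$ could in principle be $0$ or $1$, but $\phi$ is continuous on the closed interval $[0,1]$ with $\phi(0)=\phi(1)=0$, so Jensen's inequality still applies on $[0,1]$, and the strict-concavity argument for the equality case only needs strict concavity on the open interval together with the observation that a maximizer cannot have any $p_k \in \{0,1\}$ when $K \geq 2$ (if some $p_k = 0$ we are effectively in the $K-1$ case and $\sqrt{K-2} < \sqrt{K-1}$; if some $p_k = 1$ then $C_K(P) = 0$). So the proof reduces to: (i) concavity of $\phi$, (ii) one application of Jensen, (iii) the equality-case analysis. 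The write-up should just invoke concavity of the square-root-of-$t(1-t)$ map and Jensen's inequality, then note the equality condition.
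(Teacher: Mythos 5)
Your proof is correct, and it takes a genuinely more direct route than the paper's. The paper derives Lemma \ref{lem:cpmax} as a corollary of Lemma \ref{lem:cpmaxmax}: it first fixes $\delta=\max_k p_k$, uses concavity of $p\mapsto\sqrt{p(1-p)}$ to show the constrained maximum over $\{P\in\gM_K:\max_k p_k=\delta\}$ equals $M_K(\delta)=\sqrt{\delta(1-\delta)}+\sqrt{(1-\delta)(K-2+\delta)}$, and then maximizes $M_K(\delta)$ over $\delta\in[1/K,1)$ to land on $\sqrt{K-1}$ at the uniform distribution. You instead apply Jensen's inequality once, to all $K$ coordinates simultaneously, getting $C_K(P)\le K\sqrt{\tfrac1K(1-\tfrac1K)}=\sqrt{K-1}$ in one line, and your equality-case analysis via strict concavity of $\phi(t)=\sqrt{t(1-t)}$ (including the boundary discussion ruling out $p_k\in\{0,1\}$ at a maximizer) correctly pins down uniqueness of the maximizer as the uniform distribution. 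The key ingredient — concavity of $\sqrt{t(1-t)}$ — is the same in both arguments, but your single-shot Jensen makes the proof of this particular lemma shorter and self-contained, whereas the paper's detour through the slice-wise maximization is motivated by the fact that Lemma \ref{lem:cpmaxmax} is of independent interest there (it describes how the range of $C_K(P)$ narrows as $\max_k p_k$ moves away from $1/2$), so the uniform-maximizer statement falls out as a one-line corollary.
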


\noindent Interestingly, one can observe that for a fixed and finite number of atoms $K$, the sub-levels of $C_K(P)$ are tightly controlled by the value of $\underset{k}{\max}\;p_k$. More precisely, at fixed $\underset{k}{\max}\;p_k\coloneqq\delta$ for some fixed $\delta\in[1/K;1)$, the width of the interval $\left(inf \{ C_K(P) \}, max\{C_K(P) \}\right]$ is entirely determined by $\delta$ and $K$. Specifically, the further $\delta$ is from $1/2$, the thinner the interval gets.  We first summarize this comment below, and give the proof of Lemma \ref{lem:cpmax} after.

\begin{lemma}
\label{lem:cpmaxmax}
    Let $\delta \in [1/K, 1)$. Then the following statements hold :

    \begin{align*}
        \max\,\left\{C_K(P) : P\in\gM_K, \max_k\,p_k = \delta\right\} &= \sqrt{\delta(1-\delta)} + \sqrt{(1-\delta)(K-2+\delta)}, \\ 
        \inf\,\left\{C_K(P) : P\in\gM_K, \max_k\,p_k = \delta\right\}&= \sqrt{\delta(1-\delta)}\lfloor\delta^{-1}\rfloor + \sqrt{\delta\lfloor\delta^{-1}\rfloor(1-\delta\lfloor\delta^{-1}\rfloor)}.
    \end{align*}

\end{lemma}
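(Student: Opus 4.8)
\textbf{Proof plan for Lemma \ref{lem:cpmaxmax}.}

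The plan is to treat $C_K(P) = \sum_{k=1}^K \sqrt{p_k(1-p_k)}$ as a function to be optimized over the polytope of probability vectors $(p_1,\dots,p_K)$ subject to the extra constraint $\max_k p_k = \delta$. Since $C_K$ is symmetric, I would fix the convention $p_1 \geq p_2 \geq \dots \geq p_K$, so the constraint becomes $p_1 = \delta$ and the remaining mass $1-\delta$ must be distributed among $p_2,\dots,p_K$ with each $p_j \le \delta$. The key structural fact is that the scalar function $t \mapsto \sqrt{t(1-t)}$ is strictly concave on $[0,1]$; this makes the sum behave like a Schur-concave functional on the residual simplex, so the maximum is attained at the ``most spread out'' admissible configuration and the infimum at the ``most concentrated'' one.

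For the maximum: by concavity of $\sqrt{t(1-t)}$, subject to a fixed sum $1-\delta$ over $K-1$ coordinates each capped by $\delta$, the sum $\sum_{j\ge 2}\sqrt{p_j(1-p_j)}$ is maximized by making the $p_j$ as equal as possible. Here one must check whether the unconstrained optimum $p_j = (1-\delta)/(K-1)$ for all $j \ge 2$ is feasible, i.e. whether $(1-\delta)/(K-1) \le \delta$; since $\delta \ge 1/K$ this holds, so the ``equal split'' is admissible. But that would give $\sqrt{\delta(1-\delta)} + (K-1)\sqrt{\tfrac{1-\delta}{K-1}\bigl(1-\tfrac{1-\delta}{K-1}\bigr)}$, which is \emph{not} the claimed formula. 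So the correct reading is that the stated maximum corresponds to putting \emph{all} residual mass on a single coordinate: $p_2 = 1-\delta$ and $p_3 = \dots = p_K = 0$, giving $\sqrt{\delta(1-\delta)} + \sqrt{(1-\delta)\delta} $... no — rather $p_2 = 1-\delta$ gives $\sqrt{(1-\delta)\cdot\delta}$ only if $1-\delta \le \delta$. The claimed expression $\sqrt{\delta(1-\delta)} + \sqrt{(1-\delta)(K-2+\delta)}$ rewrites as $\sqrt{p_1(1-p_1)} + \sqrt{p_2(1-p_2)}$ with $p_2 = 1-\delta$ and $1 - p_2 = \delta$, \emph{times a correction}: note $(1-\delta)(K-2+\delta) = (1-\delta)\bigl((K-1) - (1-\delta)\bigr)$, which is exactly $p_2(1-p_2)$ when $p_2 = 1-\delta$ lives in a space where the complement is $(K-1)$-dimensional — i.e. the bound $\sqrt{x(m-x)}$ with $m = K-1$. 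So I suspect the intended claim uses a coarser bound $\sqrt{a} + \sqrt{b} \le \sqrt{2(a+b)}$-type estimate. The honest plan: write $C_K(P) = \sqrt{\delta(1-\delta)} + S$ where $S = \sum_{j\ge2}\sqrt{p_j(1-p_j)}$, bound $S$ using concavity / Cauchy–Schwarz against $\sum_{j\ge2} p_j = 1-\delta$ and $\sum_{j \ge 2} p_j^2 \le \delta(1-\delta)$ (since each $p_j \le \delta$), obtaining $S \le \sqrt{(K-1)\bigl[(1-\delta) - \sum p_j^2\bigr]}$... and then check when equality-type configurations realize the bound, landing on the stated formula; I would verify the extremal vector directly by substitution.

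For the infimum: concavity pushes the residual mass to be as concentrated as possible against the cap $\delta$, i.e. set $\lfloor \delta^{-1}\rfloor - 1$ additional coordinates equal to $\delta$ (together with $p_1 = \delta$ that makes $\lfloor\delta^{-1}\rfloor$ coordinates at the cap, which is admissible since $\delta\lfloor\delta^{-1}\rfloor \le 1$) and one leftover coordinate equal to $1 - \delta\lfloor\delta^{-1}\rfloor$. This gives exactly $\sqrt{\delta(1-\delta)}\lfloor\delta^{-1}\rfloor + \sqrt{\delta\lfloor\delta^{-1}\rfloor(1-\delta\lfloor\delta^{-1}\rfloor)}$; I would argue via the Schur-concavity / ``majorization'' ordering that no admissible vector can give a smaller value, noting the infimum need not be attained only if the leftover coordinate is forced to be positive but arbitrarily many zero coordinates are allowed — hence ``$\inf$'' rather than ``$\min$''. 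The main obstacle I expect is the maximum case: verifying that the claimed closed form is genuinely the supremum (not merely an upper bound) requires exhibiting the extremal configuration and checking the boundary constraint $p_j \le \delta$ is not active for the optimizer, and reconciling the formula's exact shape $\sqrt{(1-\delta)(K-2+\delta)}$ — this will come down to a careful Lagrange/KKT analysis on the residual simplex with the box constraints, and the bookkeeping of which constraints bind is the delicate part. The infimum direction, by contrast, is a clean majorization argument once the concavity of $t\mapsto\sqrt{t(1-t)}$ is invoked.
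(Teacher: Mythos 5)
There is a genuine gap in your treatment of the maximum, and it stems from an algebra slip rather than a wrong idea. Your first instinct — by concavity of $t\mapsto\sqrt{t(1-t)}$ and Jensen, the residual mass $1-\delta$ should be split equally among the $K-1$ remaining coordinates — is exactly the paper's argument and it already finishes the proof. The equal split $p_j=(1-\delta)/(K-1)$ gives
\begin{equation*}
(K-1)\sqrt{\tfrac{1-\delta}{K-1}\Bigl(1-\tfrac{1-\delta}{K-1}\Bigr)}
=(K-1)\cdot\frac{\sqrt{(1-\delta)\bigl(K-1-(1-\delta)\bigr)}}{K-1}
=\sqrt{(1-\delta)(K-2+\delta)},
\end{equation*}
so the ``equal split'' value \emph{is} the claimed formula; you asserted it is not, abandoned the correct route, and then drifted into incorrect alternatives (all residual mass on one coordinate, which concavity shows is a \emph{minimizing} direction, and a Cauchy--Schwarz bound that you never close). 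The complete argument for the maximum is two lines: Jensen gives $\sum_{j\ge2}\sqrt{p_j(1-p_j)}\le (K-1)\sqrt{q(1-q)}$ with $q=(1-\delta)/(K-1)$ for every admissible $P$, and the equal split is feasible (i.e.\ $q\le\delta$) precisely because $\delta\ge 1/K$, so the bound is attained. No Lagrange/KKT analysis is needed and the cap $p_j\le\delta$ never binds for the upper bound.

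Your infimum argument is essentially correct and coincides with the paper's: concavity forces the minimizer to an extreme point of the residual polytope $\{\sum_{j\ge2}p_j=1-\delta,\ 0\le p_j\le\delta\}$, i.e.\ as many coordinates as possible pinned at the cap $\delta$ plus one leftover coordinate $1-\delta\lfloor\delta^{-1}\rfloor$, and your remark on why this is an infimum rather than a minimum (zero coordinates are not genuine atoms of a distribution in $\gM_K$) is the right explanation. To make that half fully rigorous you would still need to spell out the majorization step (e.g.\ that for concave $f$ with $f(0)=0$, moving mass from a smaller coordinate to a larger one, subject to the cap, does not increase the sum), but that is routine.
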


\begin{proof}[Proof of Lemma \ref{lem:cpmax}]
Denote by $M_K(\delta)\coloneqq \sqrt{\delta(1-\delta)} + \sqrt{(1-\delta)(K-2+\delta)}$. By Lemma \ref{lem:cpmaxmax}, we have that $\underset{P\in\gM_K}{\max} C_K(P) = \underset{\delta\in[1/K,1)}{\max}M_K(\delta) = \sqrt{K-1}$ reached at $\big(\underbrace{\frac{1}{K},\cdots,\frac{1}{K}}_{K}\big)\in\gM_K(1/K)$.
    \end{proof}

\begin{proof}[Proof of Lemma \ref{lem:cpmaxmax}]
Let $\gM_K(\delta) \coloneqq \{P\in\gM_K :\max_k\,p_k = \delta\}$. Without loss of generality, we always assume that $p_K = \max_k\,p_k$. Let $f:p\mapsto\sqrt{p(1-p)}$.\\
First notice that $f$ is a concave function, so that for any $p_1,\cdots,p_m \in [0,1]$ we have $\frac{1}{m}\sum_{k=1}^m f(p_k)\leq f\left(\frac{1}{m}\sum_{k=1}^m p_k\right)$. In particular, for any $P\in\gM_K(\delta)$, we have $\sum_{k=1}^{K-1}p_k = 1-\delta$ giving
\begin{align*}
    C_K(P)\leq C_K\left(\underbrace{\frac{1-\delta}{K-1},\cdots,\frac{1-\delta}{K-1} }_{K-1},\delta\right).
\end{align*}

\noindent Evaluating the r.h.s. of the last inequality gives us the first result.\\

\noindent If $\delta \geq 1/2$, using again the concavity of $f$ gives us that for any $P\in\gM_K(\delta)$, we have 

\begin{align*}
    C_K(P) \geq C_K(\underbrace{0,\cdots,0}_{K-2},1-\delta,\delta),
\end{align*}

\noindent where the last quantity evaluates to $2\sqrt{\delta(1-\delta)}$. \\ 
If $\delta < 1/2$, using the concavity of $f$ gives us that for any $P\in\gM_K(\delta)$, we have

\begin{align*}
    C_K(P) \geq C_K(0,\cdots,0,1-\delta\lfloor\delta^{-1}\rfloor,\underbrace{\delta,\cdots,\delta}_{\lfloor\delta^{-1}\rfloor}),
\end{align*}

\noindent evaluating at $\sqrt{\delta(1-\delta)}\lfloor\delta^{-1}\rfloor + \sqrt{\delta\lfloor\delta^{-1}\rfloor(1-\delta\lfloor\delta^{-1}\rfloor)}$ for any numbers of zeros.
Combining the two results give the last equality of the lemma.
\end{proof}

\section{Setting for Section \ref{exp}}
\label{sec:exp}
We provide here additional values used for the numerical experiments presented in Section \ref{exp}. All experiments have been conducted with PyTorch library.

\subsection{Details for Section \ref{exp:overfitting}}

\textbf{Non-linear regression.} We consider synthetically generated data $\rx_1,\cdots,\rx_n\overset{i.i.d.}{\sim}\textrm{Unif}(\mathbb{S}_{d-1})$ with $d=10$ and $n=1000$. For any $j=1,\ldots,n$, we set $\ry_j = \Psi^*(\rx_j) + \zeta_j$, where $\zeta_j\sim \gN(0,0.01)$ is some independent noise and $\Psi^*$ is defined for all $x\in\R^d$ by $\Psi^*(x) = \sin(\pi\beta^Tx)$ for some fixed $\beta\in\mathbb{S}_{d-1}$. For the sake of simplicity, we took $\beta = (1,0,\cdots,0)$, as the distribution of the data is rotation-invariant. 
We aim at estimating $\Psi^*$ by a 2 layers ReLU neural network, whose hidden layer has width $4096$. We train the neural network $\Psi_\theta$ by minimizing the MSE loss with the \textit{Adam} optimizer and learning rate $0.1$ for $2500$ iterations. \\
To evaluate the accuracy, we generated $10000$ test samples $x^{test}_1,\cdots,\rx^{test}_{10000}\overset{i.i.d.}{\sim}\textrm{Unif}(\mathbb{S}_{d-1})$ independently from the training dataset.

\noindent \textbf{Linear regression and nearest neighbors}. For both of the simulations, we have varied the dimension of input from $d=10$ up to $d=2000$ with uneven steps between the dimensions. For the nearest neighbors model, we have kept the same data as for the non-linear regression. For the linear regression, we have change $\Psi^*$ simply by $\Psi^*(x) = \beta^Tx$ for the same $\beta$. The nearest neighbor model has been implemented using the class \textbf{KNeighborsRegressor} of the \textit{scikit-learn} library. The learning of the linear regressor has been made similarly to the training of the non-linear regressor.

\subsection{Details for Section \ref{exp:cp}}
We consider here the whole MNIST dataset, and use the given separation between training and test. We aimed at classifying the dataset by learning on a 3-layers ReLU neural network, with both internal layers having width $256$. We trained the neural networks by minimizing the cross-entropy loss with the \textit{Adam} optimizer and learning $0.01$ for 500 iterations. To create the discretizations, we drew three times $1000$ samples from the dataset (that have been used only for that purpose) and we constructed the clusterizing learning procedure based on the \textbf{MiniBatchKMeans} function from \textit{scikit-learn} library. We used $K=20,50,100$ clusters respectively. For each dataset size $n=1000,5000,10000$, we performed the following steps :

\begin{itemize}
    \item We draw a dataset $D_n$ of size $n$ not containing the samples used for the clusterings.
    \item We trained a first neural network on $D_n$ (column \textbf{raw dataset}).
    \item For each clustering, we discretized $D_n$ and then trained a neural network on the new dataset.
\end{itemize}




\section{Proofs of Section \ref{main_res_perf}}
\label{sec:proof:performances}


We give in this section the proofs relative to Section \ref{main_res_perf}.\\

\begin{proof}[Proof of Proposition \ref{prop:dnu-props}]
Let $p$ (resp. $q$) be the density of $P$ (resp. $Q$) with respect to any dominating measure $\zeta$. Let $B^* = \{\alpha p\geq q\}$ be the set reaching the supremum in the definition of $\tilde D_{\gl}$, namely

\begin{align*}
\tilde{D}_\alpha(P,Q) &= \finv\alpha\int_{B^*}\left(\alpha p-q\right)d\zeta.
\end{align*}

\noindent By taking the complementary in the supremum, we see that 

\begin{align*}
    \tilde{D}_\alpha(P,Q)&=\finv\alpha\underset{B}{\sup}\;Q(B) - \alpha P(B) + \left(1-\finv\alpha\right) \\ 
    &= \finv\alpha\int_{(B^*)^c}(q-\alpha p)d\zeta+ \left(1-\finv\alpha\right).
\end{align*}
\noindent Taking the average of both gives,

\begin{equation}
\label{eq:dnu-int}
\tilde{D}_\alpha(P,Q) = \frac{1}{2\alpha}\int\left|\alpha p - q\right|d\zeta + \frac{1}{2}\left(1-\frac{1}{\alpha}\right),
\end{equation}

\noindent from which we deduce

\begin{align}
  \label{eq:D-int}  D_\alpha(P,Q) &= \max(1,\alpha)\left[\frac{1}{2\alpha}\int\left|\alpha p - q\right|d\zeta + \frac{1}{2}\left(1-\frac{1}{\alpha}\right) - \left(1-\finv\alpha\right)_+\right] \\ 
  \label{eq:D-int-f}  &= \int f_\alpha(p/q)qd\zeta,
\end{align}

\noindent where the second equality comes from $(1/2)(1-1/\alpha) - (1-1/\alpha)_+ = -(1/2)|1-1/\alpha|$.

\noindent Equation~\ref{eq:dnu-cond} follows immediately from equations \ref{eq:dnu-int} and \ref{eq:D-int}.\\
\noindent From \eqref{eq:D-int-f}, $f_\alpha$ being convex, continuous and satisfies $f_\alpha(1) = 0$, $D_\alpha$ is an $f-$divergence.\\
\noindent We now show the inequalities. We only prove the lower bound as the upper bound is trivial. By definition of $\tilde{D}_\alpha$, for any measurable set $B$, we have,

\begin{align*}
    \tilde{D}_\alpha(P,Q) &\geq \finv\alpha\left(\alpha P(B) -Q(B)\right).
\end{align*}

\noindent Then taking $B$ as the whole space, we get,

\begin{align*}
    \tilde{D}_\alpha(P,Q) &\geq  \finv\alpha\left(\alpha-1\right)= 1 -\finv\alpha.
\end{align*}

\noindent Additionally, taking $B$ as the null set, we have trivially $\tilde{D}_\alpha \geq 0$, which implies $\tilde{D}_\alpha(P,Q)\geq \left(1-\finv\alpha\right)_+$, which leads to the lower bound.
\end{proof}

\begin{proof}[Proof of Theorem \ref{thm:dnu}]
\\ \noindent By the \textit{i.i.d.} assumption of the data, and by the independence of $T$ to the rest of the random variables, we have 

\begin{align*}
    \acc &= \nu\mathbb{P}\left(\phi(\hat{\theta}_n,\rz_0)=0\right) + \lambda(1-\nu)\mathbb{P}\left(\phi(\hat{\theta}_n,\rz_1)=1\right).
\end{align*}


%

\noindent We now define $B \coloneqq \left\{(\theta,z)\in\Theta\times\gZ : \phi(\theta,z)=1\right\}$ and rewrite ${\text{Acc}}_n(\phi;P,\gA)$ as

\begin{align}
\label{eq:perff}
{\text{Acc}}_n(\phi;P,\gA) &= \nu\left(1-\mathbb{P}\left((\hat{\theta}_n,{\rz_0})\in B\right)\right) + \lambda(1-\nu) \mathbb{P}\left((\hat{\theta}_n,\rz_1)\in B\right). 
\end{align}

\noindent Taking the maximum over all MIAs $\phi$ then reduces to taking the maximum of the r.h.s. of \eqref{eq:perff} over all measurable sets $B$. We then get 

\begin{align}
\label{eq:max1}
\underset{\phi}{\max}\;{\text{Acc}}_n(\phi;P,\gA) &=\lambda(1-\nu)\underset{B}{\max}\Bigl[\mathbb{P}\left((\hat{\theta}_n,{\rz_1})\in B\right) - \gamma \mathbb{P}\left((\hat{\theta}_n,\rz_0)\in B\right)\Bigr] + \nu,
\end{align}

\noindent where the maximum is taken over all measurable sets $B$. Let now $\zeta$ be a dominating measure of the distributions of $(\hat{\theta}_n,{\rz_0})$ and  $(\hat{\theta}_n,\rz_1)$ (for instance their average). We denote by $p$ (resp. $q$) the density of the distribution $\mathbb{P}_{(\hat{\theta}_n,{\rz_0})}$ of $(\hat{\theta}_n,{\rz_0})$ (resp. $\mathbb{P}_{(\hat{\theta}_n,{\rz_1})}$ for $(\hat{\theta}_n,\rz_1)$) with respect to $\zeta$. Then, the involved maximum in the r.h.s. of \eqref{eq:max1} is reached on the set $${B^* \coloneqq\{q/p\geq\gamma\}}.$$
The maximum being taken over all measurable sets in \eqref{eq:max1}, we may consider replacing $B$ by its complementary $B^c$ in the expression giving 

\begin{align}
\label{eq:max2}
\underset{\phi}{\max}\;{\text{Acc}}_n(\phi;P,\gA) &= \lambda(1-\nu)\underset{B}{\max}\;\Bigl[\gamma \mathbb{P}\left((\hat{\theta}_n,\rz_0)\in B\right) - \mathbb{P}\left((\hat{\theta}_n,{\rz_1})\in B\right)\Bigr] +\lambda(1-\nu),
\end{align}
which comes from the definition of $\gamma$. In this case the maximum is reached on the set $${B^*}^c \coloneqq \{q/p<\gamma\}.$$ 
In particular, by the definition of $\tilde D_\alpha$, we then get,
$$\underset{\phi}{\max}\;{\text{Acc}}_n(\phi;P,\gA) = \lambda(1-\nu) + \nu\tilde D_{\gamma}\left(\mathbb{P}_{(\hat{\theta}_n,\rz_0)},\mathbb{P}_{(\hat{\theta}_n,\rz_1)}\right).$$

\noindent Therefore, if $\lambda(1-\nu) \geq \nu$, it holds $\gamma\leq1$, and in particular $\tilde D_{\gamma}\left(\mathbb{P}_{(\hat{\theta}_n,\rz_0)},\mathbb{P}_{(\hat{\theta}_n,\rz_1)}\right) = \dnu$. Therefore, we have

\begin{align*}
    \underset{\phi}{\max}\;{\text{Acc}}_n(\phi;P,\gA) - \lambda(1-\nu) &= \nu\tilde D_{\gamma}\left(\mathbb{P}_{(\hat{\theta}_n,\rz_0)},\mathbb{P}_{(\hat{\theta}_n,\rz_1)}\right)\\ 
    &= \nu\dnu.
\end{align*}

\noindent On the converse, if $\lambda(1-\nu)\leq\nu$, it holds $\gamma\geq1$, and we have

\begin{align*}
    \underset{\phi}{\max}\;{\text{Acc}}_n(\phi;P,\gA) - \nu &= \lambda(1-\nu) + \nu\left(\tilde D_{\gamma}\left(\mathbb{P}_{(\hat{\theta}_n,\rz_0)},\mathbb{P}_{(\hat{\theta}_n,\rz_1)}\right)-1\right) \\ 
    &= \lambda(1-\nu) + \nu\left(\frac{1}{\gamma}\dnu + \left(1-\frac{1}{\gamma}\right)-1\right) \\ 
    &=\nu\frac{1}{\gamma}\dnu,
\end{align*}

\noindent hence the inequality.\\
\noindent Now, by the definition of the MIS, we have 
\begin{align*}
    \mis&=\finv{\min(\nu,\lambda(1-\nu))}\left(\nu + \lambda(1-\nu)-\left[\lambda(1-\nu) + \nu\tilde D_{\gamma}\left(\mathbb{P}_{(\hat{\theta}_n,\rz_0)},\mathbb{P}_{(\hat{\theta}_n,\rz_1)}\right)\right]\right) \\ 
    &= \finv{\min(1,1/\gamma)}\left(1-\tilde D_{\gamma}\left(\mathbb{P}_{(\hat{\theta}_n,\rz_0)},\mathbb{P}_{(\hat{\theta}_n,\rz_1)}\right)\right) \\ 
    &= 1 -  \finv{\min(1,1/\gamma)}\left(\min(1,1/\gamma) - 1 +\tilde D_{\gamma}\left(\mathbb{P}_{(\hat{\theta}_n,\rz_0)},\mathbb{P}_{(\hat{\theta}_n,\rz_1)}\right)\right) \\ 
    &= 1 -  \finv{\min(1,1/\gamma)}\left( \tilde D_{\gamma}\left(\mathbb{P}_{(\hat{\theta}_n,\rz_0)},\mathbb{P}_{(\hat{\theta}_n,\rz_1)}\right) - \lb\right) \\ 
    &= 1 - \dnu,
\end{align*}
\noindent hence the equality.

\end{proof}

\section{Proofs of Section \ref{overfitting}}
\label{sec:proof:overfitting}
We give here the proofs for the Section \ref{overfitting}. \\

\begin{proof}[Proof of Proposition \ref{prop:overfitting}]
Let $l_j \coloneqq l_j(\hat{\theta}_n) = l_{\hat{\theta}_n}(\rx_j,\ry_j)$. The learning procedure $\gA_{\varepsilon,\alpha}$ stops as soon as $\frac{1}{n}\sum_{j=1}^{n}l_{\hat{\theta}_n}(\rx_j,\ry_j) \leq \varepsilon\alpha$.\\
Let $B_\varepsilon\coloneqq\left\{j : l_{\hat{\theta}_n}(\rx_j,\ry_j)\leq\varepsilon\right\}$ be the set of samples with loss not larger than $\varepsilon$ at the end of the training. We then have the following sequence of inequalities:

\begin{align*}
    n\varepsilon\alpha \geq \sum_{j=1}^n l_{\hat{\theta}_n}(\rx_j,\ry_j) \geq \sum_{j\in B_\varepsilon}l_{\hat{\theta}_n}(\rx_j,\ry_j) + (n-\#B_\varepsilon)\varepsilon \geq (n-\#B_\varepsilon)\varepsilon.
\end{align*}

\noindent From the two extremes, we get that $\sum_{j=1}^{n}1\{l_j\leq\varepsilon\}\coloneqq\#B_\varepsilon \geq n(1-\alpha)$. From the $i.i.d.$ hypothesis on the data $\rz_1,\cdots,\rz_n$, taking the expectation gives the result.
\end{proof}

\begin{proof}[Proof of Theorem \ref{thm:overfitting}]
Let $S_{\theta}^{\varepsilon} := \{(x,y)\in\gX\times\gY : l_{\theta}(x,y)\leq\varepsilon\}$ be the $\varepsilon$-sub-level set of $l_\theta$ for all $\theta\in\Theta$.
We begin by proving the first point. Let $\gA$ be an $(\varepsilon,1-\alpha)$-overfitting learning procedure, and let $S^{\varepsilon} \coloneqq \{(\theta,x,y) : l_{\theta}(x,y) \leq\varepsilon\}$. From the definition of $\tilde D_\gl$, and by taking the complementary in the supremum, we have that 
\begin{align*}
\tilde{D}_\gl(\mathbb{P}_{(\hat{\theta}_n,\rz_0)},\mathbb{P}_{(\hat{\theta}_n,\rz_1)}) &\geq \flg\mathbb{P}((\hat{\theta}_n,\rx_1,\ry_1)\in S^{\varepsilon}) -  \mathbb  {P}((\hat{\theta}_n,\rx,\ry)\in S^{\varepsilon}) + \left(1-\flg\right) \\ 
&= \flg\mathbb{P}((\rx_1, \ry_1)\in S_{\hat{\theta}_n}^{\varepsilon}) - \mathbb{P}((\rx,\ry)\in S_{\hat{\theta}_n}^{\varepsilon}) + \left(1-\flg\right)\\ 
&\geq \flg(1 - \alpha) -  \mathbb{P}((\rx, \ry)\in S_{\hat{\theta}_n}^{\varepsilon})+ \left(1-\flg\right) \\
&= 1-\flg\alpha - \int_{\theta\in\Theta} P((\rx, \ry)\in S_{\theta}^{\varepsilon})d\mu_{\hat{\theta}_n},
\end{align*}
\noindent which proves the first point by 

\begin{align*}
    \mis &= 1 - \dnu \\
    &= 1 - \max(1,\gl)\left(\tilde{D}_\gl(\mathbb{P}_{(\hat{\theta}_n,\rz_0)},\mathbb{P}_{(\hat{\theta}_n,\rz_1)})-\lb\right) \\ 
    &= \begin{cases}
        1 - \gamma\left[1-\finv\gamma\alpha - \mathbb{P}\left((\rx,\ry)\in S_{\hat{\theta}_n}^\varepsilon\right) - \left(1-\finv\gamma\right) \right] & \text{, if } \gamma\geq1 \\ 
        1 - \left[1 - \finv\gamma\alpha - \mathbb{P}\left((\rx,\ry)\in S_{\hat{\theta}_n}^\varepsilon\right)\right] & \text{, if } \gamma\leq1
    \end{cases}\\
    &= \begin{cases}
        \alpha + \gamma\mathbb{P}\left((\rx,\ry)\in S_{\hat{\theta}_n}^\varepsilon\right) & \text{, if } \gamma \geq1 \\ 
        \finv\gamma\left(\alpha + \gamma\mathbb{P}\left((\rx,\ry)\in S_{\hat{\theta}_n}^\varepsilon\right)\right) & \text{, if } \gamma \leq1
    \end{cases}\\
    &\leq \max(1,\lg)\left(\alpha + \fgl\mathbb{P}\left((\rx,\ry)\in S_{\hat{\theta}_n}^\varepsilon\right)\right).
\end{align*}

\noindent Now assume that we have a sequence of learning procedures $(\gA_{\eta})_{\eta\in\R^+}$ that stop as soon as $L_n\leq\eta$. Assume the additional hypotheses given in the second point of Theorem \ref{thm:overfitting} hold. Let $\alpha\in(0,1)$ be a fixed scalar. By Proposition \ref{prop:overfitting}, $\gA_\eta$ is $(\eta/\alpha,1-\alpha)$-overfitting, so that by the first point proven above, we have 

$$\text{Sec}_{\nu,\lambda,n}(P,\gA_\eta) \leq \max(1,\lg)\left(\alpha + \fgl\mathbb{P}\left((\rx,\ry)\in S_{\hat{\theta}_n}^{\eta/\alpha}\right)\right).$$

\noindent For any $\theta\in\Theta$, we have 

\begin{align*}
    P((\rx,\ry)\in S_\theta^{\eta/\alpha}) &= \E\left[P((\rx,\ry)\in S_\theta^{\eta/\alpha}\mid\rx)\right] \\ 
    &= \E\left[P(\omega(y,\Psi_\theta(x))\leq\eta/\alpha\mid\rx)\right] \\ 
    &\overset{\eta\to0^+}{\to} 0,
\end{align*}

\noindent where the limit comes from the continuity of $\omega$ and the absolute continuity of the distribution of $\ry$ given $\rx$. In particular, for any $\alpha\in(0,1)$, we then have 

$$\underset{\eta\to0}{\lim}\;\text{Sec}_{\nu,\lambda,n}(P,\gA_\eta)\leq\max(1,1/\gamma)\alpha.$$

\noindent Taking the infimum over $\alpha$ gives the result.

    
\end{proof}

\section{Proofs of Section \ref{main_res_disc}}
\label{sec:proof:means-discrete}
We give here the proofs for the Section \ref{main_res_disc}.

\begin{proof}[Proof of Theorem \ref{theo:empmean}]
\noindent First, note that we have for any distributions $P$ and $Q$,
\begin{align*}
    \\\tilde D_\alpha(P,Q) &= \finv\alpha\underset{B}{\sup}\; \left[\alpha P(B) - \alpha Q(B) + (\alpha-1)Q(B) \right]\\ 
    &\leq \begin{cases}
        \|P-Q\|_{TV}&\text{, if }\alpha<1\\
        \|P-Q\|_{TV} + \left(1-\finv\alpha\right)&\text{, if }\alpha\geq1
    \end{cases}\\
    &= \|P-Q\|_{TV} + \left(1-\finv\alpha\right)_+.
\end{align*}

\noindent By Theorem \ref{thm:dnu}, we have
\begin{align*}
    \mis &\geq 1 - \max(1,\gl)\left\|\mathbb{P}_{\left(\hat{\theta}_n,\rz_0\right)}-\mathbb{P}_{\left(\hat{\theta}_n,\rz_1\right)}\right\|_{TV}.
\end{align*}
\noindent We will therefore prove an upper bound on $\left\|\mathbb{P}_{\left(\hat{\theta}_n,\rz_0\right)}-\mathbb{P}_{\left(\hat{\theta}_n,\rz_1\right)}\right\|_{TV}$.
\noindent For any positive integer $j$, we let ${m_j \coloneqq \E\left[\left\|C^{-1/2}\Bigl\{L(\rz_1)-\E 
\left[L(\rz_1)\right]\Bigr\}\right\|_2^j\right]}$ be the expectation of the $j$-th power of the norm of the centered and reduced version of $L(\rz_1)$, where $C$ is the covariance matrix of $L(\rz_1)$.

\noindent Setting $L_n := \frac{1}{n}\sum_{j=1}^{n}L(\rz_j)$, by the data processing inequality \citep{DPT1973} applied to the total variation distance, for any measurable map $g:\R^d\times\gZ\to\gZ'$ taking values in any measurable space $\gZ'$, we have
\begin{equation*}
    \left\|\gL(g(L_n,\rz_1))- \gL(g(L_n,\rz_0))\right\|_{\textrm{TV}} \leq \left\|\gL((L_n,\rz_1))- \gL((L_n,\rz_0))\right\|_{\textrm{TV}}.
\end{equation*}
\noindent The inequality holds
in particular  for $g$ defined for all $(l,z)$ in $\R^d\times\gZ$ by $g(l,z) = (F(l),z)$, from which we get 

\begin{align*}
    \left\|\mathbb{P}_{\left(\hat{\theta}_n,\rz_1\right)}-\mathbb{P}_{\left(\hat{\theta}_n,\rz_0\right)}\right\|_{TV}&\leq \left\|\gL((L_n,\rz_1))- \gL((L_n,\rz_0))\right\|_{\textrm{TV}}\\&=\mathbb{E}\left[\left\|\gL(L_n\mid\rz_1)-\gL(L_n)\right\|_{\textrm{TV}}\right],
\end{align*}
\noindent in which the expectation is taken over $\rz_1$. Here, for any random variable $\rx$, $\gL(\rx)$ denotes its distribution. \\
\noindent For $j=1,\ldots, n$, denote by $\rv_j \coloneqq C^{-1/2}(L(\rz_j) - \E[L(\rz_j)])$ the centered and reduced version of $L(\rz_j)$. The total variation distance being invariant by translation and rescaling, we shall write 
\begin{align*}
 \left\|\gL(L_n\mid\rz_1)-\gL(L_n)\right\|_{\textrm{TV}} & = \Big\|\gL(L_n - \E[L(\rz_1)])-\gL(L_n - \E[L(\rz_1)]\mid\rz_1)\Big\|_{\textrm{TV}} \\ 
    &= \left\|\gL\left(\frac{1}{n}\sum_{j=1}^{n}(L(\rz_j)-\E[L(\rz_j)])\right)-\gL\left(\frac{1}{n}\sum_{j=1}^{n}(L(\rz_j)-\E[L(\rz_j)])\Bigm\vert\rz_1\right)\right\|_{\textrm{TV}} \\ 
    &= \left\|\gL\left(\frac{C^{-1/2}}{\sqrt{n}}\sum_{j=1}^{n}(L(\rz_j)-\E[L(\rz_j)])\right)-\gL\left(\frac{C^{-1/2}}{\sqrt{n}}\sum_{j=1}^{n}(L(\rz_j)-\E[L(\rz_j)])\Bigm\vert\rz_1\right)\right\|_{\textrm{TV}} \\ 
    &= \left\|\gL\left(\frac{1}{\sqrt{n}}\sum_{j=1}^{n}\rv_j\right)-\gL\left(\frac{1}{\sqrt{n}}\sum_{j=1}^{n}\rv_j\Bigm\vert\rv_1\right)\right\|_{\textrm{TV}}.
\end{align*}

\noindent Denoting by $\gN_d(\beta,\Sigma)$ the $d-$dimensional normal distribution with parameters $(\beta,\Sigma)$, it holds almost surely that
\begin{align*}
    \left\|\gL\left(\frac{1}{\sqrt{n}}\sum_{j=1}^{n}\rv_j\right)-\gL\left(\frac{1}{\sqrt{n}}\sum_{j=1}^{n}\rv_j\Bigm\vert\rv_1\right)\right\|_{\textrm{TV}} 
    &\leq \left\|\gL\left(\frac{1}{\sqrt{n}}\sum_{j=1}^{n}\rv_j\right) - \gN_d(0,\mI_d)\right\|_{\textrm{TV}} \\ 
    &+ \left\|\gL\left(\frac{1}{\sqrt{n}}\sum_{j=1}^{n}\rv_j\Bigm\vert \rv_1\right) - \gN_d\left(\frac{1}{\sqrt{n}}\rv_1,\frac{n-1}{n}I_d\right)\right\|_{\textrm{TV}} \\ 
    &+ \left\|\gN_d\left(0,\mI_d\right) - \gN_d\left(\frac{1}{\sqrt{n}}\rv_1,\frac{n-1}{n}\mI_d\right)\right\|_{\textrm{TV}} \\ 
    &= \left\|\gL\left(\frac{1}{\sqrt{n}}\sum_{j=1}^{n}\rv_j\right) - \gN_d(0,\mI_d)\right\|_{\textrm{TV}} \\ 
    &+ \left\|\gL\left(\frac{1}{\sqrt{n-1}}\sum_{j=1}^{n-1}\rv_j\right) - \gN_d(0,\mI_d)\right\|_{\textrm{TV}} \\ 
    &+ \left\|\gN_d\left(0,\mI_d\right) - \gN_d\left(\frac{1}{\sqrt{n}}\rv_1,\frac{n-1}{n}\mI_d\right)\right\|_{\textrm{TV}}.
\end{align*}

\noindent Applying Theorem 2.6 of \citet{cltTV2016} with variable $\rv_j$ and parameter $r=2$, one can upper bound the first two terms by some constant $C(d)(1+m_3)$ times $n^{-1/2}$. The constant $C(d)$ here depends only on the dimension of the parameters $d$. We may upper bound the last term by the following proposition.

\begin{proposition}
\label{prop:empmeangauss}   
Let $n$ be an integer and $\beta\in\R^d$ be any $d-$dimensional vector. Then it holds that 
\begin{align*}
\left\|\gN_d(0,\mI_d) - \gN_d\left(\frac{1}{\sqrt{n}}\beta,\frac{n-1}{n}\mI_d\right)\right\|_{\textrm{TV}} \leq \frac{\sqrt{d}}{2n} + \frac{1}{2\sqrt{n}}\|\beta\|_2. 
\end{align*}
\end{proposition}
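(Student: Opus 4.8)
\medskip
\noindent\textbf{Proof plan for Proposition~\ref{prop:empmeangauss}.} The plan is to interpolate through the Gaussian $\gN_d(\tfrac{1}{\sqrt n}\beta,\mI_d)$ and apply the triangle inequality for total variation,
\begin{align*}
\left\|\gN_d(0,\mI_d) - \gN_d\!\left(\tfrac{1}{\sqrt n}\beta,\tfrac{n-1}{n}\mI_d\right)\right\|_{\textrm{TV}}
&\leq \left\|\gN_d(0,\mI_d) - \gN_d(\tfrac{1}{\sqrt n}\beta,\mI_d)\right\|_{\textrm{TV}} \\
&\quad{}+ \left\|\gN_d(\tfrac{1}{\sqrt n}\beta,\mI_d) - \gN_d\!\left(\tfrac{1}{\sqrt n}\beta,\tfrac{n-1}{n}\mI_d\right)\right\|_{\textrm{TV}},
\end{align*}
which separates a pure mean shift (common covariance $\mI_d$) from a pure variance contraction (common mean). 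I will bound the two pieces independently. Note that the argument requires $n\geq 2$; for $n=1$ the right-hand law is degenerate, but only $n\geq 2$ is needed in the application, where the conditional sum runs over $j=1,\dots,n-1$.

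For the mean-shift term I would invoke orthogonal invariance of the standard Gaussian to rotate $\beta$ onto a coordinate axis, reducing it to the one-dimensional quantity $\|\gN(0,1)-\gN(\|\beta\|/\sqrt n,1)\|_{\textrm{TV}}$. Since the two densities cross at the midpoint, this equals $2\Phi\!\left(\tfrac{\|\beta\|}{2\sqrt n}\right)-1=\mathbb{P}\!\left(|Z|\leq\tfrac{\|\beta\|}{2\sqrt n}\right)$ with $Z\sim\gN(0,1)$, and bounding the density of $Z$ by $(2\pi)^{-1/2}$ gives $\mathbb{P}(|Z|\leq t)\leq \tfrac{2t}{\sqrt{2\pi}}\leq t$ because $\sqrt{2\pi}>2$; hence this term is at most $\tfrac{\|\beta\|}{2\sqrt n}$.

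For the variance-contraction term, translation invariance of total variation removes the common mean, leaving $\bigl\|\gN_d(0,\mI_d)-\gN_d(0,\tfrac{n-1}{n}\mI_d)\bigr\|_{\textrm{TV}}$, and here I would use Pinsker's inequality together with the closed form of the Gaussian Kullback--Leibler divergence: with $\Sigma=\tfrac{n-1}{n}\mI_d$,
\begin{align*}
\KL\bigl(\gN_d(0,\Sigma)\,\big\|\,\gN_d(0,\mI_d)\bigr)
&=\frac{d}{2}\left(\frac{n-1}{n}-1-\ln\frac{n-1}{n}\right) \\
&=\frac{d}{2}\left(-\frac1n-\ln\!\left(1-\frac1n\right)\right).
\end{align*}
Applying the elementary inequality $-x-\ln(1-x)=\sum_{k\geq 2}x^k/k\leq \tfrac{x^2}{2(1-x)}$ on $[0,1)$ with $x=1/n$ gives $\KL\leq \tfrac{d}{4n(n-1)}$, so by Pinsker this term is at most $\tfrac12\sqrt{\tfrac{d}{2n(n-1)}}$, which is $\leq \tfrac{\sqrt d}{2n}$ whenever $2n(n-1)\geq n^2$, i.e. for $n\geq 2$. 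Adding the two estimates yields the claimed bound.

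The one-dimensional tail estimate and the logarithmic inequality are routine; \textbf{the main obstacle is the variance-contraction term}, namely carrying the sharp constant through the Gaussian $\KL$ formula and checking the comparison $2n(n-1)\geq n^2$ so that Pinsker's bound reduces to the clean $\tfrac{\sqrt d}{2n}$ (this is also why the inequality needs $n\geq 2$). An alternative there would be a direct computation of $\|\gN(0,1)-\gN(0,\sigma^2)\|_{\textrm{TV}}$ combined with a subadditivity-in-$d$ argument, but routing through Pinsker keeps the dependence on $d$ transparent and is the path I would take.
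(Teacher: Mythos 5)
Your proof is correct, and it reaches the stated bound by a genuinely different decomposition than the paper. The paper applies a single known inequality (Proposition 2.1 of the cited reference on the total variation distance between Gaussians), which is in effect Pinsker's inequality applied to the joint Kullback--Leibler divergence of the two laws, obtaining $\frac{1}{2}\sqrt{-\frac{d}{n}-d\ln\frac{n-1}{n}+\frac{1}{n}\|\beta\|_2^2}$, and then separates the mean and covariance contributions \emph{inside} the square root via $\sqrt{a+b}\leq\sqrt{a}+\sqrt{b}$, controlling the covariance part with $x-1-\ln x\leq(x-1)^2$ for $x\geq 1/3$. You instead separate the two contributions \emph{before} any bounding, via the triangle inequality through the intermediate law $\gN_d(\tfrac{1}{\sqrt n}\beta,\mI_d)$; you then treat the mean shift exactly (rotation to one dimension, the crossing-point formula $2\Phi(\|\beta\|_2/(2\sqrt n))-1$, and the density bound $(2\pi)^{-1/2}$, which gives the factor $2/\sqrt{2\pi}<1$ and hence the clean $\tfrac{1}{2\sqrt n}\|\beta\|_2$), and only use Pinsker for the pure variance contraction, where your series bound $-x-\ln(1-x)\leq\frac{x^2}{2(1-x)}$ yields $\KL\leq\frac{d}{4n(n-1)}$ and the comparison $2n(n-1)\geq n^2$ closes the argument for $n\geq 2$. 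Your route is more self-contained (no external Gaussian-TV proposition is needed) and your mean-shift estimate is sharper than what Pinsker would give there, at the cost of a slightly longer argument; the constants come out identical. Both proofs require $n\geq 2$ --- the paper's does too, since it needs $\frac{n-1}{n}\geq 1/3$ and a nondegenerate covariance --- so your explicit flagging of this restriction is appropriate rather than a defect, and it is harmless in the application where the relevant index is $n-1\geq 1$.
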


\noindent Applying Proposition \ref{prop:empmeangauss} to the last quantity, it holds that 
$$\left\|\gN_d\left(0,\mI_d\right) - \gN_d\left(\frac{1}{\sqrt{n}}\rv_1,\frac{n-1}{n}\mI_d\right)\right\|_{\textrm{TV}} \leq \frac{\sqrt{d}}{2n} + \frac{1}{2\sqrt{n}}\|\rv_1\|_2,$$
and the result follows from taking the expectation, with $c_{L,P}=C(d)(1+m_3) + \frac{m_1}{2}$.
\end{proof}

\begin{proof}[Proof of Proposition \ref{prop:empmeangauss}]

\noindent Applying Proposition 2.1 of \citet{tvgauss2018}, it holds almost surely that 
\begin{align*}
&\left\|\gN_d(0,\mI_d) - \gN_d\left(\frac{1}{\sqrt{n}}\beta,\frac{n-1}{n}\mI_d\right)\right\|_{\textrm{TV}} \\
&\leq \frac{1}{2}\sqrt{tr\left(\mI_d\frac{n-1}{n}\mI_d-\mI_d\right) + \frac{1}{n}\|\beta\|_2^2-\ln\left(det\left(\frac{n-1}{n}\mI_d\right)\right)}  \\ 
&= \frac{1}{2}\sqrt{-\frac{d}{n} + \frac{1}{n}\|\beta\|_2^2 - d\ln\left(\frac{n-1}{n}\right)} \\ 
&\leq\frac{1}{2}\sqrt{-d\left(\frac{1}{n}+\ln\left(\frac{n-1}{n}\right)\right)}+\frac{1}{2}\sqrt{\frac{1}{n}\|\beta\|_2^2} \\ 
&\leq \frac{\sqrt{d}}{2n} + \frac{1}{2\sqrt{n}}\|\beta\|_2,
\end{align*}
where $tr(\cdot)$ is the trace operator and $det(\cdot)$ is the matrix determinant operator. The third inequality is due to ${\sqrt{a+b}\leq\sqrt{a}+\sqrt{b}}$ for positive scalars $a$ and $b$. The first term in the last inequality comes from the fact that ${x-1-\ln(x)\leq(x-1)^2}$ if $x\geq 1/3$ which holds with $x=\frac{n-1}{n}$ for $n\geq 2$.

\end{proof}

\begin{proof}[Proof of Remark \ref{theo:empmean}]
\label{proof:mean:minn}
From \eqref{eq:empmean}, we have that
\begin{align*}
    cn^{-1/2} + \frac{\sqrt{d}}{2}n^{-1}\leq\varepsilon,
\end{align*}
is sufficient to ensure $\left\|\mathbb{P}_{\left(\hat{\theta}_n,\rz_1\right)}-\mathbb{P}_{\left(\hat{\theta}_n,\rz_0\right)}\right\|_{TV}\leq\varepsilon$, hence a security of at least $1-\max(1,\gl)\varepsilon$.

\noindent Setting $x\coloneqq n^{-1/2}$, it is equivalent to
$$cx + \frac{\sqrt{d}}{2}x^2-\varepsilon\leq0.$$

\noindent From the the study of the above quadratic function, as $x\geq 0$ is assumed, we get that this is equivalent to

\begin{align*}
    n^{-1/2}&\leq \frac{-c + \sqrt{c^2 + 2\varepsilon\sqrt{d}}}{\sqrt{d}} \\ 
    \iff n &\geq \frac{d}{2c^2 + 2\varepsilon\sqrt{d}-2c\sqrt{c^2+2\varepsilon\sqrt{d}}} \\ 
    &= \frac{d}{2c^2}\frac{1}{1+\frac{\varepsilon\sqrt{d}}{c^2} - \sqrt{1 + 2\frac{\varepsilon\sqrt{d}}{c^2}}}.
\end{align*}

\noindent From the mean-value form of Taylor theorem of order $2$ at $0$, there exists $0\leq\Bar{u}\leq u\coloneqq \frac{\varepsilon\sqrt{d}}{c^2}$ such that 

$$\sqrt{1+2u} = 1 + u - \frac{1}{2}(1+2\Bar{u})^{-3/2}.$$

\noindent Therefore, the condition becomes 

\begin{align*}
    n &\geq \frac{d}{2c^2}\frac{2(1+2\Bar{u})^{3/2}}{u^2} \\ 
    &= \varepsilon^{-2}c^2(1+2\Bar{u})^{3/2}.
\end{align*}

\noindent As $\Bar{u} \leq u \leq \frac{\sqrt{d}}{c^2}$, $n \geq \varepsilon^{-2}c^2(1+\frac{\sqrt{d}}{c^2})^{3/2}$ ensures the above condition, hence the result.
\end{proof}

\noindent We recall from Proposition \ref{prop:dnu-props} that the following equation holds
\begin{equation*}
\dnu = \mathbb{E}_{\rz_1}\left[{D}_\gamma\left(\mathbb{P}_{\hat{\theta}_n},\mathbb{P}_{\hat{\theta}_n\mid\rz_1}\right)\right].   
\end{equation*}

\begin{proof}[Proof of Theorem \ref{theo:disc}]
The proof will be divided in two steps. First, we will prove the inequality 
\begin{equation}
\label{eq:deltabornesup}
\dnu \leq 
\max(1,\gl)\left[\frac{1}{2\gamma}\sum_{k=1}^{K}\E\left[\left|\frac{B_k}{n} - \fgl p_k\right|\right] - \frac{1}{2}\left|1-\flg\right|\right],
\end{equation}

\noindent for any distribution $P$ and learning procedure $\gA$. Second, we prove that this upper bound is reached for learning procedures that map any data set to a Dirac mass, summarized in the following lemma.

\begin{lemma}
\label{lem:deltamax}
For $k=1,\ldots,K$, let $B_k$ be random variables having Binomial distribution with parameters $(n,p_k)$. Suppose that $\gA(z_1,\cdots,z_n) = \delta_{F\left(\frac{1}{n}\sum_{j=1}^n\delta_{z_j}\right)}$ for any $n\in\mathbb{N}$ and $z_1,\ldots,z_n\in\gZ$, for some measurable map ${F:\gM\to\Theta}$ with infinite range $|\Theta|=\infty$, i.e. $\hat{\theta}_n \overset{\gL}{=}F\left(\frac{1}{n}\sum_{j=1}^n\delta_{\rz_j}\right)$. Then we have 

$$\underset{F}{\max}\;\dnu =  \max(1,\gl)\left[\frac{1}{2\gamma}\sum_{k=1}^{K}\E\left[\left|\frac{B_k}{n} - \fgl p_k\right|\right] - \frac{1}{2}\left|1-\flg\right|\right].$$
\end{lemma}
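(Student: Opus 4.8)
The plan is to reduce Lemma~\ref{lem:deltamax} to a single closed-form evaluation, namely that of $D_\gamma\!\left(\mathbb{P}_{\hat P_n}\otimes P,\,\mathbb{P}_{(\hat P_n,\rz_1)}\right)$ where $\hat P_n=\frac1n\sum_{j=1}^n\delta_{\rz_j}$ is the empirical measure. Concretely I would show (i) that for \emph{any} learning procedure $\dnu$ is at most this quantity, with equality whenever the procedure is a deterministic injective function of $\hat P_n$, and (ii) that this quantity equals $\max(1,\gamma)\big[\frac{1}{2\gamma}\sum_{k=1}^K\mathbb{E}\,|\frac{B_k}{n}-\gamma p_k|-\frac12|1-\frac1\gamma|\big]$. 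Part (i) for general $\gA$ is exactly \eqref{eq:deltabornesup}, which is available here; part (i) with equality for injective $F$ together with (ii) then yields Lemma~\ref{lem:deltamax}, and combined with \eqref{eq:deltabornesup} it identifies $\underset{\gA}{\max}\,\dnu$ and hence \eqref{eq:minsecdisc} through $\mis=1-\dnu$ (Theorem~\ref{thm:dnu}).

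For (i), I would start from the identity $\dnu=D_\gamma\!\left(\mathbb{P}_{\hat\theta_n}\otimes P,\,\mathbb{P}_{(\hat\theta_n,\rz_1)}\right)$, which follows from equation \eqref{eq:dnu-cond} in Proposition~\ref{prop:dnu-props} together with $\mathbb{P}_{(\hat\theta_n,\rz_0)}=\mathbb{P}_{\hat\theta_n}\otimes P$ (since $\rz_0$ is drawn from $P$ independently of the training sample). By the standing assumption, conditionally on $\hat P_n$ the parameters $\hat\theta_n$ are produced from $\hat P_n$ by a fixed Markov kernel that does not depend on $\rz_1$; hence the channel $(Q,z)\mapsto(\text{kernel}(Q),z)$ sends the pair $\left(\mathbb{P}_{\hat P_n}\otimes P,\,\mathbb{P}_{(\hat P_n,\rz_1)}\right)$ to $\left(\mathbb{P}_{\hat\theta_n}\otimes P,\,\mathbb{P}_{(\hat\theta_n,\rz_1)}\right)$, and the data-processing inequality for the $f$-divergence $D_\gamma$ (Proposition~\ref{prop:dnu-props}) gives $\dnu\le D_\gamma\!\left(\mathbb{P}_{\hat P_n}\otimes P,\,\mathbb{P}_{(\hat P_n,\rz_1)}\right)$. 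When $\hat\theta_n=F(\hat P_n)$ with $F$ injective, this kernel is a deterministic bijection onto its image; since the set of empirical measures of $n$ points over $\{u_1,\dots,u_K\}$ is countable (finite if $K<\infty$), its inverse is automatically measurable, so the data-processing inequality applies in both directions and the bound becomes an equality. Injective $F$ into $\Theta$ exist because $|\Theta|=\infty$.

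For (ii), I would encode $\hat P_n$ by the count vector $N=(N_1,\dots,N_K)$, $N_k:=\#\{j\le n:\rz_j=u_k\}$, which is multinomial, so that marginally $N_k\overset{\gL}{=}B_k\sim\mathrm{Bin}(n,p_k)$. With respect to counting measure on pairs $(\vec m,u_k)$, the density of $\mathbb{P}_N\otimes P$ is $\mathbb{P}(N=\vec m)\,p_k$ and that of $\mathbb{P}_{(N,\rz_1)}$ is $\mathbb{P}(N=\vec m)\,\mathbb{P}(\rz_1=u_k\mid N=\vec m)$; by exchangeability of $(\rz_1,\dots,\rz_n)$ conditionally on $N$ the latter conditional probability equals $m_k/n$. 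Plugging these into the integral form \eqref{eq:D-int} of $D_\gamma$, the integral of $|\gamma p'-q'|$ collapses to $\sum_{k}\sum_{\vec m}\mathbb{P}(N=\vec m)\,|\gamma p_k-\frac{m_k}{n}|=\sum_{k}\mathbb{E}\,|\gamma p_k-\frac{B_k}{n}|$, and multiplying by $\frac{1}{2\gamma}\max(1,\gamma)$ and subtracting $\frac12\max(1,\gamma)|1-\frac1\gamma|$ — using $\frac12(1-\frac1\gamma)-(1-\frac1\gamma)_+=-\frac12|1-\frac1\gamma|$, as in the proof of Proposition~\ref{prop:dnu-props} — reproduces exactly the asserted expression. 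This establishes Lemma~\ref{lem:deltamax}, and with (i) the maximum over all learning procedures is attained precisely at deterministic injective functions of the empirical measure.

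The main obstacle is the data-processing step in (i): one must argue cleanly that, conditionally on $\hat P_n$, the parameters are independent of $\rz_1$ and are the image of $\hat P_n$ under a fixed kernel, and — for the equality case — that an injective statistic with countable range has a measurable inverse, so that relabeling $\hat P_n$ by $F(\hat P_n)$ leaves $D_\gamma$ unchanged. After that, step (ii) is bookkeeping, the only delicate points being the exchangeability identity $\mathbb{P}(\rz_1=u_k\mid N=\vec m)=m_k/n$, the marginal law $N_k\overset{\gL}{=}B_k$, and keeping the factors $\gamma$, $(\cdot)_+$ and $\max(1,\gamma)$ straight; the case $K=\infty$ requires only a routine monotone-convergence check that the infinite sums and the dominating-measure manipulations remain valid.
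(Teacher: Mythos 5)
Your proof is correct, and it reaches the paper's formula by a recognizably different route. The paper's argument partitions $\gM_n$ into the fibers $F^{-1}(\{\theta\})$, bounds the resulting sum of absolute values of inner sums by the triangle inequality (\eqref{eq:deltadisclem}), observes that injectivity of $F$ turns this inequality into an equality, and then evaluates the remaining sum by explicitly computing the multinomial probabilities $P^n(\hat{P}_n=Q)$ and $P^{n-1}(\hat{P}_n^k=Q)$ in \eqref{eq:binomdisc} and \eqref{eq:binomdisc2} --- i.e.\ it conditions on $\rz_1$ and tracks the law of the counts. You instead invoke the data-processing inequality for the $f$-divergence $D_\gamma$ under the channel $(Q,z)\mapsto(F(Q),z)$ --- which is exactly what the partition-plus-triangle-inequality step amounts to, stated abstractly --- and you evaluate the extremal quantity $D_\gamma\bigl(\mathbb{P}_{\hat{P}_n}\otimes P,\mathbb{P}_{(\hat{P}_n,\rz_1)}\bigr)$ by conditioning the other way, via the exchangeability identity $\mathbb{P}(\rz_1=u_k\mid N)=N_k/n$, which hands you the likelihood ratio $N_k/(np_k)$ with no multinomial-coefficient bookkeeping. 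Both halves check out: the conditional independence of $\hat{\theta}_n$ and $\rz_1$ given $\hat{P}_n$ follows from the standing assumption that $\gA$ factors through the empirical measure; the reverse data-processing step for injective $F$ is legitimate because $\gM_n$ is countable, so the inverse map is measurable and such an injection into the infinite set $\Theta$ exists; and your integral-form computation reproduces the stated constant via $\tfrac12(1-\tfrac1\gamma)-(1-\tfrac1\gamma)_+=-\tfrac12\left|1-\tfrac1\gamma\right|$. What your version buys is conceptual economy --- it makes transparent that the empirical measure is the maximally leaky statistic and that any injective $F$ attains the maximum --- at the cost of the measurability and invertibility technicalities you correctly flag; the paper's version is more elementary and self-contained but hides the data-processing inequality inside an ad hoc triangle-inequality step.
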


\noindent Lemma \ref{lem:deltamax} and \eqref{eq:deltabornesup} will imply Theorem \ref{theo:disc} as follows,

\begin{align*}
    \underset{\gA}{\min}\;\mis &= 1 - \underset{\gA}{\max}\;\dnu \\ 
    & = 1 - \max(1,\gl)\left[\frac{1}{2\gamma}\sum_{k=1}^{K}\E\left[\left|\frac{B_k}{n} - \fgl p_k\right|\right] - \frac{1}{2}\left|1-\flg\right|\right].
\end{align*}

 
\noindent We first prove \eqref{eq:deltabornesup}.\\ 
Since $\hat{\theta}_n$ has distribution $G(\hat{P}_n)$ conditionally on $\rvz$, where $\hat{P}_n \coloneqq \frac{1}{n}\sum_{j=1}^{n}\delta_{\rz_j}$ is the empirical distribution of the data set, from Proposition \ref{prop:G}, we have

\begin{align}
\begin{split}
\label{eq:PB}
    \mathbb{P}(\hat{\theta}_n\in B) &= \E[\mathbb{P}(\hat{\theta}_n\in B\vert\rvz)] \\ 
    &=\E[G(\hat{P}_n)(B)]
\end{split}  \\
\label{eq:PBcond}
    \mathbb{P}(\hat{\theta}_n\in B\vert\rz_1) &= \E[G(\hat{P}_n)(B)\vert\rz_1],
\end{align}
for any measurable set $B$.\\
Recall that $u_{1},\ldots,u_{K}$ are the (fixed) support points of $P$. For any $k\in\{1,\cdots,K\}$, let $ \hat{P}_n^k \coloneqq \frac{1}{n}\left(\delta_{u_k} + \sum_{j=2}^{n}\delta_{\rz_j}\right)$. 
Using equations \ref{eq:dnu-cond}, \ref{eq:PB} and \ref{eq:PBcond} we may rewrite $\tilde D_\gamma$ as  
\begin{equation}
\label{eq:deltadisc}
\dt= \sum_{k=1}^{K}p_{k}\flg\underset{B}{\sup}\left(\fgl \E[G(\hat{P}_n)(B)]-\E[G(\hat{P}^k_n)(B)]\right) .
\end{equation}
For any integer $n$, let $\gM_n$ be the set of all possible empirical distributions for data sets with $n$ points and let  $\gG_{n}=G(\gM_n)$. Since $P$ has at most countable support, then $\gG_{n}$ is at most countable and \eqref{eq:deltadisc}
gives
\begin{equation}
\label{eq:deltadiscBis}
\dt= \sum_{k=1}^{K}p_{k}\flg\underset{B}{\sup}\sum_{g\in\gG_n} g(B)\left[\fgl \mathbb{P}(G(\hat{P}_n)=g) - \mathbb{P}(G(\hat{P}_n^k)=g)\right].
\end{equation}

\noindent For some fixed $g\in\gG_n$, let us denote by $\gM_{n}(g)=G^{-1}(\{g\})\cap \gM_{n}$ the set of possible empirical distributions $Q$ in $\gM_{n}$ such that $G(Q)=g$. Then we have for any $g\in\gG_n$,
$$
g(B)\left(\fgl \mathbb{P}(G(\hat{P}_n)=g) - \mathbb{P}(G(\hat{P}_n^k)=g)\right)=\sum_{Q\in\gM_{n}(g)}
G(Q)(B)\left(\fgl P^n(\hat{P}_n=Q) - P^n(\hat{P}_n^k=Q)\right),
$$

\noindent so that summing over all $g$ gives

\begin{equation}
\label{eq:sumsupsum}
\dt= \sum_{k=1}^{K}p_{k}\flg\underset{B}{\sup}\sum_{Q\in\gM_n}G(Q)(B)\left[\fgl P^n(\hat{P}_n=Q) - P^n(\hat{P}_n^k=Q)\right],
\end{equation}

\noindent since $\left(\gM_{n}(g)\right)_{g\in \gG_n}$ is a partition of $\gM_n$.
As the distribution is discrete, any possible value $Q$ of $\hat{P}_n$ is uniquely determined by a $K-$tuple $(k_1,\cdots,k_K)$ (if $K=\infty$ then by a sequence $(k_1,k_2,\cdots)$) of non-negative integers such that $\sum_{j=1}^{K}k_j = n$ and $Q= \frac{1}{n}\sum_{j=1}^{K}k_j\delta_{u_j}$. The $K-$tuple (or sequence) corresponds to the distribution of the samples among the atoms, that is, if we define, for $j=1,\ldots,K$, the random variable $N_{j}$
as the number of samples in the dataset equal to $u_{j}$, then
for such $Q$,
$$
P^n(\hat{P}_n = Q) = \mathbb{P}(N_{j}=k_{j};\;j=1,\ldots,K).
$$

\noindent Since the samples are i.i.d., we get for such $Q$
\begin{equation}
\label{eq:binomdisc}
 P^n(\hat{P}_n = Q) = \binom{n}{k_1,\cdots ,k_K}\prod_{j=1}^{K}{p_j}^{k_j},
\end{equation}
where $ \binom{n}{k_1,\cdots, k_m} = \frac{n!}{k_1!\cdots k_m!}$ is the multinomial coefficient. Notice that when $K=+\infty$, only a finite number $m$ of integers $k_j$ are non zero, so that \eqref{eq:binomdisc} can be understood to hold also when $K=+\infty$ by keeping only the terms involving the positive integers $k_j$.
\\

\noindent Let us now compute $P^{n-1}(\hat{P}_n^1=Q)$. If $k_{1}=0$, then
$P^{n-1}(\hat{P}_n^1=Q)=0$. Else,

\begin{align*}
P^{n-1}(\hat{P}_n^1=Q) &=\mathbb{P}(N_{1}=k_{1}-1,\;N_{j}=k_{j};\;j=2,\ldots,K)\\
  &= \binom{n-1}{k_1-1, k_2,\cdots ,k_K}\left(\prod_{j=2}^{K}p_j^{k_j}\right)p_1^{k_1-1} \\ 
    &= \frac{k_1}{np_1}\binom{n}{k_1,\cdots ,k_K}\prod_{j=1}^{K}p_j^{k_j},
\end{align*}
which again is understood to hold also when $K=+\infty$.\\
Therefore in both cases, we get 
\begin{equation}
\label{eq:binomdisc2}
P^{n-1}(\hat{P}_n^1=Q) = \frac{k_1}{np_1}\binom{n}{k_1,\cdots ,k_K}\prod_{j=1}^{K}p_j^{k_j}.
\end{equation}

\noindent Now, using \ref{eq:binomdisc} and \ref{eq:binomdisc2}, denoting by $g_N$ the image by $G$ of the distribution determined by the $K-$tuple ${N=(k_1,\cdots,k_K)}$, we get
\begin{eqnarray*}
\sum_{Q\in\gM_n}G(Q)(B)\left[\fgl P^n(\hat{P}_n=Q) - P^{n-1}(\hat{P}_n^1=Q)\right]&=&\sum_{k_1+\cdots+k_K=n}g_N(B)\binom{n}{k_1,\cdots, k_K}\prod_{j=1}^{K}p_j^{k_j}\left(\fgl-\frac{k_1}{np_1}\right) \\  
    &=& \E\left[\left(\fgl-\frac{N_1}{np_1}\right)g_N(B)\right],  
\end{eqnarray*}

\noindent where $N = (N_1,\cdots,N_K)$ follows a multinomial distribution of parameters $(n;p_1,\cdots,p_K)$. The computation being similar for any $k=1,\ldots, K$, we easily obtain,
$$
\sum_{Q\in\gM_n}G(Q)(B)\left[\fgl P^n(\hat{P}_n=Q) - P^{n-1}(\hat{P}_n^k=Q)\right]=\E\left[\left(\fgl- \frac{N_k}{np_k}\right)g_N(B)\right].
$$

\noindent Now, plugging it into \eqref{eq:sumsupsum} gives,

\begin{equation}
\label{eq:sumsup}
\dt= \sum_{k=1}^{K}p_{k}\flg\underset{B}{\sup}\E\left[\left(\fgl- \frac{N_k}{np_k}\right)g_N(B)\right].
\end{equation}

\noindent We get from \eqref{eq:sumsup}

\begin{align}
\begin{split}
\label{eq:sumplus}
\dt&\leq \sum_{k=1}^{K}p_{k}\flg\E\left[\underset{B}{\sup}\left(\fgl- \frac{N_k}{np_k}\right)g_N(B)\right] \\ 
&= \sum_{k=1}^K p_k\flg\E\left[\left(\fgl- \frac{N_k}{np_k}\right)_+\right]
\end{split} \\ 
\label{eq:summinus}
\dt&\leq \sum_{k=1}^K p_k\flg\E\left[\left(\fgl- \frac{N_k}{np_k}\right)_-\right] + \left(1-\flg\right),
\end{align}

\noindent where the equality in \eqref{eq:sumplus} comes from the fact that the supremum is reached on null sets when $\fgl-\frac{N_k}{np_k}$ is negative, and on sets of mass 1 when it is positive. Equation \ref{eq:summinus} is obtained by replacing $B$ by its complementary $B^c$ in the supremum and remarking that $\E[\fgl -\frac{N_k}{np_k}]=\fgl-1$. Taking the average of equations \ref{eq:sumplus} and \ref{eq:summinus} gives 
\begin{align*}
    \dt &\leq \frac{1}{2\gamma}\sum_{k=1}^{K}\E\left[\left|\frac{N_k}{n} - \fgl p_k\right|\right] + \frac{1}{2}\left(1-\flg\right),
\end{align*}
\noindent hence we get
\begin{align*}
    \dnu &\leq \max(1,\gl)\left[\frac{1}{2\gamma}\sum_{k=1}^{K}\E\left[\left|\frac{N_k}{n} - \fgl p_k\right|\right] + \frac{1}{2}\left(1-\flg\right) - \lb\right] \\ 
    &= \max(1,\gl)\left[\frac{1}{2\gamma}\sum_{k=1}^{K}\E\left[\left|\frac{N_k}{n} - \fgl p_k\right|\right] - \frac{1}{2}\left|1-\flg\right|\right],
\end{align*}

\noindent which proves Equation \ref{eq:deltabornesup}.

\end{proof}

\begin{proof}[Proof of Lemma \ref{lem:deltamax}] 
For some fixed $\theta\in\Theta$, we similarly denote by $\gM_n(\theta) = F^{-1}(\{\theta\})\cap\gM_n$ the set of possible empirical distributions $Q$ in $\gM_n$ such that $F(Q) = \theta$. Using \eqref{eq:dnu-cond}, \eqref{eq:dnu-int}, and following similar steps as in equations \ref{eq:deltadisc}, \ref{eq:deltadiscBis} and \ref{eq:sumsupsum}, by triangular inequality, we get that 

\begin{align}
\notag
\dt &= \flg\sum_{k=1}^{K}\frac{p_k}{2}\sum_{g\in\gG_n}\left|\fgl \mathbb{P}(\delta_{F(\hat{P}_n)}=g) - \mathbb{P}(\delta_{F(\hat{P}_n^k)}=g)\right|  + \frac{1}{2}\left(1-\flg\right)\\ 
\notag
&= \flg\sum_{k=1}^{K}\frac{p_k}{2}\sum_{\theta\in\Theta}\left|\fgl\mathbb{P}(F(\hat{P}_n)=\theta)-\mathbb{P}(F(\hat{P}_n^k)=\theta)\right| + \frac{1}{2}\left(1-\flg\right)\\ 
\label{eq:deltadisclem}
&= \flg\sum_{k=1}^{K}\frac{p_k}{2}\sum_{\theta\in\Theta}\left|\sum_{Q\in\gM_n(\theta)}\left(\fgl P^{n}(\hat{P}_n=Q)-P^{n-1}(\hat{P}_n^k=Q)\right)\right| + \frac{1}{2}\left(1-\flg\right) \\ 
\notag
&\leq \flg\sum_{k=1}^{K}\frac{p_k}{2}\sum_{Q\in\gM_n}\left|\fgl P^n(\hat{P}_n=Q) - P^{n-1}(\hat{P}_n^k=Q)\right| + \frac{1}{2}\left(1-\flg\right),
\end{align}

\noindent since $(\gM_n(\theta))_{\theta\in\Theta}$ is a partition of $\gM_n$. We now prove that when taking the maximum over all possible measurable maps $F$ having range $\Theta$, the inequality becomes an equality. Indeed, since $\Theta$ is infinite, it is possible to construct $F$ such that $F$ is an injection from $\bigcup_{n\in\mathbb{N}}\gM_n$ to $\Theta$, in which case for all $\theta\in\Theta$, $\gM_n(\theta)$ is either the empty set or a singleton. Thus, \eqref{eq:deltadisclem} gives

$$\underset{F}{\max}\dt = \frac{1}{2\gamma}\sum_{k=1}^{K}p_k\sum_{Q\in\gM_n}\left|\fgl P^n(\hat{P}_n=Q) - P^{n-1}(\hat{P}_n^k=Q)\right| + \frac{1}{2}\left(1-\flg\right),$$

\noindent and the lemma follows from equations \ref{eq:binomdisc} and \ref{eq:binomdisc2} and the same steps as in the proof of Theorem \ref{theo:disc}.

\end{proof}

\noindent To prove Corollary \ref{cor:cp}, we need the following intermediary results.

\begin{lemma}
\label{lem:corcp-1}
The function $f^+:x\mapsto \sum_{k=1}^{K}\mathbb{E}\left[\left|x\frac{B_k}{n}-p_k\right|\right] - (x-1)$ is non-increasing on $[1,\infty).$
\end{lemma}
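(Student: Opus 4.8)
The plan is to combine two elementary facts: that $f^+$ is convex on $[1,\infty)$, and that it is \emph{constant} as soon as $x\ge n$. A convex function that is eventually constant is non-increasing on the interval preceding the constant region, which is exactly the claim.

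\textbf{Convexity.} For each fixed nonnegative integer $m$, the map $x\mapsto|xm/n-p_k|$ is convex, being the absolute value of an affine function; averaging over the value of $B_k$ preserves convexity, so each $x\mapsto\mathbb{E}\!\left[|xB_k/n-p_k|\right]$ is convex. When $K<\infty$, $f^+$ is a finite sum of these minus the affine map $x\mapsto x-1$, hence convex. When $K=\infty$, the bound $\mathbb{E}\!\left[|xB_k/n-p_k|\right]\le x\,\mathbb{E}[B_k/n]+p_k=(x+1)p_k$ together with $\sum_k p_k=1$ shows the series defining $f^+$ converges locally uniformly to a finite function, which is then convex as a pointwise limit of the convex partial sums.

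\textbf{Eventual constancy.} Fix $x\ge n$. On $\{B_k\ge1\}$ one has $xB_k/n\ge x/n\ge1\ge p_k$, so there $|xB_k/n-p_k|=xB_k/n-p_k$; on $\{B_k=0\}$ it equals $p_k$. Splitting the expectation over these two events and using $\mathbb{E}[B_k/n]=p_k$ and $\mathbb{P}(B_k=0)=(1-p_k)^n$ gives $\mathbb{E}\!\left[|xB_k/n-p_k|\right]=xp_k-p_k+2p_k(1-p_k)^n$. Summing over $k$ and using $\sum_k p_k=1$, the term linear in $x$ and the constants $-p_k$ exactly cancel the subtracted $x-1$, leaving $f^+(x)=2\sum_k p_k(1-p_k)^n$, independent of $x$; so $f^+$ is constant on $[n,\infty)$, say equal to $c$.

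\textbf{Conclusion and main difficulty.} A convex function constant equal to $c$ on $[n,\infty)$ satisfies $f^+\ge c$ everywhere — otherwise a point $y_0<n$ with $f^+(y_0)<c$ would give a strictly positive chord slope over $[y_0,n]$, exceeding the zero chord slope over $[n,z]$ for $z>n$, against convexity. Then for $1\le x_1<x_2\le n$, the chord slope of $f^+$ over $[x_1,x_2]$ is at most that over $[x_1,n]$, namely $(c-f^+(x_1))/(n-x_1)\le0$, so $f^+(x_2)\le f^+(x_1)$; and on $[n,\infty)$ the function is constant. Hence $f^+$ is non-increasing on all of $[1,\infty)$. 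The only point requiring care is the $K=\infty$ bookkeeping (convergence, convexity, and the termwise computation on $[n,\infty)$), and it is entirely handled by the bound $\mathbb{E}\!\left[|xB_k/n-p_k|\right]\le(x+1)p_k$ with $\sum_k p_k=1$; there is no substantive obstacle.
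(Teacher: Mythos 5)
Your proof is correct, but it takes a genuinely different route from the paper's. The paper argues by direct differentiation: expanding each expectation as a sum over the values $j$ of $B_k$, it computes the almost-everywhere derivative
$(f^+)'(x) = -2\sum_{k}\mathbb{E}\left[\frac{B_k}{n}1_{B_k\le \lfloor np_k/x\rfloor}\right]\le 0$
and concludes by continuity. You instead combine two soft facts: $f^+$ is convex (each summand is an expectation of absolute values of affine functions of $x$, minus an affine term), and $f^+$ is constant on $[n,\infty)$ — your explicit value $2\sum_k p_k(1-p_k)^n$ is right, since for $x\ge n$ the absolute value opens with a plus sign on $\{B_k\ge 1\}$ and the linear parts telescope against $x-1$ — so that the standard chord-slope inequalities force $f^+$ to be non-increasing on all of $[1,\infty)$. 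Both arguments are sound, and only monotonicity is used downstream (Corollary~\ref{cor:cp} needs only $f^+(1/\gamma)\le f^+(1)$), so nothing is lost by forgoing the explicit derivative. What your route buys is the avoidance of the piecewise bookkeeping around the kinks at $x=np_k/j$ and a cleaner treatment of $K=\infty$: your domination $\mathbb{E}[|xB_k/n-p_k|]\le(x+1)p_k$ justifies the interchange of limits that the paper's termwise differentiation performs implicitly when $K$ is infinite. The final chord-slope argument could even be compressed to the one-line observation that a convex function on a right half-line that is bounded above must be non-increasing, since any increasing chord would propagate to $+\infty$ by convexity.
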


\begin{lemma}
\label{lem:corcp-2}
For any $t,p\in\mathbb{R}^+$, the function $f_{t,p}^-:x\mapsto\frac{1}{x}|x t-p| - p\left(\frac{1}{x}-1\right)$ is non-decreasing on $(0,1]$.
\end{lemma}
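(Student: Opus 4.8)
\textbf{Proof proposal for Lemma \ref{lem:corcp-2}.} The plan is to eliminate the absolute value by a single case split at the breakpoint $x^\star \coloneqq p/t$ (with the convention $x^\star = +\infty$ when $t=0$), and to check that on each resulting piece the function is either constant or visibly non-decreasing. Begin by rewriting
\[
f_{t,p}^-(x) = \frac{|xt-p|}{x} - \frac{p}{x} + p .
\]
On the range where $xt \le p$, i.e. for $x \in (0,x^\star]\cap(0,1]$, the absolute value equals $p-xt$, and a one-line simplification collapses the two $p/x$ terms and gives $f_{t,p}^-(x) = p - t$, a constant in $x$. On the complementary range $xt \ge p$, i.e. for $x \in [x^\star,1]$ (an empty interval when $x^\star > 1$), the absolute value equals $xt-p$, and a similar simplification yields $f_{t,p}^-(x) = t + p - \tfrac{2p}{x}$; since $p \ge 0$, the map $x \mapsto -2p/x$ is non-decreasing on $(0,\infty)$, so this piece is non-decreasing.

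Next I would verify continuity at the junction $x^\star$: substituting $x = x^\star = p/t$ into the second expression gives $t + p - 2t = p - t$, which matches the constant value of the first piece. Hence $f_{t,p}^-$ is continuous on $(0,1]$, constant on $(0,x^\star]\cap(0,1]$ and non-decreasing on $[x^\star,1]\cap(0,1]$, and a continuous function that is non-decreasing on each of two abutting subintervals is non-decreasing on their union. This yields the claim on all of $(0,1]$.

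There is essentially no obstacle here; the only care needed is with the degenerate parameter values. If $p=0$ then $f_{t,0}^-\equiv t$; if $t=0$ then $f_{0,p}^-\equiv p$; and if $x^\star = p/t \ge 1$ then $(0,1]$ lies entirely in the first (constant) regime. Each of these is immediate from the two simplified expressions above, so they can be dispatched in a sentence.
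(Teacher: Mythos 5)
Your proposal is correct and follows essentially the same route as the paper: both split at the kink $x^\star=p/t$ and find that $f_{t,p}^-$ is constant (equal to $p-t$) for $xt\le p$ and increasing for $xt\ge p$, the paper doing this via the a.e.\ derivative $\frac{2p}{x^2}\1_{x>p/t}\ge 0$ and you via the explicit piecewise formulas. Your version is, if anything, slightly cleaner since it avoids differentiability caveats and handles the degenerate parameter values explicitly.
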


\begin{lemma}
\label{lem:corcp-3}
Define for any $m=0,\ldots,n$ and any $p\in(0,1)$ the function $\psi(m,p)=\binom{n}{m+1}(m+1)p^{m+1}(1-p)^{n-m}$. Then letting $m_k = \lfloor\fgl np_k\rfloor$, it holds that 

\begin{equation}
     \frac{1}{2\gamma} \sum_{k=1}^{K}\E\left[\left|\frac{B_k}{n}-\fgl p_k\right|\right] =   \frac{1}{\gamma n}\sum_{k=1}^{K}\psi(m_k,p_k)+ \frac{1}{2}\left|1-\flg\right| - \left|1-\flg\right|o(e^{-n}).
\end{equation}   
\end{lemma}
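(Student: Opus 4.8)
The plan is to reduce the statement to a De~Moivre--type closed form for the mean absolute deviation of a binomial variable about an arbitrary point, and then specialise. \textbf{Step 1: a closed form.} I would establish that for $B_n\sim\mathrm{Bin}(n,p)$, $q\coloneqq 1-p$, any $a\in[0,n]$ and $m\coloneqq\lfloor a\rfloor$, one has, with $B_{n-1}\sim\mathrm{Bin}(n-1,p)$,
\begin{equation*}
\E\,|B_n-a| \;=\; (np-a)\bigl(2\,\mathbb{P}(B_{n-1}\ge m)-1\bigr) \;+\; 2aq\,\mathbb{P}(B_{n-1}=m).
\end{equation*}
To get this, write $\E|B_n-a|=\sum_{j\le m}(a-j)\mathbb{P}(B_n=j)+\sum_{j>m}(j-a)\mathbb{P}(B_n=j)$ and telescope the parts linear in $j$ using $j\binom{n}{j}p^jq^{n-j}=np\,\mathbb{P}(B_{n-1}=j-1)$; collecting the resulting CDF terms gives a mixed expression in $B_n$ and $B_{n-1}$, which collapses to the display above after using $\mathbb{P}(B_n\le m)=\mathbb{P}(B_{n-1}\le m-1)+q\,\mathbb{P}(B_{n-1}=m)$.

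\textbf{Step 2: specialise and sum.} Taking $a=\gamma n p_k$, so $m=m_k=\lfloor\gamma n p_k\rfloor$, and recognising that $\psi(m,p)=\binom{n}{m+1}(m+1)p^{m+1}q^{n-m}=n\binom{n-1}{m}p^{m+1}q^{n-m}=npq\,\mathbb{P}(B_{n-1}=m)$, the Step~1 identity reads $\E|B_k-\gamma n p_k|=np_k(1-\gamma)\bigl(2\mathbb{P}(B^{(k)}_{n-1}\ge m_k)-1\bigr)+2\gamma\,\psi(m_k,p_k)$, with $B^{(k)}_{n-1}\sim\mathrm{Bin}(n-1,p_k)$. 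Dividing by $2\gamma n$, summing over $k$ and using $\sum_k p_k=1$ yields
\begin{equation*}
\frac{1}{2\gamma}\sum_{k=1}^{K}\E\Bigl|\tfrac{B_k}{n}-\gamma p_k\Bigr| \;=\; \frac1n\sum_{k=1}^{K}\psi(m_k,p_k) \;+\; \frac{1-\gamma}{2\gamma}\Bigl(2\sum_{k=1}^{K}p_k\,\mathbb{P}\bigl(B^{(k)}_{n-1}\ge m_k\bigr)-1\Bigr).
\end{equation*}

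\textbf{Step 3: the remainder.} It then suffices to show the last bracket equals $|1-1/\gamma|$ up to an exponentially small correction, and to replace $\tfrac1n$ by $\tfrac1{\gamma n}$ at the same cost. If $\gamma=1$ the bracket vanishes, $\tfrac1n=\tfrac1{\gamma n}$, and the identity is exact --- this is the De~Moivre formula invoked at $\gamma=1$ in Lemma~\ref{lem:cp-without-n}. If $\gamma>1$, then for $n$ large (depending on $\gamma$) $m_k$ exceeds $\E B^{(k)}_{n-1}=(n-1)p_k$ by $\Theta(n)$, so $\mathbb{P}(B^{(k)}_{n-1}\ge m_k)$ is exponentially small by a Chernoff bound; hence the bracket is $-1+o(e^{-n})$ and $\tfrac{1-\gamma}{2\gamma}\bigl(-1+o(e^{-n})\bigr)=\tfrac12(1-1/\gamma)+|1-1/\gamma|\,o(e^{-n})=\tfrac12|1-1/\gamma|+|1-1/\gamma|\,o(e^{-n})$. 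If $\gamma<1$, then for $n$ large $m_k$ lies $\Theta(n)$ below $(n-1)p_k$, so $\mathbb{P}(B^{(k)}_{n-1}\ge m_k)=1-o(e^{-n})$, the bracket is $1-o(e^{-n})$, and $\tfrac{1-\gamma}{2\gamma}\bigl(1-o(e^{-n})\bigr)=\tfrac12|1-1/\gamma|+|1-1/\gamma|\,o(e^{-n})$. Finally, for $\gamma\ne1$ the index $m_k$ is also a macroscopic deviation from the mode, so $\psi(m_k,p_k)$ is itself exponentially small and $\tfrac1n\sum_k\psi(m_k,p_k)=\tfrac1{\gamma n}\sum_k\psi(m_k,p_k)+|1-1/\gamma|\,o(e^{-n})$. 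Combining the three displays gives the claim.

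I expect Step~1 to be routine once the identity $j\binom nj=n\binom{n-1}{j-1}$ is exploited. The main obstacle is the bookkeeping in Step~3: carrying the absolute values so that the constant term emerges as $+\tfrac12|1-1/\gamma|$ in both regimes ($\gamma>1$ and $\gamma<1$), and making ``exponentially small'' precise --- i.e.\ checking that $m_k=\lfloor\gamma np_k\rfloor$ sits on the correct side of $(n-1)p_k$ for $n$ large and invoking a Chernoff estimate, which is where the stated $o(e^{-n})$ remainder (and an implicit ``$n$ large'' for $\gamma\ne1$) comes from.
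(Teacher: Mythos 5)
Your proof is correct and lands on the same underlying fact as the paper — De Moivre's closed form for the mean absolute deviation of a binomial — but it packages it differently. The paper's key lemma is the partial-sum identity $\E[(np-B_n)\mathbf{1}_{B_n\le m}]=\psi(m,p)$ (its Lemma~\ref{lem:corcp-5}, proved by induction), which it then feeds into a direct manipulation of $\E|B_k-\gamma np_k|$ keeping everything in terms of $\mathrm{Bin}(n,p_k)$; this produces the exact split $\E|B_k-\gamma np_k|=2\psi(m_k,p_k)+(\gamma-1)np_k\bigl(2\,\mathbb{P}(B_k\le m_k)-1\bigr)$, so the coefficient $\tfrac{1}{\gamma n}$ on $\sum_k\psi(m_k,p_k)$ comes out exactly and only the CDF bracket needs a tail estimate. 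Your Step~1 instead derives the general identity $\E|B_n-a|=(np-a)\bigl(2\,\mathbb{P}(B_{n-1}\ge m)-1\bigr)+2aq\,\mathbb{P}(B_{n-1}=m)$ via $j\binom{n}{j}=n\binom{n-1}{j-1}$; the two are algebraically equivalent (they are related by $\mathbb{P}(B_n\le m)=\mathbb{P}(B_{n-1}\le m-1)+q\,\mathbb{P}(B_{n-1}=m)$, and indeed your expression $2\gamma\psi+(\gamma-1)np_k\bigl(2\,\mathbb{P}(B_{n-1}\le m_k-1)-1\bigr)$ collapses to the paper's), and your version is arguably cleaner as a reusable formula. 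The price is the one extra bookkeeping step you correctly flag: your main term comes out as $\tfrac1n\sum_k\psi(m_k,p_k)$ rather than $\tfrac1{\gamma n}\sum_k\psi(m_k,p_k)$, and converting one to the other costs $(1-\tfrac1\gamma)\sum_k p_k(1-p_k)\mathbb{P}(B^{(k)}_{n-1}=m_k)$, which you must argue is exponentially small for $\gamma\ne1$ (and is identically zero for $\gamma=1$, the only case actually used downstream in Corollary~\ref{cor:cp}). That claim is of exactly the same nature, and the same (slightly informal) level of rigor, as the Hoeffding step both you and the paper use to turn the CDF bracket into $\pm1+o(e^{-n})$ — strictly speaking both require the $p_k$ to be finitely many and bounded away from the regime where $np_k^2(1-\gamma)^2$ fails to grow like $n$, an implicit assumption the paper also makes — so no additional gap is introduced.
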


\begin{lemma}
\label{lem:corcp-4}
Let $m = \lfloor np\rfloor$ for some probability $p\in(0,1)$ such that $1\leq m\leq n-2$.

\noindent Then there exist universal constants $c>0.29$ and $c'<0.44$ such that, 

\begin{equation}
    c\sqrt{n}\sqrt{p(1-p)}\leq\psi(m,p) \leq c'\sqrt{n}\sqrt{p(1-p)}.
\end{equation}

\end{lemma}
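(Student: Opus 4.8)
The plan is to recognise $\psi(m,p)$, with $m=\lfloor np\rfloor$, as $np(1-p)$ times the value of a binomial probability mass function \emph{at its mode}, and then invoke the known sharp two‑sided estimate for that quantity (equivalently, for the binomial mean absolute deviation).

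\textbf{Step 1: algebraic reduction.} Using $\binom{n}{m+1}(m+1)=n\binom{n-1}{m}$ and splitting off one factor $p$ and one factor $1-p$ from $p^{m+1}(1-p)^{n-m}$, one gets the exact identity
\[
\psi(m,p)=np(1-p)\,b_{n-1,p}(m),\qquad\text{where }\ b_{N,p}(k):=\binom{N}{k}p^{k}(1-p)^{N-k}.
\]
Since $b_{n-1,p}(\cdot)$ is unimodal and $m=\lfloor np\rfloor=\lfloor((n-1)+1)p\rfloor$ is a mode of $\mathrm{Bin}(n-1,p)$, we have $b_{n-1,p}(m)=\max_{k}b_{n-1,p}(k)$; equivalently, by De Moivre's mean‑deviation formula (a classical identity; cf. the appearance of $\psi$ in Lemma~\ref{lem:corcp-3}), $\psi(m,p)=\tfrac12\,\mathbb{E}\,|X-np|$ with $X\sim\mathrm{Bin}(n,p)$. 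So the lemma is exactly a sharp estimate $2c\sqrt{np(1-p)}\le\mathbb{E}|X-np|\le 2c'\sqrt{np(1-p)}$, and the admissible window $(0.58,\,0.88)$ for $(2c,2c')$ brackets the Gaussian value $\sqrt{2/\pi}\approx0.798$.

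\textbf{Step 2: the hypothesis puts us in the non‑degenerate regime.} For $m=\lfloor np\rfloor$, the condition $1\le m\le n-2$ is equivalent to $np\ge1$ and $n(1-p)>1$; splitting on whether $p\le\tfrac12$ then gives $np(1-p)>\tfrac12$, so $\sqrt{np(1-p)}$ is genuinely of ``standard‑deviation order'', and $n-1,\,m,\,n-1-m$ are all $\ge1$ (needed to apply Stirling to each factorial).

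\textbf{Step 3: conclude.} I would invoke the sharp two‑sided bounds for the binomial mean absolute deviation of Berend--Kontorovich~\citep{binMAD2013}, already used in the proof of Lemma~\ref{lem:infcp}, which supply universal constants of precisely the stated size, yielding $c>0.29$ and $c'<0.44$.

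If one insists on a self‑contained proof, apply Robbins' form of Stirling ($\sqrt{2\pi\ell}(\ell/e)^{\ell}e^{1/(12\ell+1)}\le\ell!\le\sqrt{2\pi\ell}(\ell/e)^{\ell}e^{1/(12\ell)}$, valid for $\ell\ge1$) to $\binom{n-1}{m}$, obtaining
\[
b_{n-1,p}(m)=\frac{1}{\sqrt{(n-1)\,\widehat q(1-\widehat q)}}\;\exp\!\big(-(n-1)\,D(\widehat q\,\|\,p)\big)\cdot R,\qquad \widehat q:=\frac{m}{n-1},\ \ R\in[e^{-1/6},\,e^{1/24}],
\]
where $D(\cdot\|\cdot)$ is the binary Kullback--Leibler divergence and $|\widehat q-p|<\tfrac1{n-1}$. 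The main obstacle is that estimating the square‑root factor and the exponential factor separately is far too lossy — each alone can be off by an arbitrarily large factor in the edge configurations where $np$ or $n(1-p)$ is close to $1$ (so that $m\in\{1,2\}$ and $\widehat q$ differs from $p$ by a constant factor). One must instead bound the product
\[
\sqrt{\frac{p(1-p)}{\widehat q(1-\widehat q)}}\;\exp\!\big(-(n-1)\,D(\widehat q\,\|\,p)\big)
\]
above and below by absolute constants, exploiting the cancellation between the two factors; this is precisely the computation underlying the Berend--Kontorovich estimate. I would carry it out by a short direct evaluation in the finitely many edge configurations ($m\in\{1,2\}$ and, symmetrically, $n-1-m\in\{1,2\}$) and, for $3\le m\le n-4$, by the Lagrange‑remainder bound $D(\widehat q\|p)=\tfrac{(\widehat q-p)^2}{2\xi(1-\xi)}$ together with $\widehat q(1-\widehat q)=p(1-p)\,(1+O(1/(np(1-p))))$, which then pins the product inside the required window.
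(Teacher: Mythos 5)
Your Step~1 is correct and is a genuinely different reduction from the paper's: the identity $\binom{n}{m+1}(m+1)=n\binom{n-1}{m}$ gives $\psi(m,p)=np(1-p)\,b_{n-1,p}(m)$, and together with De Moivre's formula (Lemma~\ref{lem:corcp-5}) it yields $\psi(m,p)=\tfrac12\,\mathbb{E}|X-np|$ for $X\sim\mathrm{Bin}(n,p)$, so the lemma is exactly a two-sided bound on the binomial mean absolute deviation; Step~2 is also fine. The paper instead applies Stirling's bounds directly to $\binom{n}{m+1}$ and extremizes the resulting correction factor over $m$ and the fractional part $\epsilon=np-m$.

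The gap is in Step~3. The two-sided Berend--Kontorovich estimate on the domain $p\in[1/n,1-1/n]$ is $\sqrt{np(1-p)/2}\le\mathbb{E}|S_n-np|\le\sqrt{np(1-p)}$, and its upper half is nothing more than Jensen's inequality. Through your reduction this gives $c=1/(2\sqrt2)\approx0.354>0.29$ for the lower bound, which is fine, but only $c'=1/2$ for the upper bound, which does not meet the required $c'<0.44$. Moreover the target is genuinely tight: for $n=3$, $p=1/2$ (so $m=1=n-2$) one has $\psi(1,1/2)=3/8$ and $\sqrt{n}\sqrt{p(1-p)}=\sqrt3/2$, so the ratio equals $\sqrt3/4\approx0.433$; any admissible $c'$ must exceed this value, leaving almost no room below $0.44$, so no off-the-shelf MAD bound will land in the stated window without further work. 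Your fallback Stirling argument is essentially the paper's proof and correctly identifies that the square-root and exponential factors must be bounded jointly, but it stops exactly at the decisive quantitative step: the extremization over the edge configurations (which, for fixed $m\in\{1,2\}$, is still an infinite one-parameter family in $(n,p)$ rather than a finite check) and the numerical verification that the product lies in the required range. As written, the upper bound of the lemma is not established.
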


\noindent Additionally, to prove Lemma \ref{lem:corcp-3}, we need a result from \cite{DeMoivre} which we state and prove at the end of the section for completion.

\begin{lemma}[\cite{DeMoivre}]
\label{lem:corcp-5}
Let $n\geq1$ be an integer, and $p\in(0,1)$ be a real number. Let $m=0,\cdots,n$ be any integer. Then it holds that 

\begin{equation}
\label{eq:demoivre-lem}
\sum_{k=0}^m\binom{n}{k}p^k(1-p)^{n-k}(np-k) = (m+1)\binom{n}{m+1}p^{m+1}(1-p)^{n-m},
\end{equation}
\noindent where $\binom{n}{k}$ is the binomial coefficient. For generality, we set $\binom{n}{n+1}=0$.
\end{lemma}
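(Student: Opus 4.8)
The plan is to prove the identity by turning the summand into a telescoping difference. Write $q := 1-p$ and set $a_k := \binom{n}{k}p^k q^{n-k}$, so that the left-hand side of \eqref{eq:demoivre-lem} is $\sum_{k=0}^m (np-k)a_k$. I would look for a sequence $(e_k)$ with $(np-k)a_k = e_{k+1} - e_k$; then the sum collapses to $e_{m+1}-e_0$, and it only remains to identify $e_{m+1}$ with the right-hand side and check $e_0 = 0$.

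First I would rewrite $(np-k)a_k$ using the two elementary binomial identities $k\binom{n}{k} = n\binom{n-1}{k-1}$ and $\binom{n}{k} = \binom{n-1}{k} + \binom{n-1}{k-1}$. The first gives $k\,a_k = np\binom{n-1}{k-1}p^{k-1}q^{n-k}$, and the second gives $np\,a_k = np\binom{n-1}{k}p^k q^{n-k} + np\binom{n-1}{k-1}p^k q^{n-k}$. Subtracting these and factoring $q = 1-p$ out of the two terms carrying $\binom{n-1}{k-1}$ yields
\[
(np-k)a_k = np\binom{n-1}{k}p^k q^{n-k} - np\binom{n-1}{k-1}p^{k-1}q^{n-k+1}.
\]

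Next I would set $e_k := np\binom{n-1}{k-1}p^{k-1}q^{n-k+1}$, with the convention $\binom{n-1}{-1}=0$ so that $e_0 = 0$, and observe that the displayed right-hand side is precisely $e_{k+1}-e_k$ (the first term is $e_{k+1}$, the second is $-e_k$). Summing over $k=0,\dots,m$ telescopes to $e_{m+1} = np\binom{n-1}{m}p^m q^{n-m}$. Finally, the identity $n\binom{n-1}{m} = (m+1)\binom{n}{m+1}$ rewrites $np\binom{n-1}{m} = p(m+1)\binom{n}{m+1}$, so $e_{m+1} = (m+1)\binom{n}{m+1}p^{m+1}(1-p)^{n-m}$, which is exactly \eqref{eq:demoivre-lem}. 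The degenerate cases $m=n$ and $m=n-1$ are absorbed by the stated convention $\binom{n}{n+1}=0$ (and, if one prefers to sanity-check, the $m=n$ case also follows from $\mathbb{E}[np-B]=0$ for $B\sim\mathrm{Bin}(n,p)$).

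There is no genuine obstacle here; the proof is a few lines of algebra. The only point requiring care is bookkeeping in the telescoping step — making sure the power of $q$ in $e_k$, namely $n-k+1$, shifts correctly to $n-k$ in $e_{k+1}$ so that it matches the first term of the decomposition, and checking the boundary conventions so that $e_0=0$ and the right-hand side is interpreted correctly when $m\ge n-1$.
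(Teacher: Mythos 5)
Your proof is correct. The paper proves the identity by induction on $m$: it checks the base case $m=0$ and then, assuming the formula for $m$, adds the $(m{+}1)$-st term and recombines the binomial coefficients. Your telescoping argument is the "antidifference" version of the same computation — the algebra you do to verify $(np-k)a_k = e_{k+1}-e_k$ via the absorption identity $k\binom{n}{k}=n\binom{n-1}{k-1}$ and Pascal's rule is essentially what the paper's induction step does in disguise — but the routes are packaged differently, and yours has a small structural advantage: the induction proof requires knowing the closed form of the right-hand side in advance in order to verify it, whereas your decomposition produces $e_{m+1}=np\binom{n-1}{m}p^m q^{n-m}$ as the output of the summation, with the stated form then following from $n\binom{n-1}{m}=(m+1)\binom{n}{m+1}$. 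Your handling of the boundary cases ($e_0=0$ via $\binom{n-1}{-1}=0$, and $m=n$ via $\binom{n-1}{n}=0$, consistent with the paper's convention $\binom{n}{n+1}=0$ and with $\mathbb{E}[np-B]=0$) is also sound.
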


\begin{proof}[Proof of Corollary \ref{cor:cp}]
By Theorem \ref{theo:disc}, we have
\begin{align*}
    \underset{\gA}{\max}\;\dnu &=  \max(1,\gl)\left[\frac{1}{2\gamma}\sum_{k=1}^{K}\E\left[\left|\frac{B_k}{n} - \fgl p_k\right|\right] - \frac{1}{2}\left|1-\flg\right|\right].
\end{align*}
\noindent Now, from Lemma \ref{lem:corcp-1}, for any $\gamma\leq1$, we have $\lg\geq1$, and it holds that 
\begin{align*}
   \underset{\gA}{\max}\;\dnu &= \frac{1}{2}\sum_{k=1}^{K}\E\left[\left|\frac{1}{\gamma}\frac{B_k}{n}-p_k\right|\right] - \frac{1}{2}\left(\flg-1\right) \\&= \frac{1}{2}f_+(\lg) \\
    &\leq \frac{1}{2}f_+(1)\\
    &= \frac{1}{2}\sum_{k=1}^{K}\E\left[\left|\frac{B_k}{n}-p_k\right|\right].
\end{align*}
\noindent From Lemma \ref{lem:corcp-2}, for any $\gamma\geq1$, we have $\lg\leq1$, and it holds that
\begin{align*}
    \underset{\gA}{\max}\;\dnu &= \frac{1}{2}\sum_{k=1}^{K}\E\left[\left|\frac{B_k}{n}- \fgl p_k\right|\right] - \frac{1}{2}\left(\fgl-1\right) \\ 
    &= \frac{1}{2}\sum_{k=1}^{K}\mathbb{E}\left[\fgl\left|\flg\frac{B_k}{n}- p_k\right| - p_k\left(\fgl-1\right)\right] \\ 
    &=\frac{1}{2}\sum_{k=1}^{K}\mathbb{E}\left[f_{B_k/n, p_k}^-\left(\flg\right)\right] \\ 
    &\leq \frac{1}{2}\sum_{k=1}^{K}\mathbb{E}\left[f_{B_k/n, p_k}^-\left(1\right)\right] \\ 
    &= \frac{1}{2}\sum_{k=1}^{K}\E\left[\left|\frac{B_k}{n}-p_k\right|\right].
\end{align*}

\noindent In particular, this means that the minimum security is reached for $\gamma=1$, i.e. $\nu=1/(1+\lambda)$. On the one hand, if $\gamma=1$ and for any $k=1,\ldots,K$, $n>1/p_k$, then also for any $k=1,\ldots,K$,  $n > 1/(1-p_k)$. This implies that for any $k=1,\ldots,K$, we have $1\leq m_k \leq n-2$. Then Theorem \ref{theo:disc}, Lemma \ref{lem:corcp-3} and Lemma \ref{lem:corcp-4} yield the result.\\
\noindent On the other hand, if the condition on $n$ does not hold, we still have for any $k=1,\ldots,K$ by Cauchy-Schwartz inequality,

$$\mathbb{E}\left[\left|\frac{B_k}{n}-p_j\right|\right] \leq \sqrt{Var(B_k/n)} = n^{-1/2}\sqrt{p_k(1-p_k)},$$
\noindent which shows the lower bound on the MIS when the condition on $n$ does not hold..

\end{proof}

\begin{proof}[Proof of Lemma \ref{lem:corcp-1}]

\noindent Let $q_j^k = \mathbb{P}(B_k=j)$. Then we have 
$$f^+(x)=\sum_{k=1}^{K}\sum_{j=1}^{n}q_j^k\left|x\frac{j}{n}-p_k\right| - (x-1).$$
Now note that $f^+$ is continuous and almost everywhere differentiable. Letting $m_j=\lfloor np_j/x\rfloor$, on any point of differentiability, we have,
\begin{align*}
    (f^+)'(x) &= \sum_{k=1}^{K}\sum_{j=1}^{n}q_j^k\frac{\partial}{\partial x}\left| x\frac{j}{n}-p_k\right| - 1 \\ 
    &=- 1 +  \sum_{k=1}^{K}\sum_{j=1}^{n}q_j^k \begin{cases}
        -j/n & \text{, if }j<np_k/ x \\
        j/n & \text{, if }j>np_k/ x
    \end{cases} \\ 
    &= -1 + \sum_{k=1}^{K}\sum_{j=1}^{n}q_j^k\left[-\frac{j}{n}1_{j\leq m_k} + \frac{j}{n}1_{j>m_k+1}\right] \\ 
    &= -1 + \sum_{k=1}^K\left[-2\mathbb{E}\left[\frac{B_k}{n}1_{B_k\leq m_k}\right]+p_k\right] \\ 
    &=-2\sum_{k=1}^K\mathbb{E}\left[\frac{B_k}{n}1_{B_k\leq m_k}\right] \leq 0,
\end{align*}

\noindent which shows that $f^+$ is non-increasing by continuity of the function.
\end{proof}

\begin{proof}[Proof of Lemma \ref{lem:corcp-2}]
The proof is similar to the proof of Lemma \ref{lem:corcp-1}. In particular, on any point of differentiability, we have

\begin{align*}
    (f_{t,p}^-)'( x) &= \frac{-1}{ x^2}| x t - p| + \frac{1}{ x}\frac{\partial}{\partial x}| x t - p| + \frac{p}{ x^2} \\ 
    &= \frac{1}{ x^2}\left[p - | x t - p|\right] +  x\begin{cases}
        -t & \text{, if }  x < p/t \\ 
        t & \text{, if }  x > p/t
    \end{cases} \\ 
    &= \frac{1}{ x^2}\begin{cases}
        p - (p- x t) -  x t & \text{, if }  x < p/t \\
        p - ( x t - p) +  x t\text{, if }  x > p/t
    \end{cases} \\ 
    &= \frac{2p}{ x^2}1_{ x>p/t} \geq 0,
\end{align*}

\noindent which shows that $f_{t,p}^-$ is non-decreasing, by continuity of the function.
\end{proof}

\begin{proof}[Proof of Lemma \ref{lem:corcp-3}]
\noindent By Lemma \ref{lem:corcp-5}, for any $m_k\leq n$, we have $\mathbb{E}[(np_k-B_k)1_{B_k\leq m_k}] = \psi(m_k,p_k)$. Additionally, using the generalization $\binom{n}{n+k}=0$ for $k>0$, the equality still holds using the fact that $\mathbb{E}[np_k-B_k]=0$. Then we have

\begin{align*}
    \mathbb{E}\left[\left|B_k - \fgl np_k\right|\right] &= \mathbb{E}\left[\left(\fgl np_k - B_k\right)\left(1_{B_k\leq m_k} - 1_{B_k>m_k}\right)\right] \\ 
    &=\mathbb{E}\left[\left(\fgl np_k - \fgl B_k\right)\left(1_{B_k\leq m_k} - 1_{B_k>m_k}\right)\right]  + \left(\fgl-1\right)\mathbb{E}\left[B_k\left(1_{B_k\leq m_k} - 1_{B_k>m_k}\right)\right] \\ 
    &= 2\fgl\mathbb{E}\left[\left( np_k -  B_k\right)1_{B_k\leq m_k}\right] + \left(\fgl-1\right)\mathbb{E}\left[2B_k1_{B_k\leq m_k} - np_k\right] \\
    &= 2\fgl\psi(m_k,p_k) + \left(\fgl-1\right)\mathbb{E}\left[2(B_k-np_k)1_{B_k\leq m_k} + 2np_k\mathbb{P}(B_k\leq m_k) - np_k\right] \\ 
     &= 2\fgl\psi(m_k,p_k) + \left(\fgl-1\right)\left[-2\psi(m_k, p_k) + 2np_k\mathbb{P}(B_k\leq m_k) - np_k\right] \\ 
     &= 2\psi(m_k,p_k) + \left(\fgl-1\right)np_k\left(2\mathbb{P}(B_k\leq m_k)-1\right),
\end{align*}

\noindent which gives,
\begin{align}
   \nonumber \frac{1}{2\gamma} \sum_{k=1}^{K}\E\left[\left|\frac{B_k}{n}-\fgl p_k\right|\right] &= \sum_{k=1}^{K}\left(\frac{1}{\gamma n} \psi(m_k,p_k) + \frac{1}{2}\left(1-\flg\right)p_k \left(2\mathbb{P}(B_k\leq m_k)-1\right)\right)\\ 
   \label{eq:eq-psi} &= \frac{1}{\gamma n}\sum_{k=1}^{K}\psi(m_k,p_k) + \left(1-\flg\right)\left[-1/2 + \sum_{k=1}^{K}p_k\mathbb{P}(B_k\leq m_k)\right].
\end{align}

\noindent By Hoeffding Inequality, we have that, 
\begin{align*}
    \sum_{k=1}^{K}p_k\mathbb{P}(B_k\leq m_k) & \begin{cases}
    \geq  1-\sum_{k=1}^{K}p_ke^{-np_k^2(1-\gl)^2}&\text{ , if }\gamma>1 \\
    \leq  \sum_{k=1}^{K}p_ke^{-np_k^2(1-\gl)^2}&\text{  
 , if }\gamma\leq1
\end{cases},
\end{align*}

\noindent which gives 

\begin{align*}
     \frac{1}{2\gamma} \sum_{k=1}^{K}\E\left[\left|\frac{B_k}{n}-\fgl p_k\right|\right] &= \frac{1}{\gamma n}\sum_{k=1}^{K}\psi(m_k,p_k) + \left(1-\flg\right) \begin{cases}
    1/2 -  o(e^{-n}) &\text{, if }\gamma>1 \\
    -1/2 +  o(e^{-n})&\text{, if }\gamma\leq1
    \end{cases} \\
    &=  \frac{1}{\gamma n}\sum_{k=1}^{K}\psi(m_k,p_k)+ \frac{1}{2}\left|1-\flg\right| - \left|1-\flg\right|o(e^{-n}),
\end{align*}
\noindent hence the result.
\end{proof}

\begin{proof}[Proof of Lemma \ref{lem:corcp-4}]
\noindent We shall bound $\psi(m,p)$ by \cite{stirling1955}, which states that for any integer $k\geq1$, it holds that

\begin{equation}
\label{eq:stirling}
\sqrt{2\pi}k^{k+1/2}e^{-k}e^{1/12(k+1)}<k!<\sqrt{2\pi}k^{k+1/2}e^{-k}e^{1/12k}.
\end{equation}

\noindent Set $a := \exp\left(\frac{1}{12(n+1)}-\frac{1}{12(m+1)}-\frac{1}{12(n-(m+1))}\right)$. Since $1\leq m\leq n-2$, we have


\begin{equation}
\label{eq:ak}
a\geq \exp(-1/6).
\end{equation}

\noindent One may apply Inequality \ref{eq:stirling} to get

\begin{align*}
    \binom{n}{m+1} &>\frac{\sqrt{2\pi}n^{n+1/2}}{\sqrt{2\pi}(m+1)^{m+1+1/2}\sqrt{2\pi}(n-(m+1))^{n-(m+1)+1/2}}\frac{e^{-n}}{e^{-(m+1)}e^{-(n-(m+1))}}a \\ 
    &= \frac{\sqrt{n}}{\sqrt{2\pi}}n^n\left[m+1\right]^{-(m+1+1/2)}\left[n-(m+1)\right]^{-(n-(m+1) + 1/2)}a \\ 
    &\coloneqq ba.
\end{align*}

\noindent Now,
\begin{align*}
    b(m+1)p^{m+1}(1-p)^{n-m} &=\frac{\sqrt{n}}{\sqrt{2\pi}}n^n\left[m+1\right]^{-(m+1+1/2-1)}\left[n-(m+1)\right]^{-(n-(m+1) + 1/2)}\\ 
    &\times p^{m+1}(1-p)^{n-m} \\ 
    &= \frac{\sqrt{n}}{\sqrt{2\pi}}n^n (np)^{-(m+1/2)} \left[\frac{m+1}{np}\right]^{-(m+1/2)} \\ 
    &\times (n(1-p))^{-(n-(m+1/2))} \left[\frac{n-(m+1)}{n(1-p)}\right]^{-(n-(m+1/2))} \\ 
    &\times p^{m+1}(1-p)^{n-m} \\
    &= \frac{\sqrt{n}}{\sqrt{2\pi}}\sqrt{p(1-p)}\left[\frac{m+1}{np}\right]^{-(m+1/2)} \\ 
    &\times\left[\frac{n-(m+1)}{n(1-p)}\right]^{-(n-(m+1/2))} \\ 
    &\coloneqq \frac{\sqrt{n}}{\sqrt{2\pi}}\sqrt{p(1-p)}d,
\end{align*}

\noindent which finally implies

$$\frac{\sqrt{n}}{\sqrt{2\pi}}\sqrt{p(1-p)}da<\binom{n}{m+1}(m+1)p^{m+1}(1-p)^{n-m}.
$$

\noindent Similarly, set $\tilde{a}\coloneqq \exp\left(\frac{1}{12n}-\frac{1}{12(m+2)} - \frac{1}{12(n-m)}\right)$. We then get 

\begin{equation}
\label{eq:tildea}
    \tilde{a} \leq e^{-1/36}.
\end{equation}

\noindent From the same steps as for the lower bound, we have,

$$\binom{n}{m+1}(m+1)p^{m+1}(1-p)^{n-m}<\frac{\sqrt{n}}{\sqrt{2\pi}}\sqrt{p(1-p)}d\tilde{a}.
$$

\noindent Define $\epsilon\in[0,1)$ such that $np=m+\epsilon$. Then
$$
d=\exp\left\{\left(m+\frac{1}{2}\right)\ln\left(\frac{m+\epsilon}{m+1}\right)
+\left(n-m-\frac{1}{2}\right)\ln\left(\frac{n-m-\epsilon}{n-m-1}\right)
\right\}.
$$
For any $m\in\{1,\ldots,n-2\}$ and $\epsilon\in[0,1)$, define
$$
f(m,\epsilon)=\left(m+\frac{1}{2}\right)\ln\left(\frac{m+\epsilon}{m+1}\right)
+\left(n-m-\frac{1}{2}\right)\ln\left(\frac{n-m-\epsilon}{n-m-1}\right).
$$
By studying the function $\epsilon\mapsto f(m,\epsilon)$ we get that for all  $\epsilon\in[0,1)$, 
$f(m,\epsilon)\geq \min\{f(m,0),0\}$. By studying the function $m\mapsto f(m,0)$ we get that
for all  $m\in\{1;\ldots;n-2\}$,
$f(m,0)\geq \min\{f(1,0),f(n-2,0)\}$. But
$$
f(1,0)=-f(n-2,0)=-\frac{3}{2}\log (2)+\left(n-\frac{3}{2}\right)\log\left(1+\frac{1}{n-2}\right).
$$
Now, Taylor expansion of $\log(1+u)$ allows to prove
$$
-\frac{3}{2}\log (2)+1-\frac{1}{4(n-2)^2}\leq f(1,0)
\leq -\frac{3}{2}\log (2)+1+\frac{1}{2(n-2)}.
$$
When $n\geq 5$, it is easy to see that $-\frac{3}{2}\log (2)+1-\frac{1}{4(n-2)^2}>0$, so that using \eqref{eq:ak}, and setting
$$c=\frac{\exp\left(\frac{3}{2}\log (2)-1-1/3\right)}{\sqrt{2\pi}},$$

\noindent we get,

$$\psi(m,p) \geq c\sqrt{n}\sqrt{p(1-p)}.$$

\noindent A rough approximation gives $c>0.29$. \\
\noindent By similar arguments, we obtain 
\begin{equation}
    \log(d) \leq \frac{7}{2}\log(7) - 2\log(3) - \frac{13}{2}\log(2) .
\end{equation}
\noindent Using \eqref{eq:tildea} and setting
$$c' = \frac{\exp( \frac{7}{2}\log(7) - 2\log(3) - \frac{13}{2}\log(2) - 1/36)}{\sqrt{2\pi}},$$
\noindent we get,
$$\psi(m,p) \leq c'\sqrt{n}\sqrt{p(1-p)}.$$
\noindent A rough approximation gives $c' < 0.44$.

\end{proof}

\begin{proof}[Proof of Lemma \ref{lem:corcp-5}]
We will prove the result by recursion. First, for $m=0$, note that we have,
\begin{align*}
    \binom{n}{0}p^0(1-p)^n(np-0) &= np(1-p)^n\\
    (0+1)\binom{n}{1}p^{0+1}(1-p)^n &= np(1-p)^n,
\end{align*}
which proves the initial statement. Assume that the result holds for some $m < n$, then we have,
\begin{align*}
    \sum_{k=0}^{m+1}\binom{n}{k}p^k(1-p)^{n-k}(np-k) &= (m+1)\binom{n}{m+1}p^{m+1}(1-p)^{n-m} \\&+ \binom{n}{m+1}p^{m+1}(1-p)^{n-(m+1)}(np-(m+1)) \\ 
    &= \binom{n}{m+1}p^{m+1}(1-p)^{n-(m+1)}\left[(m+1)(1-p) + np - (m+1)\right] \\ 
    &= p^{m+2}(1-p)^{n-(m+1)}(n-(m+1))\frac{n!}{(m+1)!(n-(m+1))!} \\ 
    &= (m+2)\binom{n}{m+2}p^{m+2}(1-p)^{n-(m+1)},
\end{align*}
\noindent where the first equality comes from the recursion hypothesis.\\
\noindent By recursion, the lemma holds.

\end{proof}

\section{Differential Privacy and MIS}
\label{sec:dp_vs_mis}
We further discuss here the relation between differential privacy guarantees and the MIS studied in the paper. Specifically, we discuss the fact that guarantees given by differential privacy are not equivalent to guarantees given by the MIS. We first briefly describe differential privacy, and then compare it to the MIS.

\subsection{Differential Privacy}

For completeness, we first give the definition of $(\varepsilon,\delta)-$differential privacy.

\begin{definition}[$(\varepsilon,\delta)-$differential privacy \citep{dwork2014algorithmic}]
Let $\gA$ be a learning procedure. We say that $\gA$ satisfied $(\varepsilon,\delta)-$differential privacy if for any $S\subseteq \Theta$, and any neighboring datasets $D$ and $D'$, it holds

\begin{equation}
\label{eq:DP}
\mathbb{P}(\gA(D)\in S\mid D) \leq e^{\varepsilon}\mathbb{P}(\gA(D')\in S\mid D') + \delta.
\end{equation}
\end{definition}

\noindent In the previous definition, we say that $D$ and $D'$ are neighboring if they differ by at most one entry. For small values of $(\varepsilon,\delta)$, one shall interpret differential privacy as some stability measure of the learning procedure. Indeed, differential privacy states that the conditional distribution of $\gA(D)$ given $D$ is "almost" indistinguishable from the conditional distribution of $\gA(D')$ given $D'$.\\

\noindent In particular, Differential Privacy is a framework that considers the worst case scenario. Equation \ref{eq:DP} must be satisfied for any measurable set $S$ and any neighboring datasets $D$ and $D'$, including possible pathological datasets. In other words, Differential Privacy requires not to differentiate importance between datasets.\\

\noindent A key strength of differential privacy is its stability  under composition. More specifically, if a learning procedure $\gA$ is $(\varepsilon,\delta)-$DP, then it is $(k\varepsilon,k\delta)-$DP under $k-$fold composition \citep{dwork2006calibrating, dwork2009differential}. Future work have provided more precise characterization of the composition theorems \citep{kairouz2015composition}. One shall interpret these "composition theorems" as procedures to privatize a learning procedure. Given an $(\varepsilon,\delta)-$DP Mechanism $\mathbb{M}$ and a learning procedure $\gA$, one can use $\mathbb{M}\circ\gA$ to transform $\gA$ into an $(\varepsilon,\delta)-$DP learning procedure.

\subsection{Comparison to the MIS}
\label{subsec:compToMis}
\noindent We begin by discussing core conceptual differences between differential privacy and the MIS.\\
\noindent\textit{Differential privacy} is a tool to induce privacy into a model. Given a learning procedure $\gA$ and a known $(\varepsilon,\delta)-$DP mechanism $\mathbb{M}$, one can transform $\gA$ into an $(\varepsilon,\delta)-$DP learning procedure by composition $\mathbb{M}\circ\gA$. However, for an arbitrary learning procedure $\gA$, it does not provide a way of measuring the privacy of $\gA$.\\
\noindent \textit{The MIS} is a tool to measure the MIA-wise privacy of a learning procedure. For any learning procedure $\gA$ it measures its MIA-wise privacy. However, it does not provide a way to induce privacy into a model.\\
Consequently, differential privacy is a tool to induce privacy whereas the MIS is a metric to quantify the privacy. In particular, though the respective objectives might intersect, their utility are not equivalent. Differential privacy gives a broad framework to define and control privacy, which "\textit{intuitively will guarantee that a randomized learning procedure behaves similarly on similar input databases}" \citep{dwork2014algorithmic}. However, this does not formally answer a specific question, and shall have as many interpretations as there are privacy questions. On the converse, the MIS presented in this article gives a privacy metric specific to the MIA framework, arguably narrower than the differential privacy guarantees. The MIS is a metric that naturally comes into sight from the study of MIA accuracy, and results from considering the best performing MIA. Although this metric could be understood as quantifying some stability of the learning process, interpreting the MIS for other questions can not be done trivially.\\
\noindent In other words, differential privacy and the MIS does not convey the same message, and are simply to different frameworks.\\

\noindent Additionally, there are differences between their very definitions:

\begin{itemize}
    
    \item\textbf{Conditional vs Marginal.} For differential privacy to hold, \eqref{eq:DP} must be satisfied for any measurable set $S$ and any neighboring datasets $D$ and $D'$. Fundamentally, differential privacy does not differentiate between natural datasets and pathological datasets. On the contrary, the MIS integrates over all possible datasets, and therefore pathological datasets would influence negligibly the output metric.
    
    \item\textbf{Privacy of the Learning Procedure vs. Privacy of the Model.} 
    Given a dataset $\gD$, for differential privacy to hold, the relation

     $$\underset{S}{\sup}\underset{D'}{\sup}\left(\mathbb{P}(\gA(D)\in S\mid D) - e^\varepsilon\mathbb{P}(\gA(D')\in S\mid D')\right) \leq \delta,$$

     must hold for any neighboring datasets $\gD'$. Particularly, this shall be interpreted as any trained model $\mu_{\gA(\gD)}$ must be (almost) indistinguishable from a neighboring trained model $\mu_{\gA(\gD')}$. On the other hand, letting $D = \{\rz_1,\cdots,\rz_n\}$ and $D'=\{\rz_1',\rz_2,\cdots,\rz_n\}$, then up to affine transformation, the central statistical quantity can be written as (see \eqref{eq:dnu-cond}),
     $$\mathbb{E}_{\rz_1}\underset{S}{\sup}\mathbb{E}_{D'}\left[\mathbb{P}\left(\gA(D)\in S\mid D\right) - \flg\mathbb{P}\left(\gA(D')\in S\mid D'\right)\right].$$

    Requiring a similar upper bound $\dnu\leq\delta$ means that the learning procedure $\gA$ must be stable over the task $P$.
\end{itemize}

\noindent Although the paradigm of differential privacy differs from the one of the MIS, they still are comparable.\\

\noindent \textbf{From differential privacy to the MIS.} When the learning procedure is known to be $(\varepsilon,\delta)-$DP for some $\varepsilon\in\mathbb{R}^+$ and $\delta\in(0,1)$, then setting $D=\{\rz_1,\cdots,\rz_n\}$ and $D'=\{\rz_1',\rz_2,\cdots,\rz_n\}$, one can easily derive the equation of Remark \ref{remark:dp} by,

\begin{align}
    \label{eq:proof-remark8}\dt &= E_{\rz_1}\left[\finv\gamma\underset{S}{\sup} \;\gamma\mathbb{P}\left(\hat{\theta}_n\in S \mid \rz_1\right) - \mathbb{P}\left(\hat{\theta}_n\in S\right)\right] \\ 
  \nonumber  &= E_{\rz_1}\left[\finv\gamma\underset{S}{\sup} \mathbb{E}_{D'}\left[\;\gamma\mathbb{P}\left(\gA(D)\in S \mid D\right) - \mathbb{P}\left(\gA(D')\in S\mid D'\right)\right]\right] \\ 
   \nonumber & \leq E_{\rz_1}\left[\finv \gamma\underset{S}{\sup} \;\gamma\left(e^{\varepsilon}\mathbb{P}\left(\hat{\theta}_n\in S\right) + \delta\right) - \mathbb{P}\left(\hat{\theta}_n\in S\right)\right] \\ 
 \nonumber   &= \left(e^{\varepsilon} - \finv\gamma\right)_+ + \delta,
\end{align}
where the first equality comes from \eqref{eq:dnu-cond}. From that, we get

\begin{align*}
    \mis &\geq 
    1 - \max(1,\gl)\left(\left(e^{\varepsilon} - \flg\right)_+ + \delta - \lb\right).
\end{align*}

\noindent Notably, this simple relation shows that $\dnu$ can be controlled under differential privacy. It is worth noting that the bound we derived is heuristically similar to bounds already obtained for the KL divergence \citep{dwork2010boosting}. This bound could be potentially further refined by using Pinsker inequality and results on the reduction in KL divergence caused by passing data threough private channels (see Theorem 1 of \cite{duchi2018minimax} or Theorem 3.1 of \cite{amorino2023minimax}).    Additionally, in many contexts, contraction results have been proved in the differential privacy framework. A remarkable result discussed in Remark I of \cite{ghazi2024total} states that letting $\mathbb{A}_{\varepsilon,\delta}\coloneqq \{\gA: \gA\text{ is }(\varepsilon,\delta)\text{-DP}\}$ the set of all $(\varepsilon,\delta)-$DP learning procedures, when $\gamma=1$,  it holds that ${\sup}_{\gA\in\mathbb{A}_{\varepsilon,\delta}}\dnu = \delta + (1-\delta)\texttt{tanh}(\varepsilon/2)$ where $\texttt{tanh}$ is the hyperbolic tangent. This result still holds for Local Differentially Private (LDP) learning procedures \citep{dwork2006calibrating}, namely letting $\mathbb{A}_{\varepsilon}\coloneqq \{\gA: \gA\text{ is }\varepsilon\text{-LDP}\}$ be the set of all $\varepsilon-$ LDP learning procedures, then when $\gamma=1$ it holds that ${\sup}_{\gA\in\mathbb{A}_\varepsilon}\dnu = \texttt{tanh}(\varepsilon/2)$ \citep{kairouz2016extremal}.\\
\noindent When $\gamma\not= 1$, one has the upper bound $\dnu\leq \max(1,\gamma)\|\mathbb{P}_{(\hat{\theta}_n,\rz_0)}-\mathbb{P}_{(\hat{\theta}_n,\rz_1)}\|_{TV}$. This means that if $\gA\in\mathbb{A}_{\varepsilon}$, then we have $\dnu \leq \max(1,\gamma)\texttt{tanh}(\varepsilon/2)$ (and similarly for $\gA\in\mathbb{A}_{\varepsilon,\delta}$).

\noindent \textbf{From MIS to differential privacy.} On the converse, we discuss here that differential privacy is not required to hold relevantly for a learning procedure to be secure against MIAs. When the MIS is known to have some value $\text{Sec}_n(P,\gA) = s$, then it does not necessarily imply that there exist convenient values $(\varepsilon,\delta)$ for which the learning procedure $\gA$ is $(\varepsilon,\delta)-$DP. For instance, differential privacy does not extend nicely to deterministic learning procedures.
\begin{lemma}
\label{lem:DP-deter}
Assume $\gA$ is deterministic, then for any finite $\varepsilon \geq 0$ and $\delta\in(0,1)$, $\gA$ is not $(\varepsilon,\delta)-$DP.    
\end{lemma}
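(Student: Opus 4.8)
The plan is to argue by contradiction, exploiting the fact that a deterministic procedure concentrates its conditional law on a single point. Suppose $\gA$ were $(\varepsilon,\delta)$-DP for some finite $\varepsilon\geq 0$ and some $\delta\in(0,1)$. Since $\gA$ is deterministic, for every dataset $D$ the conditional law of $\gA(D)$ given $D$ is the Dirac mass at $\gA(D)$, so that $\mathbb{P}(\gA(D)\in S\mid D)=1\{\gA(D)\in S\}$ for every measurable $S\subseteq\Theta$. Substituting this into \eqref{eq:DP} turns the $(\varepsilon,\delta)$-DP requirement into
\[
1\{\gA(D)\in S\}\leq e^{\varepsilon}\,1\{\gA(D')\in S\}+\delta
\]
for all measurable $S\subseteq\Theta$ and all neighboring datasets $D,D'$.

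The key step is to exhibit a pair of neighboring datasets $D,D'$ with $\gA(D)\neq\gA(D')$. Granting such a pair, take $S=\{\gA(D)\}$, a measurable singleton (assuming, as throughout, that $\Theta$ is a standard Borel space). Then the left-hand side above equals $1$ while the right-hand side equals $e^{\varepsilon}\cdot 0+\delta=\delta<1$, a contradiction, proving the lemma. To produce the pair I would invoke the implicit non-triviality of a learning procedure: a constant $\gA$ carries no information about the data and is excluded, so there exist datasets $D_0=(a_1,\dots,a_n)$ and $D_1=(b_1,\dots,b_n)$ of common size with $\gA(D_0)\neq\gA(D_1)$. Interpolating coordinate by coordinate, $D^{(i)}\coloneqq(b_1,\dots,b_i,a_{i+1},\dots,a_n)$ for $i=0,\dots,n$, yields a chain in which each $D^{(i)}$ and $D^{(i+1)}$ are neighboring and whose endpoints have distinct images under $\gA$; hence some consecutive pair satisfies $\gA(D^{(i)})\neq\gA(D^{(i+1)})$, and we set $D=D^{(i)}$, $D'=D^{(i+1)}$.

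I expect the only genuine subtlety to be this last point: without a non-degeneracy hypothesis the statement is literally false, since a constant deterministic map is trivially $(0,0)$-DP, so the argument must lean on the standing assumption that the learning procedure depends on its input, after which the interpolation argument and the singleton test set close the gap. Everything else — rewriting $\mathbb{P}(\gA(D)\in S\mid D)$ as an indicator, plugging in $S=\{\gA(D)\}$, and reading off $1\leq\delta$ — is immediate, and it is precisely here that the hypotheses $\varepsilon<\infty$ (so $e^{\varepsilon}\cdot 0=0$) and $\delta<1$ are used.
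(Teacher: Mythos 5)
Your proof is correct and follows essentially the same route as the paper's: restrict to singleton test sets, note that determinism makes the conditional law a Dirac mass, and read off the contradiction $1\leq e^{\varepsilon}\cdot 0+\delta=\delta<1$. You are in fact slightly more careful than the paper, whose proof simply writes ``If $\hat\theta\neq\hat\theta'$, then\ldots'' without justifying the existence of a neighboring pair with distinct outputs --- your coordinate-by-coordinate interpolation chain, together with the explicit remark that a constant $\gA$ must be excluded for the statement to be literally true, patches exactly the point the paper leaves implicit.
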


\noindent However, as Theorem \ref{theo:disc} together with Corollary \ref{cor:cp} show that for any learning procedure $\gA$, we have $\mis\underset{n\to\infty}{\to} 1$. \\
\noindent Another result which does not require the learning procedure to be deterministic can be stated. Assume the hypotheses of Section \ref{subsec:emp-mean}. Let $L:\gZ\to\mathbb{R}^d$ and $F:\mathbb{R}^d\to\mathbb{R}^q$. Hence, we consider that the learning procedure outputs 

$$\hat{\theta}_n = F\left(\frac{1}{n}\sum_{j=1}^n L(\rz_j)\right).$$

\noindent Further assume that the support of $\rz$ is the whole space. Additionally, we may assume that the image of $L$, and $F$ is the whole space as well.

\begin{lemma}
\label{lem:DP-normal}
Assume that $F(x) = \tilde{F}(x) + \eta N$, for some deterministic function $\tilde F$ and $N$ follows a Normal distribution. Then for any finite $\varepsilon\geq0$ and any $\delta\in(0,1)$, $\gA$ is not $(\varepsilon,\delta)-$DP.
\end{lemma}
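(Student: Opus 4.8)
The plan is to reduce the statement to the fact, already established in Lemma~\ref{lem:DP-deter}, that a deterministic learning procedure cannot be $(\varepsilon,\delta)$-DP with $\delta<1$, and to exploit the structural hypothesis $F = \tilde F + \eta N$ together with the fact that a Gaussian density is strictly positive everywhere. The key observation is that $(\varepsilon,\delta)$-DP at parameters $(\varepsilon,\delta)$ with $\delta<1$ is a very stringent constraint precisely because the right-hand side $e^\varepsilon \mathbb{P}(\gA(D')\in S\mid D') + \delta$ must dominate $\mathbb{P}(\gA(D)\in S\mid D)$ for \emph{all} measurable $S$; when the conditional law of $\gA(D)$ is an additive Gaussian shift of a deterministic quantity, one can choose $S$ to be a small ball around $\tilde F(\cdot)$ evaluated at $D$ and, by shrinking the ball, push $\mathbb{P}(\gA(D')\in S\mid D')$ to zero while $\mathbb{P}(\gA(D)\in S\mid D)$ stays bounded below, forcing $\delta = 1$.

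First I would fix two neighboring datasets $D = \{\rz_1,\ldots,\rz_n\}$ and $D' = \{\rz_1',\rz_2,\ldots,\rz_n\}$ and write $m \coloneqq \frac{1}{n}\sum_{j=1}^n L(\rz_j)$ and $m' \coloneqq \frac{1}{n}L(\rz_1') + \frac{1}{n}\sum_{j=2}^n L(\rz_j)$; then $\gA(D) = \tilde F(m) + \eta N$ and $\gA(D') = \tilde F(m') + \eta N$, so conditionally on the datasets these are Gaussians $\gN_q(\tilde F(m), \eta^2 \Sigma)$ and $\gN_q(\tilde F(m'), \eta^2 \Sigma)$ for the covariance $\Sigma$ of $N$. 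Since the support of $\rz$ and the images of $L$ and $F$ are full, one can choose $\rz_1, \rz_1'$ so that $\tilde F(m) \neq \tilde F(m')$ (this is the one place the ``full support/image'' hypotheses are used, and it must be spelled out carefully — if $\tilde F$ is constant the argument would instead go through $\gA$ being deterministic in the trivial sense, which is already covered). Next I would take $S = S_r$, the ball of radius $r$ centered at $\tilde F(m)$; then $\mathbb{P}(\gA(D)\in S_r\mid D)$ is the Gaussian mass of a ball around its own mean, which is bounded below by some $c_r>0$ independent of the shift, whereas $\mathbb{P}(\gA(D')\in S_r\mid D') \to 0$ as $r\to 0$ because the mean $\tilde F(m')$ is at a fixed positive distance $\|\tilde F(m)-\tilde F(m')\|$ from the center. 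Plugging into the DP inequality \eqref{eq:DP} and letting $r\to 0$ gives $c_0 \le \delta$ for the limiting lower bound $c_0 = \lim_{r\to 0}\mathbb{P}(\gA(D)\in S_r\mid D)/\mathbb{P}(\ldots)$; more carefully, since $\mathbb{P}(\gA(D)\in S_r\mid D)\to 0$ too, I would instead argue: the ratio $\mathbb{P}(\gA(D')\in S_r\mid D')/\mathbb{P}(\gA(D)\in S_r\mid D)$ tends to $0$ (compare the two Gaussian densities, which have the same covariance, so the ratio of ball-masses is governed by $\exp(-\|\tilde F(m)-\tilde F(m')\|^2/(2\eta^2\sigma^2))$-type decay as one recenters), hence for any fixed $\varepsilon$ we can make $e^\varepsilon \mathbb{P}(\gA(D')\in S_r\mid D') \le \tfrac12 \mathbb{P}(\gA(D)\in S_r\mid D)$ for small $r$, which forces $\delta \ge \tfrac12\mathbb{P}(\gA(D)\in S_r\mid D)$; but this lower bound does go to $0$, so the cleanest route is actually the reverse ball.

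The \textbf{main obstacle}, and the step I would be most careful with, is making the limiting argument rigorous: one wants a measurable set whose $\gN_q(\tilde F(m),\eta^2\Sigma)$-mass stays bounded away from $0$ while its $\gN_q(\tilde F(m'),\eta^2\Sigma)$-mass becomes an arbitrarily small fraction of that mass. The right choice is a \emph{halfspace} $S = \{x : \langle x - \tfrac12(\tilde F(m)+\tilde F(m')), \tilde F(m)-\tilde F(m')\rangle \ge t\}$: under the first Gaussian this has mass $\Phi(\ldots)$ and under the second it has mass $\Phi(\ldots - \|\tilde F(m)-\tilde F(m')\|/(\eta\sigma))$, and by the standard Gaussian tail ratio the ratio of these two masses tends to $0$ as $t\to\infty$ while — crucially — one first fixes $\delta<1$ and then picks $t$ so that the first mass still exceeds, say, $\tfrac{1+\delta}{2}$; then \eqref{eq:DP} reads $\tfrac{1+\delta}{2} \le e^\varepsilon\cdot(\text{something we made}\le \tfrac{1-\delta}{2e^\varepsilon}) + \delta = \tfrac{1-\delta}{2} + \delta$, i.e. $\tfrac{1+\delta}{2}\le\tfrac{1+\delta}{2}$, which is not yet a contradiction — so instead I would target the strict form: choose $t$ so the first mass equals exactly $\tfrac{1+\delta}{2}+\epsilon_0$ for a small $\epsilon_0>0$ and drive the second mass below $\tfrac{\epsilon_0}{2e^\varepsilon}$, yielding $\tfrac{1+\delta}{2}+\epsilon_0 \le \tfrac{\epsilon_0}{2}+\delta$, hence $\tfrac{1-\delta}{2} \le -\tfrac{\epsilon_0}{2}<0$, contradicting $\delta<1$. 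I would write this out with the explicit Gaussian CDF $\Phi$ so the quantifier order ($\varepsilon,\delta$ given first, then the set chosen) is transparent; once that is set up, the reduction is mechanical, and Lemma~\ref{lem:DP-deter} can be cited as the $\eta\to 0$ degenerate analogue to motivate why the additive-noise case is the ``right'' generalization.
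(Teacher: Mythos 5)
Your proposal has a genuine gap at its central step, and the gap is not a technicality: the argument as written cannot be repaired while keeping a single fixed pair of neighboring datasets. You fix $D,D'$ with $\tilde F(m)\neq\tilde F(m')$ at some finite distance $d$ and then try to find a halfspace $S$ whose mass under $\gN_q(\tilde F(m),\eta^2\Sigma)$ equals $\tfrac{1+\delta}{2}+\epsilon_0$ while its mass under $\gN_q(\tilde F(m'),\eta^2\Sigma)$ is below $\tfrac{\epsilon_0}{2e^\varepsilon}$. These two requirements are incompatible: you have only one free parameter (the threshold $t$), and if the halfspace captures mass at least $\tfrac12$ under the first Gaussian then its boundary lies on the near side of $\tilde F(m)$, so its mass under the second Gaussian is at least $\Phi(-d/(\eta\sigma))$, a fixed positive constant that does not shrink as you tune $t$. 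More fundamentally, two Gaussians with common covariance and means at a fixed distance are the textbook example of a pair for which the $(\varepsilon,\delta)$ inequality \emph{does} hold with $\delta<1$ — this is precisely why the Gaussian mechanism is $(\varepsilon,\delta)$-DP when the sensitivity is bounded — so no choice of $S$ for a single fixed pair can produce the contradiction you are after.

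The missing idea, and the one the paper's proof actually uses, is that the DP condition must hold for \emph{all} neighboring pairs, and the full-support hypotheses on $\rz$, $L$ and $F$ let you make the separation of the two conditional means arbitrarily large by changing the single differing sample. The paper takes $S_t=\{v\in\R^q : v_1\le t\}$, writes the two conditional masses as $\Phi((t-m_1)/\eta)$ and $\Phi((t-m_1')/\eta)$, and sends $m_1\to-\infty$ and $m_1'\to+\infty$ over the choice of neighboring datasets, so that the supremum of the gap equals $1$ and \eqref{eq:DP} forces $\delta\ge 1$. You correctly identified that the full-support assumptions are where the structural hypotheses enter, but you used them only to guarantee $\tilde F(m)\neq\tilde F(m')$; their essential role is to make the mean separation unbounded. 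Once you quantify over neighboring datasets as well as over sets, your halfspace computation goes through essentially as in the paper, and the reduction to Lemma~\ref{lem:DP-deter} becomes unnecessary.
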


\noindent Theorem \ref{theo:empmean} and Lemma \ref{lem:DP-normal} show that differential privacy can not provide relevant information, whereas the MIS can.\\

\begin{proof}[Proof of Lemma \ref{lem:DP-deter}] Let fix $D$ and $D'$. As $\gA$ is assumed to be deterministic, we have that $\hat\theta = \gA(D)$ and $\hat\theta'=\gA(D')$ are constants conditionally to $D$ and $D'$. As the differential privacy property must hold for any measurable set $S$, it must hold for singletons $\{\theta\}\subseteq\Theta$. If $\hat\theta\not=\hat\theta'$, then there exists $S$ such that $\mathbb{P}(\hat\theta\in S\mid D) = 1$ and $\mathbb{P}(\hat\theta'\in S\mid D') = 0$. Hence, $\gA$ can not be $(\varepsilon,\delta)-$DP for any $\delta<1$.
\end{proof}

\begin{proof}[Proof of Lemma \ref{lem:DP-normal}]
Let fix $D$ and $D'$. Let $m = \tilde F\left(\frac{1}{n}\sum_{\rz\in D}L(\rz)\right)$ and $m' = \tilde 
 F\left(\frac{1}{n}\sum_{\rz\in D'}L(\rz)\right)$. Conditionally to $D$ (resp. $D'$), $\gA(D)$ (resp. $\gA(D')$) follows a Normal distribution with mean $m$ (resp. $m'$) and with covariance matrix $\eta^2 I_q$. Let $S_t = \{v\in\mathbb{R}^q : v_1 \leq t\}$. Then for $\gA$ to satisfy $(\varepsilon,\delta)-$DP, it must hold, for any $t\in\mathbb{R}$, that 

\begin{equation}
\label{eq:dp-St}
    \mathbb{P}\left(\gA(D)\in S_t\mid D\right) \leq e^{\varepsilon}\mathbb{P}\left(\gA(D')\in S_t\mid D'\right) + \delta.
\end{equation}

\noindent It is easy to verify that $\mathbb{P}\left(\gA(D)\in S_t\mid D\right) = \Phi\left((t-m_1)/\eta\right)$ and $\mathbb{P}\left(\gA(D')\in S_t\mid D'\right) = \Phi\left((t-m_1')/\eta\right)$, where $\Phi$ is the cumulative distribution function of the Standard Gaussian distribution. Now, as \eqref{eq:dp-St} must be satisfied for any neighboring datasets $D$ and $D'$, it must be satisfied when swapping roles between $D$ and $D'$. Therefore, without loss of generality, we assume that $m'_1\geq m_1$. From the fact that the support of $\rz$ is the whole space and the image of $L$ and $F$ is the whole space as well, we can construct neighboring datasets $D$ and $D'$ for which $|m_1-m_1'|$ is arbitrarily large. In particular, we can consider $m_1$ arbitrarily large negatively and $m_1'$ arbitrarily large positively which gives for any $t\in\mathbb{R}$,

$$\underset{m_1,m_1'}{\sup}\left|\Phi\left(\frac{t-m_1}{\eta}\right) - \Phi\left(\frac{t-m_1'}{\eta}\right)\right| = 1.$$

\noindent In particular, this means that there can not exist a finite $\varepsilon$ and $\delta<1$ such that $\gA$ is $(\varepsilon,\delta)-$DP.

\end{proof}



\vskip 0.2in
\bibliography{bibliography}

\end{document}